\def\Statusstring{28 March 2026 \\ Technical Report}

\documentclass[12pt,letter]{article}

\usepackage{mytheoremenv}

\usepackage{url} 

\usepackage{tikz}
\usetikzlibrary{decorations.pathreplacing}
\usetikzlibrary{decorations.pathreplacing, shapes.geometric}
\usetikzlibrary{arrows.meta, positioning}

\usepackage{array}

\begin{document}

\title{A Tight Expressivity Hierarchy for GNN-Based Entity Resolution in Master Data Management}

\author{Ashwin~Ganesan%
  \thanks{Centre for Cybersecurity, Trust and Reliability (CyStar), Department of Computer Science and Engineering, Indian Institute of Technology, Madras, Tamilnadu, India. Email:
\texttt{ashwin.ganesan@gmail.com}}
}

\date{}
\maketitle

\vspace{-8.0cm}
\begin{flushright}
	\texttt{\Statusstring}\\[1cm]
\end{flushright}
\vspace{+5.0cm}

\begin{abstract}
	Entity resolution --- identifying database records that refer to the
	same real-world entity --- is naturally modelled on bipartite graphs
	connecting entity nodes to their attribute values.  A common approach
	is to apply a message-passing neural network (MPNN) enriched with all
	available architectural extensions (reverse message passing, port
	numbering, ego IDs), but this results in unnecessary computational
	overhead, since different entity resolution tasks have fundamentally
	different complexity.  A natural question is therefore: for a given
	matching criterion, what is the cheapest MPNN architecture that
	provably works?
	
	In this work, we answer this question with a four-theorem separation
	theory on typed entity-attribute graphs.  We introduce a family of
	co-reference predicates, $\mathrm{Dup}_r$, capturing the evidence
	pattern ``two same-type entities share at least $r$ attribute
	values,'' together with the $\ell$-cycle participation predicate
	$\mathrm{Cyc}_\ell$ for settings with entity--entity edges.  For
	the co-reference predicates, we prove tight lower and upper
	bounds --- constructing graph pairs that are provably
	indistinguishable by every MPNN lacking the required adaptation,
	and exhibiting explicit, minimal-depth MPNNs that compute the
	predicate correctly on all inputs.  For cycle detection, we prove
	the necessity of ego IDs and demonstrate their sufficiency on the
	canonical separation instances.
	
	The central finding is a sharp complexity gap between detecting
	\emph{any} shared attribute and detecting \emph{multiple} shared
	attributes.  The former is a purely local computation: each attribute
	can independently check whether two same-type entities point to it,
	requiring only reverse message passing in two layers.  The latter
	demands what we call \emph{cross-attribute identity correlation} ---
	verifying that the \emph{same} entity appears at several attributes
	of the target --- a fundamentally non-local requirement that
	necessitates ego IDs and four layers, even on the simplest class of
	acyclic bipartite graphs.  A similar necessity holds for cycle
	detection.  Taken together, these results yield a
	minimal-architecture principle: practitioners can select the cheapest
	adaptation set that provably suffices for their specific matching
	criterion, with a guarantee that no simpler architecture works.
	Computational validation confirms every theoretical prediction.
\end{abstract}

\medskip
\noindent\textbf{Keywords:}
entity resolution,
graph neural networks,
message-passing neural networks,
Weisfeiler--Leman expressivity,
typed entity-attribute graphs,
master data management,
subgraph detection

\tableofcontents


\section{Introduction}

Entity resolution --- the problem of determining whether two or more
database records refer to the same real-world entity --- is a
foundational task in data integration, knowledge graph construction,
and master data management.  In large-scale commercial databases such
as those maintained by business-data providers, entity resolution
operates over hundreds of millions of records with richly typed
attributes: names, addresses, phone numbers, email addresses, tax
identifiers, and corporate-ownership relationships, each carrying its
own type and semantics.  Errors in entity resolution propagate
downstream into customer analytics, regulatory reporting, and fraud
detection, making both precision and scalability critical requirements.

A natural formalism for this setting is the \emph{typed
	entity-attribute graph}: entity nodes represent database records,
attribute nodes represent canonical attribute values, and typed
directed edges connect each entity to its attributes.  Two entities
that share many typed attribute values --- the same email address, the
same phone number, and the same mailing address --- are likely
co-referent.  When entity--entity edges are also present, as in
transaction networks or corporate-ownership graphs, directed cycles
become important signals for fraud detection and regulatory
compliance.

Graph neural networks (GNNs), and specifically message-passing neural
networks (MPNNs)~\cite{Scarselli:2009, Gori:2005}, have been
increasingly adopted for entity resolution on such
graphs~\cite{Li:GraphER:AAAI:2020, Yao:HierGAT:SIGMOD:2022,
	Ganesan:IBM:2020}, since entity-attribute data is inherently
relational and practical entity matching requires soft similarity
judgments over noisy, partially observed attribute values --- a
setting where exact combinatorial methods such as hash-based joins
are insufficient.  While the exact co-reference predicates
$\mathrm{Dup}_r$ studied in this paper admit efficient
combinatorial solutions on clean data, they arise as the
\emph{noiseless limit} of the soft matching functions that practical
GNN-based systems learn; the motivation for this computational model
is developed in Section~\ref{sec:motivation}.  However, standard
MPNNs have well-known expressivity limitations: their distinguishing
power is bounded by the 1-Weisfeiler--Leman graph isomorphism
test~\cite{Weisfeiler:1968, Shervashidze:2011,
	Xu-et-al:ICLR:2019}.  To overcome these limitations on directed
multigraphs, Egressy et al.~\cite{Egressy-et-al:AAAI:2024} propose
three architectural adaptations --- reverse message passing, directed
multigraph port numbering, and ego IDs --- and prove that the
combination of all three enables detection of any directed subgraph
pattern.

For entity resolution, however, this universality result is both more
and less than what practitioners need.  It is \emph{more} because it
guarantees detection of arbitrary subgraph patterns, whereas entity
resolution requires only specific patterns: shared attributes
between same-type entities, and directed cycles.  It is \emph{less}
because it does not indicate which adaptations can be safely omitted
for a given task --- a question with direct computational
consequences, since ego IDs alone increase the per-layer cost from
$O(|E|)$ to $O(|V| \cdot |E|)$.  This raises the central question
of the present work: \emph{which subsets of the three adaptations are
	necessary and sufficient for each entity resolution task?}

We answer this question with a four-theorem separation theory on
typed entity-attribute graphs.  We formalise a family of co-reference
predicates $\mathrm{Dup}_r$ capturing the evidence pattern ``two
entities of the same type share at least $r$ typed attribute values,''
together with the $\ell$-cycle participation predicate
$\mathrm{Cyc}_\ell$ for settings with entity--entity edges.  For each
predicate and each graph class, we prove both a \emph{necessity}
result --- constructing pairs of graphs that are provably
indistinguishable by every MPNN equipped with the weaker adaptation
set --- and a \emph{sufficiency} result --- exhibiting an explicit
MPNN of minimal depth that computes the predicate correctly on all
graphs in the class.  The results are summarised in
Table~\ref{tab:minimal-architecture}, which gives a complete
\emph{minimal-architecture principle} for GNN-based entity resolution:
for each co-reference task, the cheapest sufficient adaptation set
and the optimal depth, together with a proof that every strictly
cheaper alternative fails; for cycle detection, the necessary
adaptations and those sufficient on the canonical separation
instances.  The theoretical predictions are validated
computationally in Section~\ref{sec:computational}.

The central finding is a sharp complexity gap between
$\mathrm{Dup}_1$ and $\mathrm{Dup}_r$ for $r \geq 2$.  Detecting
whether an entity shares \emph{any} attribute value with a same-type
competitor is a purely local computation requiring only reverse
message passing in two layers.  Detecting whether it shares
\emph{at least $r$} attribute values additionally requires ego IDs
and four layers, even on simple, acyclic, bipartite typed
entity-attribute graphs.  The underlying obstacle is what we term
\emph{cross-attribute identity correlation}: verifying that the same
entity appears at multiple attributes of the target entity is a
fundamentally non-local requirement that no MPNN without ego IDs can
satisfy.

\medskip
\noindent\textbf{Contributions.}
The contributions of this paper are as follows.
We introduce the \emph{typed entity-attribute graph} formalism and the
family of co-reference predicates $\mathrm{Dup}_r$ that capture the
core computational patterns in entity resolution, along with the
$\ell$-cycle participation predicate $\mathrm{Cyc}_\ell$ for settings
with entity--entity edges.
We prove four tight separation theorems.  Theorem~\ref{thm:K21-simple}
shows that reverse message passing is necessary and sufficient for
$K_{2,1}$ detection on simple typed entity-attribute graphs, with
optimal depth~$2$.  Theorem~\ref{thm:K21-multigraph} shows that
incoming port numbering is additionally necessary and sufficient on
multigraph typed entity-attribute graphs, again with optimal
depth~$2$.  Theorem~\ref{thm:K2r-simple}, our main result, shows
that for $r \geq 2$, ego IDs are necessary and sufficient together
with reverse message passing for $K_{2,r}$ detection on simple typed
entity-attribute graphs, with optimal depth~$4$.  This establishes a
strict complexity gap between $\mathrm{Dup}_1$ and $\mathrm{Dup}_r$.
Theorem~\ref{thm:cycle-detection} shows that ego IDs are necessary
for $\ell$-cycle detection on typed directed graphs, and sufficient
alone on the canonical separation instances, with optimal
depth~$\ell$; universal $\mathrm{Cyc}_\ell$ computation on all
typed directed multigraphs requires the full triple of
adaptations~\cite[Corollary~4.4.1]{Egressy-et-al:AAAI:2024}.
Each sufficiency proof is constructive, exhibiting an explicit MPNN
with the stated depth; each necessity proof constructs concrete
separation graphs with an indistinguishability argument valid for
MPNNs of arbitrary depth and width.  Together, these results yield the
minimal-architecture principle of
Table~\ref{tab:minimal-architecture}, enabling practitioners to select
the cheapest MPNN adaptation that provably suffices for their specific
entity resolution task.

\section{Problem Formulation} \label{sec:prob-formulation} 

	A \emph{typed directed multigraph} is a tuple $G = (V, E, \mathrm{src}, \mathrm{dst}, \mathcal{T}_V, \mathcal{T}_E, \tau_V, \tau_E)$, where
	$V$ is a finite set of nodes,
	$E$ is a finite set of abstract edge identifiers,
	$\mathrm{src} \colon E \to V$ and $\mathrm{dst} \colon E \to V$ assign to each edge its source and destination,
	$\mathcal{T}_V$ is a set of node types,
	$\mathcal{T}_E$ is a set of edge types,
	$\tau_V \colon V \to \mathcal{T}_V$ is a node type function, and
	$\tau_E \colon E \to \mathcal{T}_E$ is an edge type function.
	Parallel edges are permitted: distinct edges $e \neq e'$ may share the same source, destination, and type. 
	For brevity, we denote an edge $e$ by the triple $(\mathrm{src}(e),\, \mathrm{dst}(e),\, \tau_E(e))$, written $(u, v, \tau)$, with the understanding that multiple edges may share the same triple.

The set of \emph{predecessors} (in-neighbors) of a node $v \in V$ and the set of \emph{successors} (out-neighbors) of a node $v \in V$ are denoted $N_{\mathrm{in}}(v)$ and $N_{\mathrm{out}}(v)$, respectively:
\begin{align*}
	N_{\mathrm{in}}(v) &= \{u \in V : \exists\, \tau \in \mathcal{T}_E,\; (u,v,\tau) \in E\}, \\
	N_{\mathrm{out}}(v) &= \{w \in V : \exists\, \tau \in \mathcal{T}_E,\; (v,w,\tau) \in E\}.
\end{align*}

\begin{Definition}
A \emph{typed entity-attribute graph} is a typed directed multigraph $G=(V,E, \mathrm{src}, \mathrm{dst}, \mathcal{T}_V, \mathcal{T}_E, \tau_V, \tau_E)$ equipped with a bipartition $V = \mathcal{E} \dot{\cup} \mathcal{A}$, where $\mathcal{E}$ is a set of entity nodes, $\mathcal{A}$ is a set of attribute nodes, every edge is directed from an entity node to an attribute node, and the set $\mathcal{T}_V$ of node types decomposes as $\mathcal{T}_V = \mathcal{T}_\mathcal{E} \dot{\cup} \mathcal{T}_\mathcal{A}$, where $\mathcal{T}_\mathcal{E}$ is a set of entity types, and $\mathcal{T}_\mathcal{A}$ is a set of attribute types.  Thus, for all $(u,a,\tau) \in E$, $u \in \mathcal{E}$ and $a \in \mathcal{A}$; the type of an entity node $u \in \mathcal{E}$ is $\tau_V(u) \in \mathcal{T}_\mathcal{E}$, and the type of an attribute node $a \in \mathcal{A}$ is $\tau_V(a) \in \mathcal{T}_\mathcal{A}$. 

A typed entity-attribute graph is \emph{simple} if for each entity-attribute pair $(u,a) \in \mathcal{E} \times \mathcal{A}$ and each edge type $\tau \in \mathcal{T}_E$, there is at most one edge from $u$ to $a$ of type $\tau$.  
\end{Definition}

A typed entity-attribute graph is essentially a typed directed bipartite multigraph;
see Figure~\ref{fig:ea-graph-example} for an example with two entity nodes -- both of the same type (Person) -- and three attribute nodes.

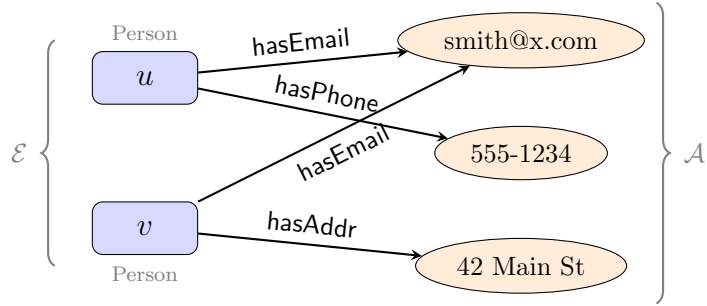
\begin{figure}[h]
	\centering
	\begin{tikzpicture}[
		entity/.style={draw, rectangle, rounded corners, minimum width=1.4cm, minimum height=0.7cm, fill=blue!15},
		attribute/.style={draw, ellipse, minimum width=1.2cm, minimum height=0.7cm, fill=orange!15},
		edge label/.style={font=\footnotesize, midway, above, sloped},
		>=stealth
		]
		\node[entity] (u) at (0, 2) {$u$};
		\node[entity] (v) at (0, 0) {$v$};
		
		\node[attribute] (email) at (5, 2.5) {\footnotesize smith@x.com};
		\node[attribute] (phone1) at (5, 1) {\footnotesize 555-1234};
		\node[attribute] (addr) at (5, -0.5) {\footnotesize 42 Main St};
		
		\draw[->, thick] (u) -- node[edge label] {\textsf{hasEmail}} (email);
		\draw[->, thick] (u) -- node[edge label] {\textsf{hasPhone}} (phone1);
		
		\draw[->, thick] (v) -- node[edge label, below, sloped] {\textsf{hasEmail}} (email);
		\draw[->, thick] (v) -- node[edge label] {\textsf{hasAddr}} (addr);
		
		\node[font=\scriptsize, gray] at (0, 2.6) {Person};
		\node[font=\scriptsize, gray] at (0, -0.6) {Person};
		
		\draw[decorate, decoration={brace, amplitude=5pt}, thick, gray] (-1.2, -0.5) -- (-1.2, 2.5) node[midway, left=6pt, gray, font=\footnotesize] {$\mathcal{E}$};
		\draw[decorate, decoration={brace, amplitude=5pt, mirror}, thick, gray] (6.8, -1) -- (6.8, 3) node[midway, right=6pt, gray, font=\footnotesize] {$\mathcal{A}$};
	\end{tikzpicture}
	\caption{A typed entity-attribute graph. Entities $u$ and $v$ share a common attribute node \textsf{smith@x.com} via edge of type \textsf{hasEmail}.}
	\label{fig:ea-graph-example}
\end{figure}

\subsection{Duplicate Entity Detection as Binary Node Classification}

We consider two entities to be potential duplicates if they share a common attribute value. We say entity $u$ has a duplicate, denoted by $\mathrm{Dup}(u)=1$, if there exists another entity $v$ of the same type as $u$ such that $u$ and $v$ share a common attribute via edges of the same type. For example, in Figure~\ref{fig:ea-graph-example}, node $u$ has a duplicate because there exists another entity, namely $v$, such that $u$ and $v$ are of the same type (Person) and are both connected to the attribute node  \textsf{smith@x.com} via edges of type \textsf{hasEmail}.  Thus, for this typed entity-attribute graph, $\mathrm{Dup}(u)=1$.  

More formally, for an entity $u \in \mathcal{E}$, define the typed neighborhood
\[
N_{\mathrm{typed}}(u) = \bigl\{(a, \tau) \in \mathcal{A} \times \mathcal{T}_E: (u,a,\tau) \in E \bigr\}.
\]

Given an input typed entity-attribute graph, the \emph{$K_{2,r}$ detection problem} for $r \ge 1$ is the following binary node classification task: for each entity $u \in \mathcal{E}$, compute
\[
\mathrm{Dup}_r(u) = \begin{cases} 
	1 & \text{if } \exists v \in \mathcal{E} \setminus \{u\}, \; \tau_V(v) = \tau_V(u), \; \bigl| N_{\mathrm{typed}}(u) \cap N_{\mathrm{typed}}(v) \bigr| \ge r \\ 
	0 & \text{otherwise}.
\end{cases}
\]
The same-type constraint $\tau_V(u) = \tau_V(v)$ reflects that in entity resolution, duplicates are only meaningful among entities of the same type, such as two Person records, or two Organization records, not a Person and an Organization. We write $\mathrm{Dup}$ for $\mathrm{Dup}_1$.  When the input graph must be made explicit---as in separation arguments where two graphs $G_1$ and $G_2$ are compared---we append it as a further subscript, writing $\mathrm{Dup}_{r,G}(u)$; this is abbreviated to $\mathrm{Dup}_G(u)$ when
$r = 1$ is clear from context.

For example, in Figure~\ref{fig:ea-graph-example}, $\mathrm{Dup}(u)=\mathrm{Dup}_1(u)=1$ since $u$ and $v$ share a common typed attribute, but $\mathrm{Dup}_2(u)=0$ because there does not exist another entity $v \ne u$ that shares \emph{two} attributes in common with $u$.  

When two entities $u$ and $v$ of the same type share $r$ typed neighbors in common, they share $r$ attribute values -- potentially of $r$ distinct types. For example, if Person entities $u$ and $v$ both connect to the same email node $a_1$ via edge type \textsf{hasEmail}, the same phone node $a_2$ via edge type \textsf{hasPhone}, and the same address node $a_3$ via edge type \textsf{hasAddr}, then $|N_{\mathrm{typed}}(u) \cap N_{\mathrm{typed}}(v)| \ge 3$, whence $\mathrm{Dup}_3(u)=1$. Crucially, the $r$ shared attributes may involve $r$ distinct attribute types and $r$ distinct edge types. 

The parameter $r$ controls the strength of evidence required to flag a potential
duplicate. In practice, $r=1$ captures \emph{weak} matches: two Person records
sharing just a phone number, or just an email address, may be coincidental
--- for instance, family members sharing a phone, or a generic company email. Requiring
$r \geq 2$ captures \emph{strong} matches: two Person records sharing both the
same email address and the same phone number are far more likely to be genuine
duplicates. As we shall see, this seemingly
modest increase from $r=1$ to $r \geq 2$ requires a strictly more powerful GNN architecture.

\subsection{Cycle Detection}

In some MDM settings, entity nodes are connected not only to attribute nodes 
but also to other entity nodes. For example, in financial crime detection, 
Person entities may be linked by \textsf{transfersTo} edges representing 
fund transfers, or Company entities may be linked by \textsf{owns} edges 
representing ownership relationships. Cyclic patterns in such graphs ---
e.g., circular fund transfers or layered ownership loops --- are important 
signals for fraud detection and regulatory compliance. This motivates the 
following cycle detection task on typed directed graphs.

For typed directed graphs that contain entity-entity edges, define the
\emph{$\ell$-cycle detection problem} ($\ell \ge 3$) as the following
binary node classification task: for each node $v \in V$, compute
\[
\mathrm{Cyc}_\ell(v) = \begin{cases}
	1 & \text{if } v \text{ belongs to a directed \emph{simple} cycle of
		length exactly } \ell \text{ in } G \\
	0 & \text{otherwise.}
\end{cases}
\]
That is, $\mathrm{Cyc}_\ell(v) = 1$ if and only if there exist distinct
vertices $v = v_0, v_1, \ldots, v_{\ell-1}$ such that, for each
$i \in \{0, \ldots, \ell-1\}$, there exists an edge type
$\tau^{(i)} \in \mathcal{T}_E$ with
$(v_i,\, v_{i+1 \bmod \ell},\, \tau^{(i)}) \in E$.

Note that the edge types $\tau^{(0)}, \ldots, \tau^{(\ell-1)}$ along the
cycle need not be identical: the definition permits
\emph{type-heterogeneous} cycles.  In some applications---such as
circular fund transfers in a transaction network---every edge in the
cycle carries the same type, such as \textsf{transfersTo}, yielding a
type-homogeneous cycle.  In others---such as layered beneficial-ownership
structures that alternate \textsf{directlyOwns} and
\textsf{beneficiallyOwns} edges---cycles naturally mix edge types.  We
adopt the more general, type-agnostic definition because it subsumes the
homogeneous case: a practitioner who wishes to detect cycles along a
single edge type~$\tau$ may simply restrict the input graph to the
subgraph induced by edges of type~$\tau$ and then apply $\mathrm{Cyc}_\ell$.

\subsection{Research Question}

A standard message-passing neural network (MPNN) can be enhanced with adaptations such as reverse message passing, directed multigraph port numbering, and ego IDs (defined in Section~\ref{sec:preliminaries}), to yield a universal architecture capable of detecting any directed subgraph pattern \cite{Egressy-et-al:AAAI:2024}.  In the present work, we study which subsets of these adaptations are necessary or sufficient for computing $\mathrm{Dup}_1$, $\mathrm{Dup}_r$, and $\mathrm{Cyc}_\ell$ on typed directed multigraphs arising in entity resolution.  We prove four separation theorems that together yield a minimal-architecture principle for entity resolution.

\textbf{Notation}.  Throughout, we write directed edges as triples $(u,a,\tau)$, which denotes an edge from node $u$ to node $a$ of edge type $\tau$. In a typed entity-attribute graph, it is understood that the source $u \in \mathcal{E}$ and the destination $a \in \mathcal{A}$.  Entity nodes are typically represented by variables $u,v,w,x$, and attribute nodes are typically represented using variables $a_1,a_2,\ldots$. Specific entity types are denoted $\sigma \in \mathcal{T}_\mathcal{E}$, attribute type values $\alpha_j \in \mathcal{T}_\mathcal{A}$, and edge type values $\tau_j \in \mathcal{T}_E$ with numeric subscripts.  Recall that the functions $\tau_V$ and $\tau_E$ denote the node and edge type maps, respectively.    

We use $\hat{y}(v)$ to denote the output of the MPNN's readout function at node $v$, and reserve $\mathrm{Dup}(v)$, $\mathrm{Dup}_r(v)$ and $\mathrm{Cyc}_\ell(v)$ to denote the ground-truth predicates defined above.  Thus, a correct MPNN achieves $\hat{y}(v) = f(v)$ for every node $v$ and every input graph $G$, where $f$ is the target predicate.

\subsection{Motivation: Why GNNs for Entity Resolution}
\label{sec:motivation}

The co-reference predicates $\mathrm{Dup}_r$ defined above ask
whether two entities of the same type share at least~$r$ exact typed
attribute values.  On clean data with canonical attribute strings,
this question admits efficient combinatorial solutions: the $K_{2,1}$
predicate reduces to grouping entities by shared attribute nodes via
a hash index in $O(|E|)$ time, and $K_{2,r}$ for general $r$
reduces to counting shared attributes per entity pair.  No neural
network is required for exact matching on clean data.  We study these
predicates within the GNN framework for three reasons, which together
justify the relevance of the expressivity results to the broader
entity resolution literature.

\medskip
\noindent\textbf{Soft matching on noisy data.}
The primary reason that GNNs are an appropriate computational model
for entity resolution is that real-world attribute values are noisy.
In large-scale commercial databases, co-referent records rarely share
identical attribute strings: name variations such as ``John Smith''
versus ``J.\ Smith'' versus ``John F.\ Smith,'' formatting
differences in addresses and phone numbers, and data entry errors are
pervasive.  A hash-based join requires two attribute strings to be
identical in order to detect a shared value; any discrepancy, no
matter how minor, causes the match to be missed entirely.

A GNN operating on learned attribute embeddings avoids this
brittleness.  Rather than testing string identity, the GNN computes
similarity in a continuous embedding space where nearby points
correspond to similar attribute values.  Through training on labelled
examples of known duplicates and known non-duplicates, the GNN learns
an embedding function that maps ``John Smith'' and ``John F.\ Smith''
to nearby vectors while mapping ``John Smith'' and ``Maria Garcia''
to distant vectors.  The message-passing architecture then aggregates
these soft similarity signals across multiple attributes, producing a
co-reference score that degrades gracefully with noise rather than
failing catastrophically.

To make this precise, let $\psi \colon \mathcal{A} \to \mathbb{R}^d$
be a feature encoder that maps each attribute node to a vector
representation based on its attribute string --- for instance, via a
character-level neural network or a pre-trained language model ---
and let $\mathrm{sim} \colon \mathbb{R}^d \times \mathbb{R}^d \to
[0,1]$ be a similarity function.  In the MPNN framework of
Section~\ref{sec:preliminaries}, the role of~$\psi$ is absorbed
into the initial embedding~$h^{(0)}(a)$ of each attribute
node~$a$.  Define the \emph{soft overlap} between two entities
$u, v \in \mathcal{E}$ by
\[
\mathrm{ov}_{\mathrm{soft}}(u, v) \;=\;
\sum_{(a,\,\tau)\, \in\, N_{\mathrm{typed}}(u)}
\;\max_{\substack{(a',\,\tau')\, \in\, N_{\mathrm{typed}}(v)
		\\[1pt] \tau' \,=\, \tau}}
\mathrm{sim}\bigl(\psi(a),\, \psi(a')\bigr),
\]
where the maximum over an empty set is taken to be~$0$, and the
constraint $\tau' = \tau$ restricts the comparison to attribute pairs
connected via the same edge type.  Note that, like the exact overlap
$|N_{\mathrm{typed}}(u) \cap N_{\mathrm{typed}}(v)|$, the soft
overlap is defined for any pair of entities without a same-type
constraint; the same-type restriction $\tau_V(v) = \tau_V(u)$ is
applied when constructing the co-reference predicate
$\mathrm{Dup}_r$ from the overlap count, not in the overlap
definition itself. That is, the soft overlap compares
each attribute of~$u$ only against attributes of~$v$ that are reached
via the same relationship type --- names are compared with names,
phone numbers with phone numbers, and so on --- and sums the
resulting similarity scores.

To see concretely how the soft overlap captures evidence that exact
matching misses, consider two Person entities $u$ and $v$ in a
database where $u$ has attribute node $a_1$ = ``John Smith'' via edge
type $\tau = \textsf{hasName}$, and $v$ has attribute node $a_2$ =
``John F.\ Smith'' via the same edge type $\tau =
\textsf{hasName}$.  Since $a_1 \neq a_2$ as graph nodes, the typed
pair $(a_1, \tau)$ does not equal $(a_2, \tau)$, and therefore
$(a_1, \tau) \notin N_{\mathrm{typed}}(u) \cap
N_{\mathrm{typed}}(v)$.  In the exact predicate, this name evidence
contributes~$0$ to $|N_{\mathrm{typed}}(u) \cap
N_{\mathrm{typed}}(v)|$.  In the soft overlap, however, a
well-trained feature encoder will produce similar embeddings
$\psi(a_1) \approx \psi(a_2)$, so that
$\mathrm{sim}\bigl(\psi(a_1), \psi(a_2)\bigr)$ is close to~$1$ ---
say $0.92$ --- and the name match contributes $0.92$ to
$\mathrm{ov}_{\mathrm{soft}}(u, v)$ rather than~$0$.  If $u$ and $v$
additionally share an exact phone number --- both pointing to the
same phone attribute node $a_3$ --- then
$\mathrm{sim}\bigl(\psi(a_3), \psi(a_3)\bigr) = 1$, and the total
soft overlap becomes $0.92 + 1.0 = 1.92$.  A soft threshold at
$r = 2$ would recognise this as near-certain evidence of
co-reference, whereas the exact predicate $\mathrm{Dup}_2$ would
return~$0$ because only one attribute node is literally shared.

Now consider the noiseless limit: the data is perfectly clean, every
real-world attribute value corresponds to a unique canonical attribute
node, and the feature encoder is injective up to similarity.  In this
limit,
\[
\mathrm{sim}\bigl(\psi(a),\, \psi(a')\bigr)
\;=\; \mathbf{1}[a = a']
\quad \text{for all } a, a' \in \mathcal{A},
\]
and the soft overlap reduces to
\[
\mathrm{ov}_{\mathrm{soft}}(u, v)
\;=\; \sum_{(a,\,\tau) \in N_{\mathrm{typed}}(u)}
\mathbf{1}\bigl[(a, \tau) \in N_{\mathrm{typed}}(v)\bigr]
\;=\; \bigl|N_{\mathrm{typed}}(u) \cap N_{\mathrm{typed}}(v)\bigr|.
\]
In particular, $\mathrm{Dup}_r(u) = 1$ if and only if there exists
$v \in \mathcal{E} \setminus \{u\}$ with $\tau_V(v) = \tau_V(u)$
and $\mathrm{ov}_{\mathrm{soft}}(u, v) \geq r$, which in the
noiseless limit recovers exactly the definition from
Section~\ref{sec:prob-formulation}.  The exact predicates are not a
separate problem from practical soft matching; they are its cleanest
special case.

\medskip
\noindent\textbf{Structural context beyond pairwise attribute
	comparison.}
A second advantage of GNNs over flat attribute-comparison models is
that they capture \emph{structural context} from the graph.
Traditional entity resolution systems extract a feature vector from
each record independently, then compare records pairwise via
attribute-specific similarity functions.  This approach never examines
the broader graph: it does not know how many \emph{other} entities
share a given attribute value.

A GNN, by contrast, accesses this information through message passing.
In a typed entity-attribute graph, a two-layer MPNN with reverse
message passing computes, at each attribute node~$a$, the number of
entities of each type that connect to~$a$ via each edge type --- the
fan-in profile of the attribute.  This fan-in profile is a measure of
\emph{attribute specificity}: a phone number shared by exactly two
entities is strong evidence of co-reference, while a phone number
shared by hundreds of entities --- as occurs when a registered-agent
service provides a single contact number for many shell companies ---
is essentially noise.  After the reverse pass, entity~$u$'s
embedding~$h^{(2)}(u)$ encodes not just $u$'s own attributes but the
fan-in profile at each of its attribute neighbours, enabling the GNN
to weight shared attributes by their discriminative value.  The
sufficiency construction for Theorem~\ref{thm:K21-simple} formalises
precisely this two-layer forward-reverse mechanism.

A flat pairwise comparison model has no access to these fan-in
profiles.  It compares $u$'s attributes to $v$'s attributes without
knowing whether their shared phone number is unique to the two of
them or is shared by dozens of unrelated entities.  The graph
structure encodes this contextual information, and the MPNN extracts
it through message passing.

\medskip
\noindent\textbf{Expressivity is a necessary condition for
	learnability.}
The noiseless reduction has a direct consequence for architecture
selection.  A standard principle in representation learning is that a
model cannot learn a target function that it cannot
represent~\cite{Xu-et-al:ICLR:2019}: if no setting of the model's
parameters computes the target function correctly, then no training
procedure, no dataset, and no hyperparameter search can make the
model succeed.

Suppose $f$ is any soft co-reference scoring function on typed
entity-attribute graphs that agrees with $\mathrm{Dup}_r$ on the
subclass of graphs where all attribute values are canonical.  If an
MPNN architecture class~$\mathcal{C}$ cannot compute
$\mathrm{Dup}_r$ on all canonical-valued graphs, then $\mathcal{C}$
cannot compute~$f$ on the larger class of all graphs either, because
the canonical-valued graphs form a subclass on which $f$ and
$\mathrm{Dup}_r$ coincide by assumption.  In other words, the
impossibility results established in this paper are \emph{a fortiori}
impossibility results for any reasonable soft generalisation of
$\mathrm{Dup}_r$: the architectural requirements we identify ---
reverse message passing for $K_{2,1}$, ego IDs for $K_{2,r}$ with
$r \geq 2$ --- are necessary conditions for any GNN-based entity
resolution system whose behaviour on clean inputs is consistent
with~$\mathrm{Dup}_r$.

\medskip
\noindent\textbf{Empirical corroboration.}
Consistent with these arguments, several GNN-based
entity resolution systems have demonstrated gains on noisy data,
including GraphER~\cite{Li:GraphER:AAAI:2020},
HierGAT~\cite{Yao:HierGAT:SIGMOD:2022}, and Ganesan et
al.~\cite{Ganesan:IBM:2020}.  In the related problem of
non-obvious relationship detection in knowledge graphs,
M\"{u}ller et al.~\cite{Muller:EDBT:2020} demonstrate that
integrating graph-structural features via GNNs yields
substantial improvements over attribute-only baselines ---
notably in settings such as fake identity detection, where
related entities may share no common attributes because
attributes are deliberately disguised, making attribute
comparison alone insufficient and graph structure essential as a
complementary signal.  The typed entity-attribute graph formalism
introduced in Section~\ref{sec:prob-formulation} captures the
structural backbone over which these systems operate; the
expressivity results in Section~\ref{sec:main-results}
characterise what the underlying message-passing architecture can
and cannot compute over this structure.

\medskip
\noindent\textbf{The role of the present paper.}
Given these advantages of GNNs for entity resolution, a natural
question is: which architectural features are needed for which entity
resolution tasks?  The four theorems in this paper provide a precise
answer.  They map each entity resolution task --- shared-attribute
detection, multi-attribute co-reference, cycle detection --- and each
graph class --- simple versus multigraph --- to the minimal MPNN
architecture that can \emph{represent} the corresponding target
function.  Since representability is a prerequisite for learnability,
these results provide hard lower bounds on the architectural
requirements for any GNN-based entity resolution system, whether it
operates on exact attribute values or on learned soft embeddings.

\section{Related Work}

Graph neural networks were introduced by Gori et
al.~\cite{Gori:2005} and Scarselli et al.~\cite{Scarselli:2009} as a
framework for learning node and graph representations by iteratively
aggregating information from local neighbourhoods.  The modern
formulation as message-passing neural networks, in which each node
updates its embedding by aggregating messages from its neighbours, is
now standard; see Hamilton~\cite{Hamilton:GRL:2020} for a
comprehensive treatment.

The foundational result of Xu et al.~\cite{Xu-et-al:ICLR:2019}
establishes that the expressive power of MPNNs is bounded by the
1-Weisfeiler--Leman graph isomorphism test: any two graphs
distinguishable by an MPNN are also distinguished by 1-WL, and the
Graph Isomorphism Network achieves this upper bound.  This
characterisation reveals fundamental limitations --- for instance,
1-WL assigns all nodes the same color in vertex-transitive graphs
--- and motivates architectural extensions that exceed the 1-WL
barrier.

Two prominent mechanisms for surpassing the 1-WL bound are port
numbering and ego IDs.  Port numbering, which assigns distinguishing
integer labels to edges incident at each node, was studied by Sato et
al.~\cite{Sato-et-al:NeurIPS:2019} as a means of increasing MPNN
expressivity beyond 1-WL on simple graphs.  Ego IDs, introduced by
You et al.~\cite{You-et-al:AAAI:2021}, equip a designated center
node with a binary indicator feature, enabling the MPNN to track
information relative to a specific node.  Both mechanisms increase
expressivity at increased computational cost: ego IDs in particular
require one forward pass per node, multiplying the per-layer cost by
$O(|V|)$.

Egressy et al.~\cite{Egressy-et-al:AAAI:2024} study MPNNs on
directed multigraphs, adapting the three mechanisms above ---
reverse message passing, directed multigraph port numbering, and ego
IDs --- to the directed setting.  Their main result is a
\emph{universality} theorem: the combination of all three adaptations
enables unique node-ID assignment on connected directed multigraphs,
and hence detection of any directed subgraph
pattern~\cite[Corollary~4.4.1]{Egressy-et-al:AAAI:2024}.  However,
Egressy et al.\ establish only that the full triple is
\emph{sufficient}; they do not characterise which individual
adaptations are necessary for specific tasks, nor do they provide
lower bounds on the required depth.  The present paper fills this gap
with tight separation theorems for the entity resolution predicates
$\mathrm{Dup}_r$ and $\mathrm{Cyc}_\ell$.

GNN-based entity resolution is an active and growing area of applied
research.  Li et al.~\cite{Li:GraphER:AAAI:2020} introduce GraphER,
which constructs an entity-record graph and applies graph convolutional networks with cross-encoding and a token-gating mechanism to perform token-centric entity matching.  Yao et al.~\cite{Yao:HierGAT:SIGMOD:2022} propose HierGAT, which builds a hierarchical heterogeneous graph over entity attributes and tokens, and combines graph attention with Transformer-based contextual embeddings to derive entity similarity embeddings for entity resolution, demonstrating robustness on dirty datasets. Zhang et al.~\cite{Zhang:Autoblock:2020} propose
AutoBlock, a learning-based blocking framework that uses
similarity-preserving representation learning and locality-sensitive
hashing to generate candidate pairs for entity matching, eliminating
the need for hand-crafted blocking keys.  On the link prediction
side, Ganesan et al.~\cite{Ganesan:IBM:2020} study link prediction
with graph neural networks on property graphs in master data management,
addressing the practical challenges of anonymisation, explainability,
and production deployment under enterprise ethical and privacy constraints.  Kumar et
al.~\cite{Kumar:COMAD:2024} incorporate document structure measures
into relational graph convolutional networks for relation extraction,
showing improved accuracy on datasets relevant to semantic automation
tasks in data management systems.

Schlichtkrull et al.~\cite{Schlichtkrull:ESWC:2018} introduce relational graph convolutional networks (R-GCNs) for multi-relational data, using relation-specific weight matrices to handle typed edges --- a design choice directly relevant to our typed entity-attribute graphs, where distinct edge types encode different attribute relationships. However, R-GCNs and related architectures are evaluated purely empirically, without formal analysis of their expressive power; it remains unclear which graph structures they can distinguish and which architectural components are theoretically necessary. Our paper addresses this gap by providing the theoretical foundation explaining \emph{why} different entity resolution tasks require different architectural features, and \emph{which} features can be safely omitted for a given task.

Explainability of GNN predictions is an important concern when
deploying entity resolution systems in enterprise settings.
Chen et al.~\cite{Chen:SIGMOD:2024} propose GVEX, a view-based
explanation framework for GNN-based graph classification that
generates two-tier explanation structures consisting of graph
patterns and induced explanation subgraphs, with provable
approximation guarantees.
Zhu et al.~\cite{Zhu:SliceGX:arXiv:2025} introduce SliceGX, a
layer-wise GNN explanation method based on model slicing that
generates explanations at specific intermediate GNN layers,
enabling progressive model diagnosis; notably, SliceGX targets
node classification tasks including fraud detection scenarios
more directly analogous to entity resolution pipelines.
Both frameworks provide formal approximation guarantees for the
optimisation objectives of their respective explanation generation
problems, while the quality of the resulting explanations is
assessed empirically via fidelity metrics rather than through
formal analysis of the expressive power of the underlying GNN
architectures.
While these works address the orthogonal question of
\emph{explaining} GNN outputs rather than characterising their
\emph{expressive power}, they highlight the practical importance
of understanding GNN behaviour in entity resolution pipelines.

More broadly, Khan et
al.~\cite{Khan:SIGMOD:2025} survey the synergies between graph data
management and graph machine learning across the full data pipeline,
including graph data cleaning, scalable embedding, GNN training, and
explainability, identifying entity resolution as a key application
that benefits from the integration of both fields.

\subsection{Main Contributions}

The main contributions of this paper, contrasted with the prior work
discussed above, are as follows.

\begin{enumerate}
	\item \textbf{Typed entity-attribute graph formalism.}  We introduce
	the typed entity-attribute graph as a formal model for entity
	resolution data, together with the $K_{2,r}$ co-reference predicates
	$\mathrm{Dup}_r$ and the $\ell$-cycle participation predicate
	$\mathrm{Cyc}_\ell$.  These predicates capture the core
	computational patterns underlying duplicate detection and cyclic
	fraud detection in master data management.
	
	\item \textbf{Tight necessity and sufficiency for $K_{2,1}$.}
	We prove that reverse message passing is necessary and sufficient
	for $\mathrm{Dup}_1$ on simple typed entity-attribute graphs
	(Theorem~\ref{thm:K21-simple}), and that incoming port numbering is
	additionally necessary and sufficient on multigraph typed
	entity-attribute graphs (Theorem~\ref{thm:K21-multigraph}).  Both
	constructions achieve optimal depth~$2$.
	
	\item \textbf{The $K_{2,1}$--$K_{2,r}$ complexity gap (main result).}
	We prove that for $r \geq 2$, ego IDs are necessary for
	$\mathrm{Dup}_r$ even when reverse message passing and full port
	numbering are available, and that ego IDs together with reverse
	message passing suffice at optimal depth~$4$
	(Theorem~\ref{thm:K2r-simple}).  This establishes a strict
	increase in required architectural complexity in the transition from
	$\mathrm{Dup}_1$ to $\mathrm{Dup}_r$, driven by the need for
	cross-attribute identity correlation.
	
	\item \textbf{Ego IDs for cycle detection.}  We prove that ego IDs
	are necessary for $\ell$-cycle detection on typed directed graphs,
	even when reverse message passing and full port numbering are
	granted, and that ego IDs alone suffice on the canonical separation
	instances at optimal depth~$\ell$
	(Theorem~\ref{thm:cycle-detection}).
	
	\item \textbf{Minimal-architecture principle.}  The four theorems
	together yield a complete minimal-architecture principle
	(Table~\ref{tab:minimal-architecture}): for each entity resolution
	task, graph class, and adaptation set, we determine whether the
	adaptation set is sufficient, whether it is necessary, and what the
	optimal depth is.  Every entry is tight in the sense that both the
	adaptation set and the depth are simultaneously minimised.  This
	stands in contrast to the universality result of Egressy et
	al.~\cite{Egressy-et-al:AAAI:2024}, which proves sufficiency of
	the full triple but does not address necessity or minimality of
	individual components for specific tasks.
\end{enumerate}

\section{Preliminaries} \label{sec:preliminaries}

\subsection{Message Passing Neural Networks (MPNNs)}

In the present section, we define message passing neural networks (MPNNs) on
typed directed multigraphs, following the general framework of
\cite{Hamilton:GRL:2020} adapted to our typed setting. We then describe three
architectural enhancements -- reverse message passing, directed multigraph
port numbering, and ego IDs -- adapted to the directed multigraph setting by
\cite{Egressy-et-al:AAAI:2024}.

Given a typed directed multigraph
$G = (V, E, \mathrm{src}, \mathrm{dst}, \mathcal{T}_V, \mathcal{T}_E, \tau_V, \tau_E)$,
a message passing neural network of depth $K$ iteratively computes node
embeddings, as follows. Initially, each node $v \in V$ has embedding
$h^{(0)}(v)$, which encodes the node type $\tau_V(v)$ and any input features.
Message passing proceeds over $K$ layers. At layer $k \in \{1, \dots, K\}$,
each node $v$ computes a new embedding $h^{(k)}(v)$ based on its previous
embedding $h^{(k-1)}(v)$ and messages received from its predecessors
$N_{\mathrm{in}}(v)$:
\begin{enumerate}
	\item Each predecessor $u \in N_{\mathrm{in}}(v)$ sends a message along
	each edge $(u, v, \tau) \in E$, consisting of its embedding
	$h^{(k-1)}(u)$ together with the edge type $\tau$.
	\item Node $v$ aggregates the incoming messages:
	\[
	a_{\mathrm{in}}^{(k)}(v) = \mathrm{AGG}_{\mathrm{in}}^{(k)}
	\bigl( \{\!\!\{ (h^{(k-1)}(u),\, \tau) : (u, v, \tau) \in E
	\}\!\!\} \bigr)
	\]
	where $\{\!\!\{ \cdot \}\!\!\}$ denotes a multiset and
	$\mathrm{AGG}_{\mathrm{in}}^{(k)}$ is a permutation-invariant
	aggregation function such as sum, mean, or element-wise max. Note that
	in a standard directed MPNN, aggregation is only over \emph{incoming}
	neighbors.
	\item Node $v$ updates its embedding:
	\[
	h^{(k)}(v) = \mathrm{UPDATE}^{(k)} \bigl( h^{(k-1)}(v),\;
	a_{\mathrm{in}}^{(k)}(v) \bigr)
	\]
	where $\mathrm{UPDATE}^{(k)}$ is an update function. Note that since
	$h^{(k)}(v)$ is computed from $h^{(k-1)}(v)$, information from earlier
	layers -- including the initial embedding $h^{(0)}(v)$ and hence the
	node type $\tau_V(v)$ -- can be retained across layers.
\end{enumerate}
After $K$ layers, a node-level readout function $\mathrm{READOUT}$ --- typically a
small MLP or linear layer --- maps the final embedding to the predicted label:
\[
\hat{y}(v) = \mathrm{READOUT}\bigl( h^{(K)}(v) \bigr).
\]

\subsection{The 1-Weisfeiler--Leman Test and the Expressive Power of MPNNs}

The \emph{1-Weisfeiler--Leman test} (1-WL), also called
\emph{color refinement}, is a classical iterative algorithm for
graph isomorphism testing~\cite{Weisfeiler:1968, Shervashidze:2011}.
Given a graph $G$ with initial node labels, the test assigns each
node~$v$ an initial label $\ell_0(v)$ based on its input features.
At each subsequent iteration $t \geq 1$, the label of each node is
updated in three steps: first, the multiset of labels of $v$'s
neighbours is aggregated; second, this multiset is concatenated with
$v$'s own label from the previous iteration; and third, an injective
\emph{label compression} function maps the resulting pair to a new
label.  Formally:
\[
\ell_t(v) \;=\; f\!\Bigl(\ell_{t-1}(v),\;\;
\bigl\{\!\!\bigl\{ \ell_{t-1}(w) : w \in N(v) \bigr\}\!\!\bigr\}
\Bigr),
\]
where $f$ is an injective function that assigns a unique new label to
each distinct pair of a label and a multiset of labels.  The process
terminates when the partition of nodes induced by the labelling
stabilises.  Two graphs are declared \emph{non-isomorphic by 1-WL}
if their multisets of node labels differ at some iteration.

The connection to MPNNs is direct: the MPNN update rule
\[
h^{(k)}(v) = \mathrm{UPDATE}^{(k)}\!\bigl(h^{(k-1)}(v),\,
a_{\mathrm{in}}^{(k)}(v)\bigr)
\]
is a continuous, learned analogue of
the discrete 1-WL label update.  This parallel was made precise by
Xu et al.~\cite{Xu-et-al:ICLR:2019}, who proved the following upper
bound on MPNN expressivity: if a GNN maps two graphs to different
embeddings, then the 1-WL test also declares them non-isomorphic.
Consequently, any aggregation-based GNN is at most as powerful as
1-WL in distinguishing graphs and, by extension, node
neighborhoods.

Xu et al.~\cite{Xu-et-al:ICLR:2019} also showed that this upper
bound is achievable.  The \emph{Graph Isomorphism Network} (GIN) is
an MPNN that uses sum aggregation and an injective update function
implemented as a multilayer perceptron (MLP).  The key property of
GIN is that its aggregation and update are both injective, making it
as powerful as 1-WL: two nodes receive the same GIN embedding after
$K$ layers if and only if they receive the same 1-WL label after $K$
iterations.  The injectivity of sum aggregation rests on a result
that we invoke repeatedly in our sufficiency proofs: Xu et
al.~\cite[Lemma~5]{Xu-et-al:ICLR:2019} show that for any countable
feature space, there exists a function into~$\mathbb{R}^n$ such that
summing over any bounded-size multiset yields an injective mapping.
This guarantees that sum aggregation can faithfully encode entire
multisets of neighbour messages, which in our constructions enables
attribute nodes to recover the full multiset of incoming entity-type
and edge-type signatures, and entity nodes to recover the full
multiset of attribute embeddings received via reverse message passing.

\begin{Remark}[Bounded-cardinality assumption]\label{rem:bounded-multiset}
	Throughout this paper, we work in the standard countable-feature
	model of Xu et al.~\cite{Xu-et-al:ICLR:2019}: the type sets
	$\mathcal{T}_\mathcal{E}$, $\mathcal{T}_\mathcal{A}$, and
	$\mathcal{T}_E$ are finite, and the multisets encountered in each
	aggregation step have cardinality bounded by the maximum node degree
	of the input graph.  Under these assumptions --- which hold for all
	finite typed entity-attribute graphs --- the result of Xu et
	al.~\cite[Lemma~5]{Xu-et-al:ICLR:2019} guarantees the existence of
	injective sum-based aggregation functions at each layer.  All
	sufficiency constructions in this paper (Theorems~\ref{thm:K21-simple},
	\ref{thm:K21-multigraph}, and~\ref{thm:K2r-simple}) operate under
	these standard assumptions.
\end{Remark}

Although GIN achieves the 1-WL upper bound, this bound itself has
well-known limitations.  The 1-WL test assigns the same label to all
nodes in a vertex-transitive graph when all nodes start with the
same initial label; in particular, it cannot distinguish the two
disjoint copies of a directed $\ell$-cycle from a single
$2\ell$-cycle.  For the entity resolution tasks studied in this
paper, the 1-WL barrier manifests concretely: a standard MPNN
without adaptations cannot compute even the simplest co-reference
predicate $\mathrm{Dup}_1$, because entity nodes in a typed
entity-attribute graph have $N_{\mathrm{in}}(u) = \emptyset$ and
therefore receive no graph-structural information via standard
forward aggregation.  The three adaptations described next ---
reverse message passing, port numbering, and ego IDs --- are
designed to overcome these limitations.

Indeed, Egressy et al.~\cite{Egressy-et-al:AAAI:2024} prove that
the combination of all three adaptations enables the assignment of
unique node IDs in any connected directed multigraph; by a
universality result of Loukas~\cite{Loukas:ICLR:2020}, an MPNN with
unique node IDs and sufficient depth and width can compute any
Turing-computable function on graphs, and in particular can detect
any directed subgraph pattern.  While this universal architecture guarantees correctness for any
subgraph detection task, its power comes at significant computational
cost --- ego IDs alone require one forward pass per node, multiplying
the per-layer complexity by~$O(|V|)$, and full bidirectional port
numbering adds a pre-computation step of~$O(|E| \log |E|)$.
Since an architecture that provably cannot compute a target function
will fail to learn it regardless of training procedure, dataset size,
or overparameterisation, the natural question is: which subsets of
these adaptations are genuinely necessary for a given task, and which
can be safely omitted to reduce cost without sacrificing correctness?

\subsection{Adaptations}

We now discuss three adaptations to the standard MPNN architecture 
defined above. In this work, we investigate the expressive power of 
MPNNs when enhanced with various subsets of these adaptations.

\textbf{Adaptation 1: Reverse message passing.} We say that an MPNN is
enhanced with reverse message passing if, in each layer of message passing,
each node also receives information from each of its successors -- i.e., node
embeddings and edge types are sent in the reverse direction of each edge, and this
information is aggregated separately. More specifically, at layer $k$, each
node $v$ computes two aggregations:
\begin{align*}
	a_{\mathrm{in}}^{(k)}(v) &= \mathrm{AGG}_{\mathrm{in}}^{(k)}
	\bigl( \{\!\!\{ (h^{(k-1)}(w),\, \tau) : (w, v, \tau) \in E
	\}\!\!\} \bigr), \\
	a_{\mathrm{out}}^{(k)}(v) &= \mathrm{AGG}_{\mathrm{out}}^{(k)}
	\bigl( \{\!\!\{ (h^{(k-1)}(w),\, \tau) : (v, w, \tau) \in E
	\}\!\!\} \bigr),
\end{align*}
and the node embedding is updated using both:
\[
h^{(k)}(v) = \mathrm{UPDATE}^{(k)} \bigl( h^{(k-1)}(v),\;
a_{\mathrm{in}}^{(k)}(v),\; a_{\mathrm{out}}^{(k)}(v) \bigr).
\]
This enables information to flow against the edge direction, which is
essential in typed entity-attribute graphs where all edges point from entities
to attributes: without reverse message passing, entity nodes would never
receive information from their attribute neighbors. This adaptation was
introduced by Jaume et al.~\cite{Jaume:RLGM:ICLR:2019} for directed labeled
graphs and subsequently adopted by Egressy et
al.~\cite{Egressy-et-al:AAAI:2024} as one of the three adaptations for
directed multigraphs.

\textbf{Adaptation 2: Directed multigraph port numbering.}
In a directed multigraph, a node $v$ may have multiple incoming edges from
distinct predecessors, as well as parallel edges from the same predecessor.
Port numbering assigns integer labels to edges that allow nodes to
distinguish these cases.

Formally, each edge $e \in E$ is assigned an \emph{incoming port number}
$p_{\mathrm{in}}(e) \in \mathbb{N}$ and an \emph{outgoing port number}
$p_{\mathrm{out}}(e) \in \mathbb{N}$ satisfying the following properties:
\begin{enumerate}
	\item[(i)] All parallel edges from the same source $u$ to the same
	destination $v$ share the same incoming port at $v$ and the same
	outgoing port at $u$.
	\item[(ii)] Edges from \emph{distinct} sources into the same
	destination receive \emph{distinct} incoming ports.
	\item[(iii)] Edges to \emph{distinct} destinations from the same
	source receive \emph{distinct} outgoing ports.
\end{enumerate}

Both port numbers $(p_{\mathrm{in}}(e),\, p_{\mathrm{out}}(e))$ are
attached as edge features and are visible to both endpoints.
When port numbering is combined with reverse message passing
(Adaptation~1), the aggregations at layer $k$ become:
\begin{align*}
	a_{\mathrm{in}}^{(k)}(v) &= \mathrm{AGG}_{\mathrm{in}}^{(k)}
	\bigl( \{\!\!\{ (h^{(k-1)}(w),\, \tau,\,
	p_{\mathrm{in}}(e),\, p_{\mathrm{out}}(e))
	: e = (w, v, \tau) \in E
	\}\!\!\} \bigr), \\
	a_{\mathrm{out}}^{(k)}(v) &= \mathrm{AGG}_{\mathrm{out}}^{(k)}
	\bigl( \{\!\!\{ (h^{(k-1)}(w),\, \tau,\,
	p_{\mathrm{in}}(e),\, p_{\mathrm{out}}(e))
	: e = (v, w, \tau) \in E
	\}\!\!\} \bigr),
\end{align*}
where each message now includes the port numbers of the edge that
carries it.  Without reverse message passing, only
$a_{\mathrm{in}}^{(k)}(v)$ is computed.

This convention follows \cite{Egressy-et-al:AAAI:2024}, which adapts port
numbering from \cite{Sato-et-al:NeurIPS:2019} to directed multigraphs. Intuitively, the incoming port number at a node $v$ amounts to an
arbitrary ordering of $v$'s distinct predecessors, and the outgoing port
number at a node $u$ amounts to an arbitrary ordering of $u$'s distinct
successors. Parallel edges between the same pair of nodes share the same
port numbers.

We distinguish two sub-adaptations that may be applied independently.
\emph{Incoming port numbering} (Adaptation~2a) assigns only
$p_{\mathrm{in}}(e)$ satisfying properties (i)--(ii) above.
\emph{Outgoing port numbering} (Adaptation~2b) assigns only
$p_{\mathrm{out}}(e)$ satisfying properties (i) and (iii) above. The full
scheme (Adaptation~2) uses both.  Individual theorems below specify precisely which components are required.

An important subtlety is that port numbering is an \emph{adversarial}
resource: the MPNN has no control over which specific port numbering is
realized. Accordingly, when we say that port numbering is
\emph{sufficient} to compute a ground-truth predicate $f(v)$, we mean that the MPNN
computes $f(v)$ correctly under \emph{every} valid port numbering
assignment. When we say that port numbering is \emph{not sufficient}, we
exhibit specific graphs and a valid port numbering under which no MPNN can
distinguish nodes with different $f$-values.

The concept of port numbering as a symmetry-breaking resource for
anonymous networks originates in the theory of distributed local
algorithms, where each node must produce its output after a constant
number of synchronous communication rounds with its
neighbours~\cite{Angluin:STOC:1980, Linial:SIAMJC:1992,
	Naor-Stockmeyer:SIAMJC:1995}.  Sato et
al.~\cite{Sato-et-al:NeurIPS:2019} bridge this classical theory to
the GNN setting by proving that each class of GNNs they consider is
computationally equivalent to a corresponding class of distributed
local algorithms.  Working on simple undirected graphs, they show that
GNNs equipped with consistent port numbering (their CPNGNNs) are
strictly more expressive than GNNs without it, including
GIN~\cite{Xu-et-al:ICLR:2019}.  This equivalence allows impossibility
and possibility results from the distributed computing literature to
transfer directly to GNNs.  Egressy et
al.~\cite{Egressy-et-al:AAAI:2024} later adapt this port numbering
mechanism to directed multigraphs, as what we term Adaptation~2 in
the present work.

\textbf{Adaptation 3: Ego IDs.} For a designated center node $u \in V$,
define the binary feature $\mathrm{ego}_u(v) = \mathbf{1}[v = u]$, where
$\mathbf{1}[\cdot]$ denotes the indicator function, which
equals 1 if $v$ is the designated center node and 0 otherwise. This
feature is appended to the initial embedding:
\[
h^{(0)}(v) = \bigl(\tau_V(v),\; \mathrm{ego}_u(v)\bigr).
\]
To classify node $u$, the MPNN is run with $u$ as the ego node. This
breaks the symmetry between $u$ and all other nodes of the same type,
enabling the MPNN to track information relative to a specific node.
The ego ID mechanism was introduced by \cite{You-et-al:AAAI:2021} and
applied to directed multigraphs in \cite{Egressy-et-al:AAAI:2024}. 

Note that without ego IDs, the per-layer cost of an MPNN is
dominated by the graph size: the MPNN computes one message per edge
and one aggregation-plus-update per node, yielding $O(|E| + |V|)$
operations per layer when the embedding dimension is treated as a
constant independent of graph size.  Since a typed entity-attribute
graph has no isolated nodes --- every entity has at least one
outgoing edge, and every attribute has at least one incoming edge
--- we have $|V| \leq |E| + 1$, and the per-layer cost simplifies
to $O(|E|)$.  With ego IDs, classifying every node requires running
the MPNN once per node, increasing the total per-layer cost from $O(|E|)$ to
$O(|V| \cdot |E|)$.  As we shall see in
Theorem~\ref{thm:K2r-simple}, this $O(|V|)$-factor overhead is not
an artifact of a particular architecture but an inherent cost: ego
IDs are necessary for detecting $K_{2,r}$ patterns with $r \geq 2$,
and no MPNN without ego IDs can compute $\mathrm{Dup}_r$ on all
simple typed entity-attribute graphs, regardless of depth or width.

\section{Main Results} \label{sec:main-results}

\subsection{$K_{2,1}$ Detection on Simple Entity-Attribute Graphs}

We now study which MPNN adaptations are necessary and sufficient for
computing the duplicate predicate $\mathrm{Dup}(u)$ on simple typed
entity-attribute graphs. Recall that $\mathrm{Dup}(u) = 1$ if and only
if there exists another entity $v$ of the same type as $u$ such that
$N_{\mathrm{typed}}(u) \cap N_{\mathrm{typed}}(v) \neq \emptyset$.

\begin{Theorem} \label{thm:K21-simple}
	On simple typed entity-attribute graphs, reverse message passing is
	necessary and sufficient for computing $\mathrm{Dup}(u)$ in the MPNN
	framework. That is:
	\begin{enumerate}
		\item[(a)] \textbf{(Necessity.)} Every MPNN without reverse message
		passing fails to compute $\mathrm{Dup}(u)$ on some simple typed
		entity-attribute graph.
		\item[(b)] \textbf{(Sufficiency.)} There exists a $2$-layer MPNN with
		reverse message passing and sum aggregation that computes
		$\mathrm{Dup}(u)$ for all entities $u \in \mathcal{E}$ in any simple
		typed entity-attribute graph.
	\end{enumerate}
\end{Theorem}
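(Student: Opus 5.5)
For part (a), the plan is to use the observation made after the 1-WL discussion: in a typed entity-attribute graph every edge points from an entity to an attribute, so $N_{\mathrm{in}}(u)=\emptyset$ for every entity $u$. Without reverse message passing, $a_{\mathrm{in}}^{(k)}(u)$ is the aggregation of the empty multiset at every layer. An induction on $k$ then shows that $h^{(k)}(u)$ is a fixed function of $h^{(0)}(u)$ alone, independent of the rest of the graph. This holds for any choice of AGG, UPDATE, depth and width, and also with incoming port numbering, since those ports only ride on incoming messages. For the separating pair I would take $G_1$ with two entities $u,v$ of type $\sigma$ sharing one attribute $a$ via edges $(u,a,\tau)$ and $(v,a,\tau)$. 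I would take $G_2$ with the same two entities pointing via type $\tau$ to distinct attributes $a_1\ne a_2$ of the same attribute type. Then $\mathrm{Dup}_{G_1}(u)=1$ and $\mathrm{Dup}_{G_2}(u)=0$, but $h^{(0)}(u)$ is the same in both graphs, so $\hat y(u)$ coincides and the MPNN fails on one of them. This step is routine.

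For part (b), I would build a 2-layer GIN-style MPNN with reverse message passing. The initial embedding $h^{(0)}$ is a one-hot of the node type. In layer 1, each attribute $a$ uses sum aggregation with the injective multiset encoding of Xu et al.\ [Lemma~5], under Remark~\ref{rem:bounded-multiset}. This recovers the multiset $\{\!\!\{(\tau_V(w),\tau) : (w,a,\tau)\in E\}\!\!\}$, and hence the counts $c_a(\sigma,\tau)$ of edges of type $\tau$ from entities of type $\sigma$. A concrete choice is a sum of one-hot vectors indexed by pairs $(\sigma,\tau)$ in the finite set $\mathcal{T}_\mathcal{E}\times\mathcal{T}_E$, so no appeal to Lemma~5 is even needed. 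Entities just retain their type in layer 1.

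In layer 2, each entity $u$ receives via $a_{\mathrm{out}}^{(2)}(u)$ the multiset of pairs $(h^{(1)}(a),\tau)$ over its out-edges. Along each edge $(u,a,\tau)$, the message function computes the indicator $\mathbf{1}[c_a(\tau_V(u),\tau)\ge 2]$. This can depend on $u$'s own type if I let UPDATE look up the coordinate $(\tau_V(u),\tau)$, for instance by passing all coordinates and letting the update select the relevant one. With sum aggregation followed by a threshold in READOUT, $\hat y(u)=\mathbf{1}[\text{some such edge exists}]$.

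Correctness is where simplicity is used, and it is the only delicate point. If $c_a(\tau_V(u),\tau)\ge 2$, then since $u$ contributes at most one edge of type $\tau$ into $a$, some other entity $v\ne u$ of the same type has $(a,\tau)\in N_{\mathrm{typed}}(v)$, so $\mathrm{Dup}(u)=1$. Conversely, any witness $v$ and shared $(a,\tau)$ make that count at least 2. The main obstacle I expect is expressing the type-dependent selection within the stated message/aggregate/update template. The cleanest route is to have the out-aggregation produce, for each $(\sigma,\tau)$, the number $\sum_{(u,a,\tau)}\mathbf{1}[c_a(\sigma,\tau)\ge2]$, since a per-edge function of $(h^{(1)}(a),\tau)$ is allowed, and then let UPDATE pick coordinate $\sigma=\tau_V(u)$ using $h^{(1)}(u)$. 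This also makes clear that depth 2 cannot be reduced, since at depth 1 the entity sees no fan-in information.
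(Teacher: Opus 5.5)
Your proposal is correct and follows essentially the paper's proof. For (a) you use the same $N_{\mathrm{in}}(u)=\emptyset$ argument; your two-entity $G_2$ with disjoint attributes works as well as the paper's single-entity $G_2$, since $h^{(K)}(u)$ does not depend on the graph. For (b) you use the same forward-then-reverse 2-layer construction, with simplicity turning an edge count $\geq 2$ at $(\tau_V(u),\tau)$ into a distinct same-type witness; your explicit one-hot counts and per-edge indicator vector, with UPDATE selecting coordinate $\tau_V(u)$, are just a more concrete version of the paper's Lemma~5 injective encoding followed by an MLP readout.
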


\begin{proof}[Proof of part (a): necessity]
	We show that any directed MPNN \emph{without} reverse message passing
	fails to compute $\mathrm{Dup}$.
	
	In a typed entity-attribute graph, every edge is directed from
	$\mathcal{E}$ to $\mathcal{A}$, so every entity node
	$u \in \mathcal{E}$ satisfies $N_{\mathrm{in}}(u) = \emptyset$. A
	directed MPNN without reverse message passing aggregates only over
	incoming neighbors. Therefore, at every layer $k \geq 1$:
	\[
	a_{\mathrm{in}}^{(k)}(u)
	= \mathrm{AGG}_{\mathrm{in}}^{(k)}\bigl(\emptyset\bigr)
	= c^{(k)},
	\]
	which is a fixed constant depending only on the architecture, not on the
	graph. Consequently,
	\[
	h^{(k)}(u) = \mathrm{UPDATE}^{(k)}\!\bigl(h^{(k-1)}(u),\; c^{(k)}\bigr),
	\]
	and by induction on $k$, the embedding $h^{(K)}(u)$ depends only on the
	initial embedding $h^{(0)}(u)$ and the architectural constants
	$c^{(1)}, \ldots, c^{(K)}$. It is independent of the edge structure of
	$G$.
	
	Now consider two simple typed entity-attribute graphs that contain the same
	entity $u$ with the same initial features $h^{(0)}(u)$, and with $G_1$ and
	$G_2$ defined as follows; see Figure~\ref{fig:k21-separation}.
	In $G_1$, the \emph{positive instance}, there are two entities $u$ and $v$,
	both of type $\sigma \in \mathcal{T}_\mathcal{E}$, and a single attribute
	node $a$ of type $\alpha \in \mathcal{T}_\mathcal{A}$, connected by directed
	edges $(u, a, \tau_1)$ and $(v, a, \tau_1)$; thus
	$\mathrm{Dup}_{G_1}(u) = 1$.
	In $G_2$, the \emph{negative instance}, there is only entity $u$ of type
	$\sigma$ and attribute $a$ of type $\alpha$, connected by the single edge
	$(u, a, \tau_1)$; thus $\mathrm{Dup}_{G_2}(u) = 0$.
	
	\begin{figure}[htbp]
		\centering
		\begin{tikzpicture}[
			node/.style  = {circle, draw, thick,
				minimum size=9mm, inner sep=1pt, font=\small},
			tlbl/.style  = {font=\scriptsize, text=gray!75!black},
			>={Stealth[length=6pt, width=4pt]},
			semithick
			]
			
			\node[node, label={[tlbl]left:$\sigma$}]  (u1) at (0.0,  1.5) {$u$};
			\node[node, label={[tlbl]left:$\sigma$}]  (v1) at (0.0, -1.5) {$v$};
			\node[node, label={[tlbl]right:$\alpha$}] (a1) at (3.0,  0.0) {$a$};
			
			\draw[->] (u1) to node[above, sloped, font=\scriptsize]{$\tau_1$} (a1);
			\draw[->] (v1) to node[below, sloped, font=\scriptsize]{$\tau_1$} (a1);
			
			\node[font=\small\itshape] at (1.5, -2.6)
			{$G_1$:\enspace$\mathrm{Dup}(u)=1$};
			
			\node[node, label={[tlbl]left:$\sigma$}]  (u2) at (6.2,  0.0) {$u$};
			\node[node, label={[tlbl]right:$\alpha$}] (a2) at (9.0,  0.0) {$a$};
			
			\draw[->] (u2) to node[above, font=\scriptsize]{$\tau_1$} (a2);
			
			\node[font=\small\itshape] at (7.6, -2.6)
			{$G_2$:\enspace$\mathrm{Dup}(u)=0$};
			
		\end{tikzpicture}
		\caption{Separation graphs for the necessity proof of
			Theorem~\ref{thm:K21-simple}.}
		\label{fig:k21-separation}
	\end{figure}
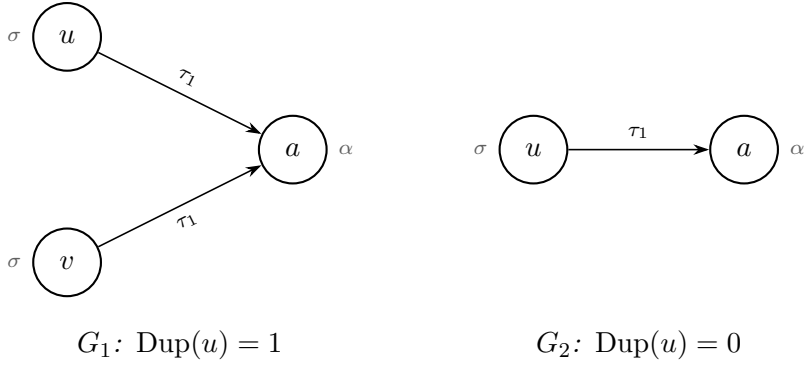
	
	Both graphs are simple. Since $h^{(K)}(u)$ is the same in both graphs --- depending only on $h^{(0)}(u)$ and the architectural constants --- every
	readout function computes the same value $\hat{y}(u)$ on both, and hence
	cannot distinguish $\mathrm{Dup}_{G_1}(u) = 1$ from
	$\mathrm{Dup}_{G_2}(u) = 0$.
\end{proof}

\begin{proof}[Proof of part (b): sufficiency]
	We construct an explicit $2$-layer MPNN with reverse message passing and
	sum aggregation that computes $\mathrm{Dup}(u)$ for all
	$u \in \mathcal{E}$ in any simple typed entity-attribute graph.
	
	\medskip
	\noindent\textsc{Layer 0 (Initialization).}
	Set $h^{(0)}(v) = \tau_V(v)$ for all $v \in V$.
	
	\medskip
	\noindent\textsc{Layer 1 (Forward: entity $\to$ attribute).}
	For each edge $(u, a, \tau) \in E$, define the message:
	\[
	\mathrm{msg}^{(1)}(u, a, \tau)
	= \bigl(\tau_V(u),\; \tau\bigr).
	\]
	Each attribute $a$ aggregates the multiset of incoming messages:
	\[
	\mathcal{M}(a) = \bigl\{\!\!\bigl\{
	\mathrm{msg}^{(1)}(u, a, \tau) : (u, a, \tau) \in E
	\bigr\}\!\!\bigr\}.
	\]
	By \cite[Lemma 5]{Xu-et-al:ICLR:2019}, a sum-based aggregation with an
	appropriate function maps $\mathcal{M}(a)$ injectively into a
	fixed-dimensional space. Let $h^{(1)}(a)$ encode this injective image.
	Since $\mathcal{M}(a)$ determines the multiset of (entity-type,
	edge-type) pairs incident to $a$, an MLP applied to $h^{(1)}(a)$ can
	compute, for each entity type $\sigma \in \mathcal{T}_\mathcal{E}$ and
	edge type $\tau_i \in \mathcal{T}_E$, the multiplicity:
	\[
	\mathrm{mult}_{\mathcal{M}(a)}(\sigma, \tau_i)
	= \bigl|\bigl\{
	s \in \mathcal{M}(a) : s = (\sigma, \tau_i)
	\bigr\}\bigr|.
	\]
	
	\medskip
	\noindent\textsc{Layer 2 (Reverse: attribute $\to$ entity).}
	For each edge $(u, a, \tau) \in E$, define the reverse message:
	\[
	\mathrm{msg}^{(2)}(u, a, \tau)
	= \bigl(h^{(1)}(a),\; \tau\bigr).
	\]
	Each entity $u$ aggregates the multiset of reverse messages:
	\[
	\mathcal{M}^{(2)}(u) = \bigl\{\!\!\bigl\{
	\mathrm{msg}^{(2)}(u, a, \tau) : (u, a, \tau) \in E
	\bigr\}\!\!\bigr\}.
	\]
	Each element of $\mathcal{M}^{(2)}(u)$ is a pair $(S, \tau')$, where $S$
	is the injectively encoded multiset $\mathcal{M}(a)$ for the attribute $a$
	at the other end of the edge, and $\tau'$ is the edge type. Again by \cite[Lemma 5]{Xu-et-al:ICLR:2019}, $h^{(2)}(u)$ injectively encodes
	$\mathcal{M}^{(2)}(u)$.
	
	\medskip
	\noindent\textsc{Readout.}
	From $h^{(2)}(u)$, an MLP computes:
	\[
	\hat{y}(u) = \mathbf{1}\Bigl[\,\exists\; (S, \tau') \in
	\mathcal{M}^{(2)}(u) \;\text{such that}\;
	\mathrm{mult}_S\bigl(\tau_V(u),\, \tau'\bigr) \geq 2 \,\Bigr].
	\]
	
	\medskip
	\noindent\textsc{Correctness.} We show
	$\hat{y}(u) = \mathrm{Dup}(u)$ for all entities $u \in \mathcal{E}$.
	
	\medskip
	\noindent$(\Rightarrow)$ Suppose $\mathrm{Dup}(u) = 1$. Then there exist
	$v \neq u$ with $\tau_V(v) = \tau_V(u)$, an attribute
	$a \in \mathcal{A}$, and an edge type $\tau_i \in \mathcal{T}_E$ such
	that both edges $(u, a, \tau_i)$ and $(v, a, \tau_i)$ exist in $E$. The
	multiset $\mathcal{M}(a)$ therefore contains at least two copies of
	$(\tau_V(u), \tau_i)$, so
	$\mathrm{mult}_{\mathcal{M}(a)}(\tau_V(u), \tau_i) \geq 2$. Since the
	edge $(u, a, \tau_i)$ exists, the pair
	$(S, \tau') = (\mathcal{M}(a), \tau_i)$ belongs to
	$\mathcal{M}^{(2)}(u)$, and the readout condition is satisfied. Hence
	$\hat{y}(u) = 1$.
	
	\medskip
	\noindent$(\Leftarrow)$ Suppose $\hat{y}(u) = 1$. Then there exists
	$(S, \tau') \in \mathcal{M}^{(2)}(u)$ with
	$\mathrm{mult}_S(\tau_V(u), \tau') \geq 2$. This pair arises from some
	edge $(u, a, \tau') \in E$, with $S = \mathcal{M}(a)$. The condition
	$\mathrm{mult}_{\mathcal{M}(a)}(\tau_V(u), \tau') \geq 2$ means at
	least two edges of type $\tau'$ from entities of type $\tau_V(u)$
	terminate at $a$. Since the graph is \emph{simple}, there is at most one
	edge of type $\tau'$ from any given entity to $a$. Therefore the two or
	more edges must come from \emph{distinct} entities. One is $u$ itself;
	the other is some $v \neq u$ with $\tau_V(v) = \tau_V(u)$. Hence
	$(a, \tau') \in N_{\mathrm{typed}}(u) \cap N_{\mathrm{typed}}(v) \neq
	\emptyset$, so $\mathrm{Dup}(u) = 1$.
	
	\medskip
	Therefore $\hat{y}(u) = \mathrm{Dup}(u)$ for all entities
	$u \in \mathcal{E}$.
\end{proof}

\begin{Remark}[Depth minimality]\label{rem:depth-K21}
	The $2$-layer depth in part~(b) is optimal: no $1$-layer MPNN with
	reverse message passing can compute $\mathrm{Dup}(u)$, even on simple
	typed entity-attribute graphs.  After one layer of reverse message
	passing, entity~$u$ aggregates the multiset
	$\{\!\!\{(h^{(0)}(a),\,\tau) : (u,a,\tau) \in E\}\!\!\}
	= \{\!\!\{(\tau_V(a),\,\tau) : (u,a,\tau) \in E\}\!\!\}$,
	since the reverse messages carry the layer-$0$ attribute embeddings.
	The resulting embedding~$h^{(1)}(u)$ therefore encodes~$\tau_V(u)$
	together with the multiset of (attribute-type, edge-type) pairs
	incident to~$u$---that is, $u$'s own typed neighborhood
	$N_{\mathrm{typed}}(u)$---but contains no information about which
	other entities share those attribute neighbors.  In particular, the
	separation graphs $G_1$ and~$G_2$ from part~(a) satisfy
	$N_{\mathrm{typed}}(u) = \{(a,\tau_1)\}$ in both graphs, so
	$h^{(1)}(u)$ is identical in both, yet
	$\mathrm{Dup}_{G_1}(u) = 1 \neq 0 = \mathrm{Dup}_{G_2}(u)$.
	
Intuitively, computing $\mathrm{Dup}(u)$ requires $2$-hop
information along the bipartite structure: in the first layer,
entity embeddings propagate forward to attribute neighbors,
allowing each attribute to learn the types of its incident
entities; in the second layer, this aggregated fan-in
information flows back to entities via reverse message passing,
enabling each entity to detect whether a same-type competitor
shares any of its attributes.  Neither hop can be eliminated.
\end{Remark}

\subsection{$K_{2,1}$ Detection on Multigraph Entity-Attribute Graphs}

Theorem~\ref{thm:K21-simple} establishes that reverse message passing
alone is necessary and sufficient for computing $\mathrm{Dup}$ on
\emph{simple} typed entity-attribute graphs.  We now consider the
multigraph setting, where parallel edges of the same type between the
same entity--attribute pair are permitted.  Such parallel edges arise
in MDM pipelines that ingest records from multiple sources or time
periods without deduplication: if a Patient entity visits the same
Provider three times, and the claim records are loaded into the
entity-attribute graph without consolidation, three parallel
$\textsf{treatedBy}$ edges result.  Whether the timestamps are
retained as edge features or discarded is a system design choice; in
either case, the underlying graph is a multigraph, and the MPNN must
handle parallel edges correctly.  Directed multigraphs are a common data model in financial transaction
networks~\cite{Gounoue-et-al:TeMP-TraG:2025,
	Egressy-et-al:AAAI:2024}, where Gounoue et al.\ report an average of
$17.92$~edges per node pair in the IBM anti-money-laundering dataset.

In a multigraph, an attribute node receiving two messages with the same
(entity-type, edge-type) signature cannot determine---without additional
information---whether those messages originated from two \emph{distinct}
entities (a genuine shared-attribute signal) or from two \emph{parallel
	edges} of the same entity (a recording artifact). As the next theorem
shows, this is precisely the ambiguity that incoming port numbering
(Adaptation~2a) resolves.

\begin{Theorem}\label{thm:K21-multigraph}
	On typed entity-attribute graphs in which parallel edges are permitted, i.e. not necessarily simple:
	\begin{enumerate}
		\item[(a)] \textbf{(Necessity of port numbering.)} There exist such
		graphs on which every MPNN with reverse message passing but without
		port numbering fails to compute $\mathrm{Dup}(u)$.
		\item[(b)] \textbf{(Sufficiency of incoming port numbering.)} There
		exists a $2$-layer MPNN with reverse message passing, incoming port
		numbering (Adaptation~2a), and sum aggregation that computes
		$\mathrm{Dup}(u)$ for all entities $u \in \mathcal{E}$ in any such
		graph.
	\end{enumerate}
\end{Theorem}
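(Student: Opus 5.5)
For part~(a) I will exhibit a pair of multigraphs on which any MPNN with reverse message passing but no port numbering computes identical embeddings at a node $u$ whose $\mathrm{Dup}$ values differ. In the positive instance $G_1$, take the graph $G_1$ of Figure~\ref{fig:k21-separation}: entities $u,v$ of type $\sigma$, one attribute $a$ of type $\alpha$, and edges $(u,a,\tau_1)$, $(v,a,\tau_1)$. Then $\mathrm{Dup}_{G_1}(u)=1$. In the negative instance $G_2$, keep only $u$ and $a$, but join them by two parallel edges $(u,a,\tau_1)$. Then $\mathrm{Dup}_{G_2}(u)=0$.

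The indistinguishability argument is an induction on $k$ with a joint hypothesis. At every layer, the embedding of $u$ is the same in both graphs, the embedding of $v$ in $G_1$ equals that of $u$, and the embedding of $a$ is the same in both graphs.

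The base case holds because all entity nodes share the initial feature $\sigma$ and $a$ has the initial feature $\alpha$ in both graphs. For the inductive step, in $G_1$ node $a$ receives the incoming multiset $\{\!\!\{(h^{(k-1)}(u),\tau_1),(h^{(k-1)}(v),\tau_1)\}\!\!\}$. In $G_2$ it receives two copies of $(h^{(k-1)}(u),\tau_1)$. By the hypothesis these multisets coincide, and $a$'s outgoing multiset is empty in both graphs. Each entity receives an empty incoming multiset and the single reverse message $(h^{(k-1)}(a),\tau_1)$ in $G_1$. In $G_2$, $u$ instead receives that reverse message twice. This is the step I expect to be the main obstacle: $u$'s reverse multiset has cardinality $1$ in $G_1$ and $2$ in $G_2$, so sum aggregation could separate them.

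I will remove this asymmetry by making $u$'s out-degree equal in both graphs. Add a second attribute $b$ of type $\alpha$ with edges $(u,b,\tau_1)$ and $(v,b,\tau_1)$ in $G_1$. In $G_2$, replace the structure by two attributes $a,b$ each joined to $u$ by two parallel $\tau_1$-edges. This fails too, since $u$ now has out-degree $4$ in $G_2$ against $2$ in $G_1$.

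The cleaner fix is to let $G_2$ contain two entities as well, $u$ and $v$ of type $\sigma$, with $u$ doubled into $a$ and $v$ doubled into $b$. Then every entity in both graphs must have the same out-degree, and every attribute the same in-degree. To achieve this, set $G_1$ to have $u,v$ with edges $(u,a,\tau_1)$ twice and $(v,a,\tau_1)$ twice, so $a$ has in-degree $4$ with distinct sources. Set $G_2$ to have $u$ sending two edges to $a$ and two to $b$, and $v$ likewise sending two edges to $a$ and two to $b$. This is still wrong, because there $u$ and $v$ share $a$.

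Ultimately I will choose the pair so that both graphs are ``$2$-regular'' in multiplicities in the bipartite sense. In $G_1$, take entities $u,v$ and attributes $a,b$ with single edges $u\to a$, $u\to b$, $v\to a$, $v\to b$; this gives shared attributes. In $G_2$, take $u$ with two parallel edges to $a$ and $v$ with two parallel edges to $b$; there are no shared attributes. In both graphs every entity has two outgoing $\tau_1$-edges and every attribute two incoming $\tau_1$-edges from $\sigma$-entities. The induction then shows that all entity embeddings agree and all attribute embeddings agree, so $\hat y(u)$ coincides while $\mathrm{Dup}_{G_1}(u)=1\neq 0=\mathrm{Dup}_{G_2}(u)$.

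For part~(b) I will reuse the construction from Theorem~\ref{thm:K21-simple}(b), changing only the layer-1 messages. Each message becomes $(\tau_V(u),\tau,p_{\mathrm{in}}(e))$. Since parallel edges share the same incoming port and distinct sources receive distinct ports (properties (i)--(ii)), the attribute $a$ can compute from the injective encoding of its incoming multiset the number $D_a(\sigma,\tau')$ of \emph{distinct} port values carrying the signature $(\sigma,\tau')$. This number equals the number of distinct $\sigma$-entities joined to $a$ by a $\tau'$-edge. It is also invariant under the choice of port numbering, so the construction works under every valid assignment.

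Layer~2 is the same reverse aggregation as before. The readout tests whether some received pair $(S,\tau')$ satisfies $D_S(\tau_V(u),\tau')\ge 2$. The correctness argument mirrors the simple case, with distinct port values replacing the simplicity assumption in the $(\Leftarrow)$ direction.
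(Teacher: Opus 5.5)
Your proposal is correct and follows the paper's approach. In part~(a), your final pair (four single $\tau_1$-edges between $\{u,v\}$ and $\{a,b\}$ in $G_1$, versus $u$ doubled into $a$ and $v$ doubled into $b$ in $G_2$) is exactly the paper's separation pair, with the same induction that all entity embeddings and all attribute embeddings coincide. In part~(b), adding $p_{\mathrm{in}}$ to the layer-1 messages and thresholding the number of distinct ports per $(\sigma,\tau')$ signature is the paper's distinct-source-count construction; in a write-up you should drop the abandoned intermediate attempts and state the induction directly for the final pair.
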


\begin{proof}[Proof of part (a): necessity]
	We construct two multigraph typed entity-attribute graphs that are
	indistinguishable by any MPNN with reverse message passing but without
	port numbering, yet differ in their $\mathrm{Dup}$ labels; see Figure~\ref{fig:k21-multigraph-separation}.
	
	Fix an entity type $\sigma \in \mathcal{T}_\mathcal{E}$, an attribute
	type $\alpha_1 \in \mathcal{T}_\mathcal{A}$, and an edge type
	$\tau_1 \in \mathcal{T}_E$. All entity nodes are initialized with the
	same embedding $h^{(0)} = \sigma$ and all attribute nodes with
	$h^{(0)} = \alpha_1$.
	
	In $G_1$, the \emph{positive instance}, the entity set is
	$\{u, v\}$ with $\tau_V(u) = \tau_V(v) = \sigma$; the attribute
	set is $\{a_1, a_2\}$ with $\tau_V(a_1) = \tau_V(a_2) = \alpha_1$;
	and the edge set consists of four distinct edges
	$(u, a_1, \tau_1)$, $(v, a_1, \tau_1)$, $(u, a_2, \tau_1)$, and
	$(v, a_2, \tau_1)$.
	Each entity has out-degree $2$ and each attribute has in-degree $2$.
	Since $(a_1, \tau_1) \in N_{\mathrm{typed}}(u) \cap N_{\mathrm{typed}}(v)$,
	we have $\mathrm{Dup}_{G_1}(u) = 1$.
	
	In $G_2$, the \emph{negative instance}, the entity and attribute sets
	are the same, but the edges consist of two parallel edges from $u$ to
	$a_1$ of type $\tau_1$, and two parallel edges from $v$ to $a_2$ of type
	$\tau_1$. Each entity again has out-degree $2$ (counting multiplicity)
	and each attribute has in-degree $2$. Entity $u$ connects only to $a_1$
	and $v$ connects only to $a_2$, so $N_{\mathrm{typed}}(u) \cap
	N_{\mathrm{typed}}(v) = \emptyset$ and $\mathrm{Dup}_{G_2}(u) = 0$.
	
	\begin{figure}[htbp]
		\centering
		\begin{tikzpicture}[
			node/.style  = {circle, draw, thick,
				minimum size=9mm, inner sep=1pt, font=\small},
			tlbl/.style  = {font=\scriptsize, text=gray!75!black},
			>={Stealth[length=6pt, width=4pt]},
			semithick
			]
			
			\node[node, label={[tlbl]left:$\sigma$}]    (u1) at (0.0,  1.4) {$u$};
			\node[node, label={[tlbl]left:$\sigma$}]    (v1) at (0.0, -1.4) {$v$};
			\node[node, label={[tlbl]right:$\alpha_1$}] (A1) at (3.5,  0.7) {$a_1$};
			\node[node, label={[tlbl]right:$\alpha_1$}] (A2) at (3.5, -0.7) {$a_2$};
			
			\draw[->] (u1) to node[above, sloped, font=\scriptsize]{$\tau_1$} (A1);
			\draw[->] (v1) to node[below, sloped, font=\scriptsize]{$\tau_1$} (A1);
			\draw[->] (u1) to node[above, sloped, font=\scriptsize]{$\tau_1$} (A2);
			\draw[->] (v1) to node[below, sloped, font=\scriptsize]{$\tau_1$} (A2);
			
			\node[font=\small\itshape] at (1.75, -2.5)
			{$G_1$:\enspace$\mathrm{Dup}(u)=1$};
			
			\node[node, label={[tlbl]left:$\sigma$}]    (u2) at (6.5,  1.4) {$u$};
			\node[node, label={[tlbl]left:$\sigma$}]    (v2) at (6.5, -1.4) {$v$};
			\node[node, label={[tlbl]right:$\alpha_1$}] (B1) at (10.0,  0.7) {$a_1$};
			\node[node, label={[tlbl]right:$\alpha_1$}] (B2) at (10.0, -0.7) {$a_2$};
			
			\draw[->] (u2) to[bend left=12]
			node[above, sloped, font=\scriptsize]{$\tau_1$} (B1);
			\draw[->] (u2) to[bend right=12] (B1);
			
			\draw[->] (v2) to[bend left=12]
			node[above, sloped, font=\scriptsize]{$\tau_1$} (B2);
			\draw[->] (v2) to[bend right=12] (B2);
			
			\node[font=\small\itshape] at (8.25, -2.5)
			{$G_2$:\enspace$\mathrm{Dup}(u)=0$};
			
		\end{tikzpicture}
		\caption{Separation graphs for the necessity proof of
			Theorem~\ref{thm:K21-multigraph}.}
		\label{fig:k21-multigraph-separation}
	\end{figure}
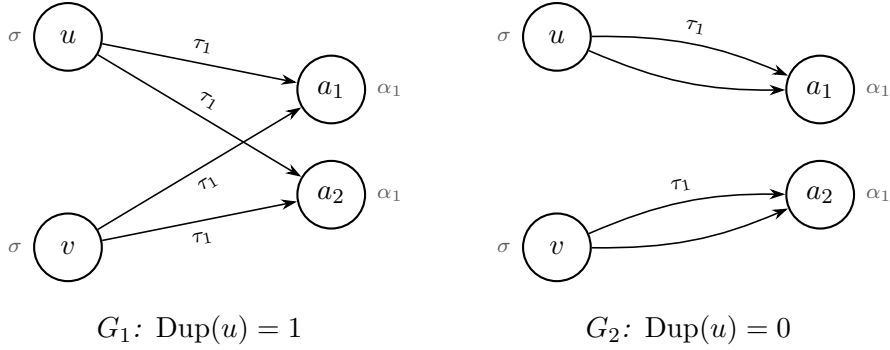
	
	We prove by induction on $k$ that at every depth $k \geq 0$, all entity
	embeddings equal a common value $\bar{h}_{\mathrm{ent}}^{(k)}$ and all
	attribute embeddings equal a common value $\bar{h}_a^{(k)}$,
	simultaneously in $G_1$ and $G_2$.
	
	\textit{Base case} ($k=0$): all entity embeddings equal $\sigma$ and all
	attribute embeddings equal $\alpha_1$ in both graphs.
	
	\textit{Inductive step}: Suppose all entity embeddings equal
	$\bar{h}_{\mathrm{ent}}$ and all attribute embeddings equal $\bar{h}_a$
	at depth $k-1$ in both graphs.
	
	\textit{Forward aggregation at each attribute}: In $G_1$, attribute $a_i$
	receives one message from $u$ and one from $v$, each of the form
	$(\bar{h}_{\mathrm{ent}}, \tau_1)$, yielding the incoming message
	multiset $\bigl\{\!\!\bigl\{(\bar{h}_{\mathrm{ent}}, \tau_1),\,
	(\bar{h}_{\mathrm{ent}}, \tau_1)\bigr\}\!\!\bigr\}$.
	In $G_2$, attribute $a_1$ receives two messages from two parallel edges
	from $u$, each also of the form $(\bar{h}_{\mathrm{ent}}, \tau_1)$.
	Without port numbering, these two multisets are identical, so
	$h^{(k)}(a_i)$ is the same in both graphs (and the same for $a_1$ and
	$a_2$ by symmetry).
	
	\textit{Reverse aggregation at entity $u$}: In $G_1$, entity $u$ has one
	edge to $a_1$ and one to $a_2$, receiving the reverse-message multiset
	$\bigl\{\!\!\bigl\{(\bar{h}_a, \tau_1),\, (\bar{h}_a,
	\tau_1)\bigr\}\!\!\bigr\}$.
	In $G_2$, entity $u$ has two parallel edges to $a_1$, receiving the
	reverse-message multiset $\bigl\{\!\!\bigl\{(\bar{h}_a, \tau_1),\,
	(\bar{h}_a, \tau_1)\bigr\}\!\!\bigr\}$.
	These are identical, so $h^{(k)}(u)$ is the same in both graphs.
	
	By induction, $h^{(K)}(u)$ is the same in $G_1$ and $G_2$ for every
	depth $K$, and so the predicted label $\hat{y}(u)$ takes the same value in both graphs. Since $\mathrm{Dup}_{G_1}(u) = 1 \neq 0 =
	\mathrm{Dup}_{G_2}(u)$, no readout function can distinguish the two
	labels.
\end{proof}

\begin{proof}[Proof of part (b): sufficiency]
	We construct an explicit $2$-layer MPNN with reverse message passing,
	incoming port numbering (Adaptation~2a), and sum aggregation that
	computes $\mathrm{Dup}(u)$ for all $u \in \mathcal{E}$ in any multigraph
	typed entity-attribute graph.
	
	\medskip
	\noindent\textsc{The key challenge.}
	In a simple typed entity-attribute graph, the multiplicity
	$\mathrm{mult}_{\mathcal{M}(a)}(\sigma, \tau_i) \geq 2$ at an attribute
	$a$ guarantees at least two distinct entities of type $\sigma$ connected
	to $a$ via edge type $\tau_i$, because each entity contributes at most
	one such edge. In a multigraph, this fails: a single entity may have
	multiple parallel edges of type $\tau_i$ to the same $a$, inflating the
	raw multiplicity without implying a second entity. Incoming port
	numbering resolves this. By Adaptation~2a, all parallel edges from the
	same source to the same destination share the same incoming port number,
	while edges from \emph{distinct} sources to the same destination receive
	\emph{distinct} incoming port numbers. Hence, the number of distinct
	incoming port values among messages of a given $(\sigma, \tau_i)$-signature
	at $a$ equals precisely the number of distinct entities of type $\sigma$
	connected to $a$ via edge type $\tau_i$.
	
	\medskip
	\noindent\textsc{Layer 0 (Initialization).}
	Set $h^{(0)}(v) = \tau_V(v)$ for all $v \in V$.
	
	\medskip
	\noindent\textsc{Layer 1 (Forward: entity $\to$ attribute).}
	For each edge $(u, a, \tau) \in E$, define the message:
	\[
	\mathrm{msg}^{(1)}(u, a, \tau)
	= \bigl(\tau_V(u),\; \tau,\; p_{\mathrm{in}}(u, a, \tau)\bigr).
	\]
	Each attribute $a$ aggregates the multiset of incoming messages:
	\[
	\mathcal{M}(a) = \bigl\{\!\!\bigl\{
	\mathrm{msg}^{(1)}(u, a, \tau) : (u, a, \tau) \in E
	\bigr\}\!\!\bigr\}.
	\]
	By \cite[Lemma 5]{Xu-et-al:ICLR:2019}, $h^{(1)}(a)$ injectively encodes
	$\mathcal{M}(a)$. From this encoding, an MLP applied to $h^{(1)}(a)$
	can compute, for each entity type $\sigma \in \mathcal{T}_\mathcal{E}$
	and edge type $\tau_i \in \mathcal{T}_E$, the
	\emph{distinct-source count}:
	\[
	\mathrm{dsrc}_{\mathcal{M}(a)}(\sigma, \tau_i)
	= \bigl|\bigl\{ p \in \mathbb{N} :
	(\sigma, \tau_i, p) \in \mathcal{M}(a) \bigr\}\bigr|,
	\]
	which, by the port numbering convention, equals the number of distinct
	entities of type $\sigma$ connected to $a$ via edge type $\tau_i$.
	
	\medskip
	\noindent\textsc{Layer 2 (Reverse: attribute $\to$ entity).}
	For each edge $(u, a, \tau) \in E$, define the reverse message:
	\[
	\mathrm{msg}^{(2)}(u, a, \tau)
	= \bigl(h^{(1)}(a),\; \tau\bigr).
	\]
	Each entity $u$ aggregates the multiset of reverse messages:
	\[
	\mathcal{M}^{(2)}(u) = \bigl\{\!\!\bigl\{
	\mathrm{msg}^{(2)}(u, a, \tau) : (u, a, \tau) \in E
	\bigr\}\!\!\bigr\}.
	\]
	Each element of $\mathcal{M}^{(2)}(u)$ is a pair $(S, \tau')$, where
	$S$ is the injectively encoded multiset $\mathcal{M}(a)$ for the
	attribute $a$ at the other end of the edge, and $\tau'$ is the edge
	type. Again by \cite[Lemma 5]{Xu-et-al:ICLR:2019}, $h^{(2)}(u)$
	injectively encodes $\mathcal{M}^{(2)}(u)$.
	
	\medskip
	\noindent\textsc{Readout.}
	From $h^{(2)}(u)$, an MLP computes:
	\[
	\hat{y}(u) = \mathbf{1}\Bigl[\,\exists\; (S, \tau') \in
	\mathcal{M}^{(2)}(u) \;\text{such that}\;
	\mathrm{dsrc}_S\bigl(\tau_V(u),\, \tau'\bigr) \geq 2 \,\Bigr].
	\]
	
	\medskip
	\noindent\textsc{Correctness.}
	We show $\hat{y}(u) = \mathrm{Dup}(u)$ for all entities $u \in \mathcal{E}$.
	
	\medskip
	\noindent$(\Rightarrow)$ Suppose $\mathrm{Dup}(u) = 1$. Then there exist
	$v \neq u$ with $\tau_V(v) = \tau_V(u)$, an attribute
	$a \in \mathcal{A}$, and an edge type $\tau_i \in \mathcal{T}_E$ such
	that both edges $(u, a, \tau_i)$ and $(v, a, \tau_i)$ exist in $E$.
	Since $u \neq v$ are distinct sources into $a$, by property~(ii) of
	Adaptation~2a they receive distinct incoming port numbers at $a$, so
	$\mathrm{dsrc}_{\mathcal{M}(a)}(\tau_V(u), \tau_i) \geq 2$.
	Since the edge $(u, a, \tau_i)$ exists, the pair
	$(S, \tau') = (\mathcal{M}(a), \tau_i)$ belongs to
	$\mathcal{M}^{(2)}(u)$, and the readout condition is satisfied.
	Hence $\hat{y}(u) = 1$.
	
	\medskip
	\noindent$(\Leftarrow)$ Suppose $\hat{y}(u) = 1$. Then there exists
	$(S, \tau') \in \mathcal{M}^{(2)}(u)$ with
	$\mathrm{dsrc}_S(\tau_V(u), \tau') \geq 2$. This pair arises from some
	edge $(u, a, \tau') \in E$, with $S = \mathcal{M}(a)$. The condition
	$\mathrm{dsrc}_{\mathcal{M}(a)}(\tau_V(u), \tau') \geq 2$ means at
	least two distinct values of $p_{\mathrm{in}}$ appear among messages
	of signature $(\tau_V(u), \tau')$ in $\mathcal{M}(a)$. By
	property~(ii) of Adaptation~2a, distinct incoming port values
	correspond to distinct source entities. Hence there are at least two
	distinct entities of type $\tau_V(u)$ connected to $a$ via edge type
	$\tau'$. One is $u$ itself; the other is some $v \neq u$ with
	$\tau_V(v) = \tau_V(u)$. Therefore
	$(a, \tau') \in N_{\mathrm{typed}}(u) \cap N_{\mathrm{typed}}(v) \neq
	\emptyset$, so $\mathrm{Dup}(u) = 1$.
	
	\medskip
	Therefore $\hat{y}(u) = \mathrm{Dup}(u)$ for all entities 
	$u \in \mathcal{E}$.
\end{proof}

	On simple typed entity-attribute graphs, each entity contributes at most
	one edge of type $\tau_i$ to any given attribute, so all parallel edges
	from the same source are absent and every source has a unique port.
	Consequently, $\mathrm{dsrc}_{\mathcal{M}(a)}(\sigma, \tau_i) =
	\mathrm{mult}_{\mathcal{M}(a)}(\sigma, \tau_i)$, and the construction
	above reduces exactly to the sufficiency proof of
	Theorem~\ref{thm:K21-simple}.

	The sufficiency construction uses only incoming port numbers
	$p_{\mathrm{in}}$, at Layer~1, and does not require outgoing port
	numbers at any layer. For $K_{2,1}$ detection on multigraph typed
	entity-attribute graphs, the minimal sufficient architecture is
	therefore reverse message passing together with incoming port numbering
	alone.

\begin{Remark}[Tightness]\label{rem:tightness-K21-multigraph}
	Theorem~\ref{thm:K21-multigraph} is tight in several respects.
	
	\emph{Universality of both directions.}  The necessity in part~(a)
	holds for every MPNN with reverse message passing but without port
	numbering: no such architecture, regardless of depth or width, can
	compute $\mathrm{Dup}$ on all multigraph typed entity-attribute
	graphs.  The sufficiency in part~(b) holds for every multigraph
	typed entity-attribute graph: the $2$-layer construction computes
	$\mathrm{Dup}(u)$ correctly on all such graphs, under every valid
	incoming port numbering.
	
	\emph{Depth minimality.}  Two layers are necessary, by the same
	information-flow argument as in Remark~\ref{rem:depth-K21}.  After
	one layer of reverse message passing---even with incoming port
	numbering---entity~$u$ receives reverse messages carrying the
	layer-$0$ attribute embeddings, which encode only attribute types.
	The fan-in information at each attribute, which is essential for
	detecting shared attributes, is computed at layer~$1$ via forward
	aggregation and only becomes available to entities at layer~$2$ via
	reverse aggregation.
	
	\emph{Outgoing port numbering is unnecessary.}  The sufficiency
	proof uses only incoming port numbers in the layer-$1$ forward
	messages; outgoing port numbers appear nowhere in the construction.
	For $K_{2,1}$ detection, the sole role of port numbering is to let
	each attribute count its \emph{distinct} entity predecessors of each
	type, which is precisely the function of incoming ports.
	
	\emph{Role of incoming port numbering relative to simplicity.}
	Comparing Theorems~\ref{thm:K21-simple}
	and~\ref{thm:K21-multigraph}, incoming port numbering plays exactly
	the role that the simplicity assumption plays on simple graphs.  On
	a simple typed entity-attribute graph, each entity contributes at
	most one edge of a given type to any attribute, so the raw message
	multiplicity at an attribute equals the number of distinct entity
	sources.  On a multigraph, parallel edges from a single entity
	inflate the multiplicity without implying a second source.  Incoming
	port numbering restores the ability to count distinct sources: by
	property~(ii) of Adaptation~2a, edges from distinct entities receive
	distinct incoming ports, while by property~(i), parallel edges from
	the same entity share the same port.  Thus, incoming port numbering
	is the minimal adaptation that compensates for the loss of the
	simplicity assumption.
\end{Remark}

\subsection{$K_{2,r}$ Detection on Simple Entity-Attribute Graphs}
\label{sec:K2r-simple}

This section contains the main result of the paper. The $K_{2,r}$ co-reference predicate for $r \ge 2$ captures the stronger matching criterion that two entities of the same type share \emph{at least $r$} distinct typed attribute values.  While $K_{2,1}$ detection requires
only reverse message passing (Theorem~\ref{thm:K21-simple}), we show that
$K_{2,r}$ for $r \ge 2$ is provably harder: ego IDs are necessary even on
simple typed entity-attribute graphs, even when reverse message passing
and full directed multigraph port numbering are both available.

Before stating this result in full generality
(Theorem~\ref{thm:K2r-simple}), we illustrate the key difficulty through the
base case $r = 2$, where the construction is small enough to trace
completely by hand.

\begin{Example}[$K_{2,2}$ indistinguishability]
	\label{ex:K22-indistinguishable}
	Fix entity type $\sigma \in \mathcal{T}_{\mathcal{E}}$, two attribute types
	$\alpha_1, \alpha_2 \in \mathcal{T}_{\mathcal{A}}$, and two edge types
	$\tau_1, \tau_2 \in \mathcal{T}_E$, where edges to attributes of type
	$\alpha_i$ carry edge type $\tau_i$.  All entity nodes are initialised
	with the same feature vector (encoding type~$\sigma$); attribute nodes of
	type~$\alpha_i$ are initialised with the feature vector encoding
	type~$\alpha_i$.  We construct two simple typed entity-attribute graphs
	$G_1$ and $G_2$ that have identical degree statistics yet differ in their
	$\mathrm{Dup}_2$ label at entity~$u$; see Figure~\ref{fig:K22-example}.
	
	\begin{figure}[t]
		\centering
		\begin{tikzpicture}[
			node/.style  = {circle, draw, thick,
				minimum size=9mm, inner sep=1pt, font=\small},
			tlbl/.style  = {font=\scriptsize, text=gray!75!black},
			>={Stealth[length=6pt, width=4pt]},
			semithick
			]
			
			\node[font=\small\bfseries] at (2.0, 3.6)
			{$G_1\colon\mathrm{Dup}_2(u)=1$};
			
			\node[node, label={[tlbl]left:$\sigma$}] (G1u) at (0,  2.4) {$u$};
			\node[node, label={[tlbl]left:$\sigma$}] (G1v) at (0,  1.0) {$v$};
			\node[node, label={[tlbl]left:$\sigma$}] (G1w) at (0, -1.0) {$w$};
			\node[node, label={[tlbl]left:$\sigma$}] (G1x) at (0, -2.4) {$x$};
			
			\node[node, label={[tlbl]right:$\alpha_1$}] (G1a1) at (4,  2.4) {$a_1$};
			\node[node, label={[tlbl]right:$\alpha_2$}] (G1a2) at (4,  1.0) {$a_2$};
			\node[node, label={[tlbl]right:$\alpha_1$}] (G1a3) at (4, -1.0) {$a_3$};
			\node[node, label={[tlbl]right:$\alpha_2$}] (G1a4) at (4, -2.4) {$a_4$};
			
			\draw[->]         (G1u) to (G1a1);
			\draw[->, dashed] (G1u) to (G1a2);
			\draw[->]         (G1v) to (G1a1);
			\draw[->, dashed] (G1v) to (G1a2);
			\draw[->]         (G1w) to (G1a3);
			\draw[->, dashed] (G1w) to (G1a4);
			\draw[->]         (G1x) to (G1a3);
			\draw[->, dashed] (G1x) to (G1a4);
			
			\node[font=\small\bfseries] at (9.5, 3.6)
			{$G_2\colon\mathrm{Dup}_2(u)=0$};
			
			\node[node, label={[tlbl]left:$\sigma$}] (G2u)  at (7.5,  2.4) {$u$};
			\node[node, label={[tlbl]left:$\sigma$}] (G2v1) at (7.5,  0.8) {$v_1$};
			\node[node, label={[tlbl]left:$\sigma$}] (G2v2) at (7.5, -0.8) {$v_2$};
			\node[node, label={[tlbl]left:$\sigma$}] (G2v3) at (7.5, -2.4) {$v_3$};
			
			\node[node, label={[tlbl]right:$\alpha_1$}] (G2a1) at (11.5,  2.4) {$a_1$};
			\node[node, label={[tlbl]right:$\alpha_2$}] (G2a2) at (11.5,  0.8) {$a_2$};
			\node[node, label={[tlbl]right:$\alpha_1$}] (G2a3) at (11.5, -0.8) {$a_3$};
			\node[node, label={[tlbl]right:$\alpha_2$}] (G2a4) at (11.5, -2.4) {$a_4$};
			
			\draw[->]         (G2u)  to (G2a1);
			\draw[->, dashed] (G2u)  to (G2a2);
			\draw[->]         (G2v1) to (G2a1);
			\draw[->, dashed] (G2v1) to (G2a4);
			\draw[->]         (G2v2) to (G2a3);
			\draw[->, dashed] (G2v2) to (G2a2);
			\draw[->]         (G2v3) to (G2a3);
			\draw[->, dashed] (G2v3) to (G2a4);
			
			\draw[->]         (4.8, -3.6) -- (5.8, -3.6)
			node[right, font=\scriptsize] {$\tau_1$-edge};
			\draw[->, dashed] (7.3, -3.6) -- (8.3, -3.6)
			node[right, font=\scriptsize] {$\tau_2$-edge};
			
		\end{tikzpicture}
		\caption{The simple typed entity-attribute graphs $G_1$ (left) and
			$G_2$ (right) from Example~\ref{ex:K22-indistinguishable}.}
		\label{fig:K22-example}
	\end{figure}
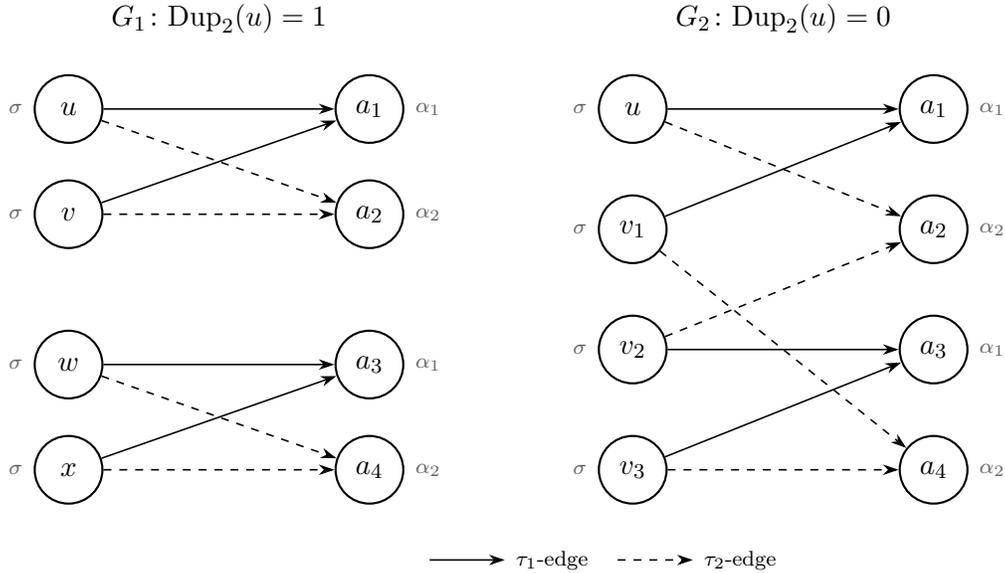

	Let $G_1$ be the typed graph consisting of entities $u, v, w, x$, all of type~$\sigma$, attributes $a_1, a_3$ of
	type~$\alpha_1$,  $a_2, a_4$ of type~$\alpha_2$, and the following typed edges: 
$(u, a_1, \tau_1)$, $(u, a_2, \tau_2)$, $(v, a_1, \tau_1)$, $(v, a_2, \tau_2)$, $(w, a_3, \tau_1)$, $(w, a_4, \tau_2)$, $(x, a_3, \tau_1)$, $(x, a_4, \tau_2)$. 
	Every entity has out-degree~$2$ and every attribute has in-degree~$2$.
	Since $N_{\mathrm{typed}}(u) = \{(a_1, \tau_1),\,(a_2, \tau_2)\} =
	N_{\mathrm{typed}}(v)$, we have $|N_{\mathrm{typed}}(u) \cap
	N_{\mathrm{typed}}(v)| = 2$, so $\mathrm{Dup}_2(u) = 1$.
	
Let $G_2$ be the typed graph consisting of entities $u, v_1, v_2, v_3$, all of type~$\sigma$, the same four attribute
	nodes, and the following typed edges: $(u, a_1, \tau_1)$, $(u, a_2, \tau_2)$, $(v_1, a_1, \tau_1)$, $(v_1, a_4, \tau_2)$, $(v_2, a_3, \tau_1)$, $(v_2, a_2, \tau_2)$, $(v_3, a_3, \tau_1)$, $(v_3, a_4, \tau_2)$. 
	Again, every entity has out-degree~$2$ and every attribute has
	in-degree~$2$.  The typed neighborhoods of $u$'s co-type entities are:
	\begin{align*}
		N_{\mathrm{typed}}(v_1) &= \{(a_1, \tau_1),\,(a_4, \tau_2)\}, \\
		N_{\mathrm{typed}}(v_2) &= \{(a_3, \tau_1),\,(a_2, \tau_2)\}, \\
		N_{\mathrm{typed}}(v_3) &= \{(a_3, \tau_1),\,(a_4, \tau_2)\}.
	\end{align*}
	Entity~$u$ shares exactly one typed neighbor with $v_1$ (namely
	$(a_1,\tau_1)$) and exactly one with $v_2$ (namely $(a_2,\tau_2)$), and
	none with $v_3$.  No single entity shares two or more typed neighbors with
	$u$, so $\mathrm{Dup}_2(u) = 0$.
	
	Both graphs are depicted in Figure~\ref{fig:K22-example}.  The two graphs
	have an identical \emph{degree-type profile}: 4 entities of
	type~$\sigma$, 2 attributes of type~$\alpha_1$, 2 attributes of
	type~$\alpha_2$, and 4 edges of each type, with every entity having
	out-degree~2 and every attribute having in-degree~2.  
	
	\textsc{Port numbering.} We exhibit specific valid port numberings on $G_1$ and $G_2$ under which the two graphs are indistinguishable; see Figure~\ref{fig:K22-example-ports}. 
	
	In $G_1$, the port numbering is chosen in a straightforward manner, as shown in Figure~\ref{fig:K22-example-ports}.  This is a valid port numbering for $G_1$ because every entity assigns distinct $p_{\mathrm{out}}$ values to its two outgoing edges, which go to distinct attributes, and every attribute assigns distinct $p_{\mathrm{in}}$ values
	to its two incoming edges, which come from distinct entities.
	
	Given this port numbering for $G_1$, the port numbering for $G_2$ is obtained by drawing the computation trees for $G_1$ and $G_2$ rooted at node $u$.  We ask: can we assign port numbers to $G_2$ so that these two computation trees are isomorphic, where isomorphism includes taking the edge labels of the computation trees -- namely the edge type and port numbers --- into account? The answer is yes, and this yields the port numbering for $G_2$ shown in Figure~\ref{fig:K22-example-ports}. 
	
	\begin{figure}[t]
		\centering
		\begin{tikzpicture}[
			node/.style  = {circle, draw, thick,
				minimum size=9mm, inner sep=1pt, font=\small},
			tlbl/.style  = {font=\scriptsize, text=gray!75!black},
			pout/.style  = {font=\scriptsize, red},
			pin/.style   = {font=\scriptsize, blue},
			>={Stealth[length=6pt, width=4pt]},
			semithick
			]
			
			\node[font=\small\bfseries] at (2.0, 3.6)
			{$G_1\colon\mathrm{Dup}_2(u)=1$};
			
			\node[node, label={[tlbl]left:$\sigma$}] (G1u) at (0,  2.4) {$u$};
			\node[node, label={[tlbl]left:$\sigma$}] (G1v) at (0,  1.0) {$v$};
			\node[node, label={[tlbl]left:$\sigma$}] (G1w) at (0, -1.0) {$w$};
			\node[node, label={[tlbl]left:$\sigma$}] (G1x) at (0, -2.4) {$x$};
			
			\node[node, label={[tlbl]right:$\alpha_1$}] (G1a1) at (4,  2.4) {$a_1$};
			\node[node, label={[tlbl]right:$\alpha_2$}] (G1a2) at (4,  1.0) {$a_2$};
			\node[node, label={[tlbl]right:$\alpha_1$}] (G1a3) at (4, -1.0) {$a_3$};
			\node[node, label={[tlbl]right:$\alpha_2$}] (G1a4) at (4, -2.4) {$a_4$};
			
			\draw[->] (G1u) to
			node[pos=0.15, above, pout] {$1$}
			node[pos=0.85, above, pin]  {$1$}
			(G1a1);
			
			\draw[->, dashed] (G1u) to
			node[pos=0.15, sloped, below, pout] {$2$}
			node[pos=0.85, sloped, above, pin]  {$1$}
			(G1a2);
			
			\draw[->] (G1v) to
			node[pos=0.05, sloped, above, pout] {$1$}
			node[pos=0.85, sloped, below, pin]  {$2$}
			(G1a1);
			
			\draw[->, dashed] (G1v) to
			node[pos=0.15, below, pout] {$2$}
			node[pos=0.85, below, pin]  {$2$}
			(G1a2);
			
			\draw[->] (G1w) to
			node[pos=0.15, above, pout] {$1$}
			node[pos=0.85, above, pin]  {$1$}
			(G1a3);
			
			\draw[->, dashed] (G1w) to
			node[pos=0.15, sloped, below, pout] {$2$}
			node[pos=0.85, sloped, above, pin]  {$1$}
			(G1a4);
			
			\draw[->] (G1x) to
			node[pos=0.05, sloped, above, pout] {$1$}
			node[pos=0.85, sloped, below, pin]  {$2$}
			(G1a3);
			
			\draw[->, dashed] (G1x) to
			node[pos=0.15, below, pout] {$2$}
			node[pos=0.85, below, pin]  {$2$}
			(G1a4);
			
			\node[font=\small\bfseries] at (9.5, 3.6)
			{$G_2\colon\mathrm{Dup}_2(u)=0$};
			
			\node[node, label={[tlbl]left:$\sigma$}] (G2u)  at (7.5,  2.4) {$u$};
			\node[node, label={[tlbl]left:$\sigma$}] (G2v1) at (7.5,  0.8) {$v_1$};
			\node[node, label={[tlbl]left:$\sigma$}] (G2v2) at (7.5, -0.8) {$v_2$};
			\node[node, label={[tlbl]left:$\sigma$}] (G2v3) at (7.5, -2.4) {$v_3$};
			
			\node[node, label={[tlbl]right:$\alpha_1$}] (G2a1) at (11.5,  2.4) {$a_1$};
			\node[node, label={[tlbl]right:$\alpha_2$}] (G2a2) at (11.5,  0.8) {$a_2$};
			\node[node, label={[tlbl]right:$\alpha_1$}] (G2a3) at (11.5, -0.8) {$a_3$};
			\node[node, label={[tlbl]right:$\alpha_2$}] (G2a4) at (11.5, -2.4) {$a_4$};
			
			\draw[->] (G2u) to
			node[pos=0.15, above, pout] {$1$}
			node[pos=0.85, above, pin]  {$1$}
			(G2a1);
			
			\draw[->, dashed] (G2u) to
			node[pos=0.15, sloped, below, pout] {$2$}
			node[pos=0.85, sloped, above, pin]  {$1$}
			(G2a2);
			
			\draw[->] (G2v1) to
			node[pos=0.15, sloped, below, pout] {$1$}
			node[pos=0.85, sloped, below, pin]  {$2$}
			(G2a1);
			
			\draw[->, dashed] (G2v1) to
			node[pos=0.12, sloped, above, pout] {$2$}
			node[pos=0.88, sloped, below, pin]  {$2$}
			(G2a4);
			
			\draw[->] (G2v2) to
			node[pos=0.15, below, pout] {$1$}
			node[pos=0.85, above, pin]  {$2$}
			(G2a3);
			
			\draw[->, dashed] (G2v2) to
			node[pos=0.10, sloped, above, pout] {$2$}
			node[pos=0.85, sloped, below, pin]  {$2$}
			(G2a2);
			
			\draw[->] (G2v3) to
			node[pos=0.15, sloped, above, pout] {$1$}
			node[pos=0.85, sloped, below, pin]  {$1$}
			(G2a3);
			
			\draw[->, dashed] (G2v3) to
			node[pos=0.15, below, pout] {$2$}
			node[pos=0.85, below, pin]  {$1$}
			(G2a4);
			
			\draw[->]        (2.2, -3.6) -- (3.2, -3.6)
			node[right, font=\scriptsize] {$\tau_1$-edge (solid)};
			\draw[->, dashed] (2.2, -4.1) -- (3.2, -4.1)
			node[right, font=\scriptsize] {$\tau_2$-edge (dashed)};
			\node[font=\scriptsize] at (9.1, -3.85)
			{\textcolor{red}{red} near tail $= p_{\mathrm{out}}$;\quad
				\textcolor{blue}{blue} near head $= p_{\mathrm{in}}$};
			
		\end{tikzpicture}
		\caption{A port numbering for the simple typed entity-attribute graphs $G_1$ and $G_2$ from Example~\ref{ex:K22-indistinguishable}.  \textcolor{red}{Red} numbers near the
			source of an edge give $p_{\mathrm{out}}$; \textcolor{blue}{blue} numbers near the
			destination of an edge give $p_{\mathrm{in}}$.  Both port numberings are valid: each
			entity assigns distinct outgoing ports $\{1,2\}$ to its two edges, and
			each attribute receives distinct incoming ports $\{1,2\}$ from its two
			predecessors.  Under these port numberings, every MPNN without ego IDs assigns $u$
			the identical embedding in both graphs.}
		\label{fig:K22-example-ports}
	\end{figure}
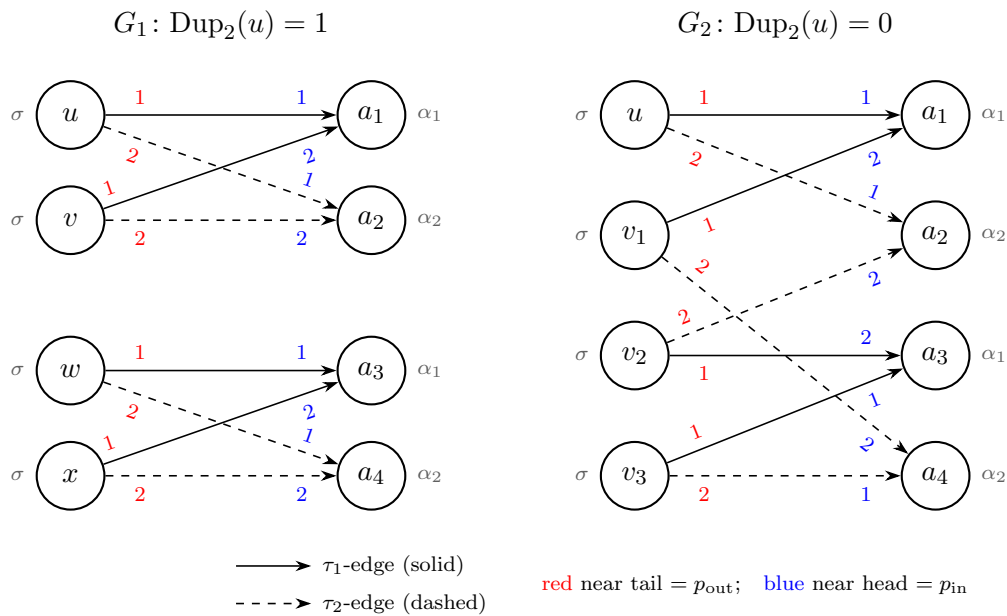

	\textsc{Two-class partition.}
	Under these port numberings, the entities in each graph partition into two
	classes according to their incoming port at every attribute they connect to:
	\begin{itemize}
		\item \emph{Class-1 entities} ($p_{\mathrm{in}} = 1$ at every incident
		attribute): $\{u, w\}$ in $G_1$;\; $\{u, v_3\}$ in $G_2$.
		\item \emph{Class-2 entities} ($p_{\mathrm{in}} = 2$ at every incident
		attribute): $\{v, x\}$ in $G_1$;\; $\{v_1, v_2\}$ in $G_2$.
	\end{itemize}
	In both graphs, at every attribute the class-1 predecessor has
	$p_{\mathrm{in}} = 1$ and the class-2 predecessor has $p_{\mathrm{in}} =
	2$; and every entity uses $p_{\mathrm{out}} = 1$ for its $\tau_1$-edge
	and $p_{\mathrm{out}} = 2$ for its $\tau_2$-edge.
	
	\textsc{Indistinguishability.}
	One can prove by induction on the layer index~$t$ that at every depth:
	\begin{enumerate}
		\item[(i)] all class-1 entities share a common embedding $h_1^{(t)}$,
		the same value in both $G_1$ and $G_2$;
		\item[(ii)] all class-2 entities share a common embedding $h_2^{(t)}$,
		the same value in both $G_1$ and $G_2$;
		\item[(iii)] all type-$\alpha_i$ attributes share a common embedding
		$h_{\alpha_i}^{(t)}$, the same value in both $G_1$ and $G_2$.
	\end{enumerate}
	The inductive step rests on a single observation: every type-$\alpha_i$
	attribute in both graphs receives, at each layer, exactly the same
	forward-message multiset --- one message from a class-1 entity (with
	embedding $h_1^{(t-1)}$, edge type~$\tau_i$, $p_{\mathrm{in}}=1$,
	$p_{\mathrm{out}}=i$) and one from a class-2 entity (with $h_2^{(t-1)}$,
	$\tau_i$, $p_{\mathrm{in}}=2$, $p_{\mathrm{out}}=i$).  Symmetrically,
	every class-$c$ entity receives the same reverse-message multiset,
	because it connects to one type-$\alpha_1$ attribute and one
	type-$\alpha_2$ attribute (both with the same type-specific embeddings in
	both graphs).
	
	Since $u$ is a class-1 entity in both $G_1$ and $G_2$, it is assigned
	the identical embedding $h_1^{(K)}$ at every depth~$K$, yet
	$\mathrm{Dup}_2(u) = 1$ in $G_1$ and $\mathrm{Dup}_2(u) = 0$ in $G_2$.
	No MPNN readout function can separate these two cases. 
\end{Example}

We now present the main result of the paper:

\begin{Theorem}\label{thm:K2r-simple}
	Let $r \geq 2$. Consider the $K_{2,r}$ co-reference predicate
	$\mathrm{Dup}_r$ on simple typed entity-attribute graphs.
	\begin{enumerate}
		\item[(a)] \textbf{(Necessity of ego IDs.)} There exist simple typed
		entity-attribute graphs $G_1$ and $G_2$ and an entity $u$ such that
		$\mathrm{Dup}_{r,G_1}(u) = 1$ and $\mathrm{Dup}_{r,G_2}(u) = 0$, yet
		there exists a valid port numbering under which any MPNN of arbitrary
		depth equipped with reverse message passing and full directed multigraph
		port numbering (Adaptation~2) but \emph{without} ego IDs assigns
		identical embeddings to $u$ in both graphs.
		\item[(b)] \textbf{(Sufficiency of ego IDs.)} A $4$-layer MPNN with
		ego IDs (Adaptation~3) and reverse message passing (Adaptation~1) can
		compute $\mathrm{Dup}_r(u)$ for all entities $u \in \mathcal{E}$ in
		any simple typed entity-attribute graph.
	\end{enumerate}
	Consequently, the transition from $K_{2,1}$ to $K_{2,r}$ (for
	$r \geq 2$) incurs a strict increase in the required architectural
	complexity: ego IDs become necessary even on simple, acyclic, bipartite
	typed entity-attribute graphs.
\end{Theorem}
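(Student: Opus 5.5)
For part~(a) I would generalise Example~\ref{ex:K22-indistinguishable} from $r=2$ to arbitrary $r\ge 2$. Fix one entity type $\sigma$, attribute types $\alpha_1,\dots,\alpha_r$, and edge types $\tau_1,\dots,\tau_r$, with edges into type-$\alpha_i$ attributes carrying $\tau_i$. Take $2r$ attributes $a_i, b_i$ of type $\alpha_i$, $i=1,\dots,r$. The positive graph $G_1$ has entities $u,v$ both pointing to $a_1,\dots,a_r$ and entities $w,x$ both pointing to $b_1,\dots,b_r$, so $\mathrm{Dup}_{r,G_1}(u)=1$. The negative graph $G_2$ has $u$ pointing to $a_1,\dots,a_r$ and $v_1$ pointing to $a_1,b_2,\dots,b_r$. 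For $i=2,\dots,r$ it has an entity $v_i$ pointing to $b_1,\dots,b_{i-1},a_i,b_{i+1},\dots,b_r$. Finally $v_{r+1}$ points to $b_1,\dots,b_r$, which gives $r+2$ entities. This does not match the $4$ entities of $G_1$, so I would instead use the cleaner ``doubled'' version: keep entity count equal by letting every attribute have in-degree exactly $2$. Concretely, assign each entity a class label $c\in\{1,2\}$ and, per type $i$, one attribute of type $\alpha_i$ from each of two ``copies''. In $G_2$ let the class-$1$ entities be $u$ (all $a$'s) and $v_3$ (all $b$'s). Let the class-$2$ entities be $v_1$ ($a_1$ plus $b_2,\dots,b_r$) and $v_2$ ($b_1$ plus $a_2,\dots,a_r$). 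Then every attribute has one class-$1$ and one class-$2$ predecessor, $u$ shares $1$ attribute with $v_1$ and $r-1<r$ with $v_2$, and nothing with $v_3$, so $\mathrm{Dup}_{r,G_2}(u)=0$. Both graphs are simple, and reduce to the example when $r=2$.

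Next I choose ports exactly as in the example. Every entity uses $p_{\mathrm{out}}=i$ on its $\tau_i$-edge. At every attribute the class-$1$ predecessor gets $p_{\mathrm{in}}=1$ and the class-$2$ predecessor gets $p_{\mathrm{in}}=2$. Validity follows because each entity's $r$ edges go to distinct attributes and each attribute's two predecessors are distinct. I then prove by induction on depth $t$ that all class-$c$ entities share an embedding $h_c^{(t)}$ and all type-$\alpha_i$ attributes share $h_{\alpha_i}^{(t)}$, with identical values in $G_1$ and $G_2$. Each type-$\alpha_i$ attribute receives the forward multiset $\{\!\!\{(h_1,\tau_i,1,i),(h_2,\tau_i,2,i)\}\!\!\}$. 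Each class-$c$ entity receives the reverse multiset $\{\!\!\{(h_{\alpha_i},\tau_i,c,i)\}\!\!\}_{i=1}^r$. Both multisets are the same in the two graphs, so no UPDATE or readout can separate $u$.

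For part~(b), run the MPNN with $u$ as ego. In layer~1, entities send (type, ego bit) forward, so each attribute learns whether $u$ points to it and along which edge type. In layer~2, this flows back: each entity $v$ aggregates the multiset of (edge type, ``$u$-flag of the attribute, and the edge type $u$ uses there'') over its out-edges. By simplicity, $v$ can count $c(v)=|N_{\mathrm{typed}}(u)\cap N_{\mathrm{typed}}(v)|$ exactly, counting only flags whose edge type matches. It stores $\mathbf{1}[v\neq u,\ \tau_V(v)=\tau_V(u),\ c(v)\ge r]$; it knows $v\ne u$ from its own ego bit, and it can read $u$'s type because the layer-1 attribute message includes $u$'s type. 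In layer~3, attributes take an OR of these indicators over predecessors. In layer~4, $u$ takes an OR over its attributes. Any witness $v$ shares at least one attribute with $u$, so it is reachable this way, and every flagged $v$ is a genuine witness. This shows the readout equals $\mathrm{Dup}_r(u)$.

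The main obstacle is step one of (a): designing $G_2$ so that every attribute keeps the ``one class-$1$, one class-$2$'' in-profile while no single entity collects $r$ of $u$'s attributes. The class-$2$ splitting above ($v_1$ takes one coordinate, $v_2$ the rest) is what I would verify carefully. The one delicate point in (b) is getting the bookkeeping of matched edge types right in layer~2.
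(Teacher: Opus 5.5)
Your proposal is correct and matches the paper's proof: your final $G_2$ is exactly the paper's construction with $S_1=\{1\}$, $S_2=\{2,\dots,r\}$, and it uses the same port numbering and the same two-class induction, while part~(b) follows the paper's four-layer ego scheme. The only difference is bookkeeping: you apply the non-ego/same-type filter and the threshold $c(v)\ge r$ at the entity in layer~2 (reading $\tau_V(u)$ off the attribute's layer-1 state) and then take an OR in layers~3 and~4, whereas the paper forwards $\mathrm{ov}(v)$ and filters and maximises at the attribute in layer~3; also note that the layer-1 attribute state must record the full \emph{set} of edge types by which $u$ reaches it, since simplicity permits several.
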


\begin{proof}[Proof of part (a): necessity]
	We construct two simple typed entity-attribute graphs that are
	indistinguishable to any MPNN with reverse message passing and full port
	numbering (but without ego IDs), yet differ in their $\mathrm{Dup}_r$
	labels. 
	
	\medskip
	\noindent\textsc{Construction.}
	Fix an entity type $\sigma \in \mathcal{T}_\mathcal{E}$, attribute types
	$\alpha_1, \ldots, \alpha_r \in \mathcal{T}_\mathcal{A}$, and edge types
	$\tau_1, \ldots, \tau_r \in \mathcal{T}_E$, where edges to
	type-$\alpha_j$ attributes use edge type $\tau_j$.
	
	In $G_1$, the \emph{positive instance}, the entity set is
	$\{u, v, w, x\}$, all of type $\sigma$. For each
	$j \in \{1,\ldots,r\}$, there are two attributes $a_j$ and $b_j$,
	both of type $\alpha_j$, giving $2r$ attributes in total. The typed edges of $G_1$
	are (see Figure~\ref{fig:construction:K2r}):
	\begin{align*}
		&(u, a_j, \tau_j) \;\text{ and }\; (v, a_j, \tau_j)
		\quad\text{for } j = 1, \ldots, r, \\
		&(w, b_j, \tau_j) \;\text{ and }\; (x, b_j, \tau_j)
		\quad\text{for } j = 1, \ldots, r.
	\end{align*}
	Every entity has out-degree $r$; every attribute has in-degree $2$;
	there are $4r$ edges in total. Since $u$ and $v$ share all $r$
	attributes $\{(a_j, \tau_j) : j = 1,\ldots,r\}$, we have
	$|N_{\mathrm{typed}}(u) \cap N_{\mathrm{typed}}(v)| = r$, so
	$\mathrm{Dup}_{r,G_1}(u) = 1$.
	
	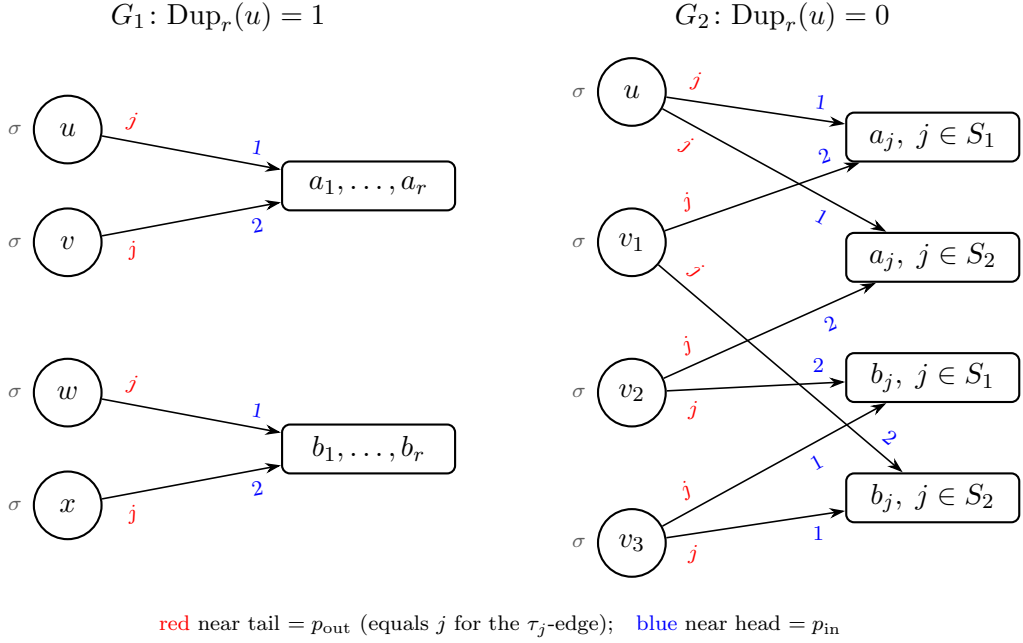
\begin{figure}[t]
		\centering
		\begin{tikzpicture}[
			node/.style  = {circle, draw, thick,
				minimum size=9mm, inner sep=1pt, font=\small},
			abox/.style  = {draw, thick, rounded corners=3pt,
				minimum width=2.3cm, minimum height=0.65cm,
				inner sep=3pt, font=\small},
			tlbl/.style  = {font=\scriptsize, text=gray!75!black},
			pout/.style  = {font=\scriptsize, red},
			pin/.style   = {font=\scriptsize, blue},
			>={Stealth[length=6pt, width=4pt]},
			semithick
			]
			
			\node[font=\small\bfseries] at (2.0, 4.0)
			{$G_1\colon\mathrm{Dup}_r(u)=1$};
			
			\node[node, label={[tlbl]left:$\sigma$}] (G1u) at (0,  2.5) {$u$};
			\node[node, label={[tlbl]left:$\sigma$}] (G1v) at (0,  1.0) {$v$};
			\node[abox] (G1A) at (4.0, 1.75) {$a_1, \ldots, a_r$};
			
			\node[node, label={[tlbl]left:$\sigma$}] (G1w) at (0, -1.0) {$w$};
			\node[node, label={[tlbl]left:$\sigma$}] (G1x) at (0, -2.5) {$x$};
			\node[abox] (G1B) at (4.0,-1.75) {$b_1, \ldots, b_r$};
			
			\draw[->] (G1u) to
			node[pos=0.15, sloped, above, pout] {$j$}
			node[pos=0.85, sloped, above, pin]  {$1$}
			(G1A);
			\draw[->] (G1v) to
			node[pos=0.15, sloped, below, pout] {$j$}
			node[pos=0.85, sloped, below, pin]  {$2$}
			(G1A);
			
			\draw[->] (G1w) to
			node[pos=0.15, sloped, above, pout] {$j$}
			node[pos=0.85, sloped, above, pin]  {$1$}
			(G1B);
			\draw[->] (G1x) to
			node[pos=0.15, sloped, below, pout] {$j$}
			node[pos=0.85, sloped, below, pin]  {$2$}
			(G1B);
			
			\node[font=\small\bfseries] at (9.5, 4.0)
			{$G_2\colon\mathrm{Dup}_r(u)=0$};
			
			\node[node, label={[tlbl]left:$\sigma$}] (G2u)  at (7.5,  3.0) {$u$};
			\node[node, label={[tlbl]left:$\sigma$}] (G2v1) at (7.5,  1.0) {$v_1$};
			\node[node, label={[tlbl]left:$\sigma$}] (G2v2) at (7.5, -1.0) {$v_2$};
			\node[node, label={[tlbl]left:$\sigma$}] (G2v3) at (7.5, -3.0) {$v_3$};
			
			\node[abox] (G2AS1) at (11.5,  2.4) {$a_j,\;j\in S_1$};
			\node[abox] (G2AS2) at (11.5,  0.8) {$a_j,\;j\in S_2$};
			\node[abox] (G2BS1) at (11.5, -0.8) {$b_j,\;j\in S_1$};
			\node[abox] (G2BS2) at (11.5, -2.4) {$b_j,\;j\in S_2$};
			
			\draw[->] (G2u) to
			node[pos=0.15, sloped, above, pout] {$j$}
			node[pos=0.85, sloped, above, pin]  {$1$}
			(G2AS1);
			
			\draw[->] (G2u) to
			node[pos=0.15, sloped, below, pout] {$j$}
			node[pos=0.75, sloped, below, pin]  {$1$}
			(G2AS2);
			
			\draw[->] (G2v1) to
			node[pos=0.15, sloped, above, pout] {$j$}
			node[pos=0.85, sloped, above, pin]  {$2$}
			(G2AS1);
			
			\draw[->] (G2v1) to
			node[pos=0.10, sloped, above, pout] {$j$}
			node[pos=0.90, sloped, above, pin]  {$2$}
			(G2BS2);
			
			\draw[->] (G2v2) to
			node[pos=0.15, sloped, above, pout] {$j$}
			node[pos=0.75, sloped, below, pin]  {$2$}
			(G2AS2);
			
			\draw[->] (G2v2) to
			node[pos=0.15, below, pout] {$j$}
			node[pos=0.85, above, pin]  {$2$}
			(G2BS1);
			
			\draw[->] (G2v3) to
			node[pos=0.15, sloped, above, pout] {$j$}
			node[pos=0.65, sloped, below, pin]  {$1$}
			(G2BS1);
			
			\draw[->] (G2v3) to
			node[pos=0.15, below, pout] {$j$}
			node[pos=0.85, below, pin]  {$1$}
			(G2BS2);
			
			\node[font=\scriptsize] at (5.75, -4.1)
			{\textcolor{red}{red} near tail $= p_{\mathrm{out}}$
				(equals $j$ for the $\tau_j$-edge);\quad
				\textcolor{blue}{blue} near head $= p_{\mathrm{in}}$};
			
		\end{tikzpicture}
		\caption{Port numbering for the general $K_{2,r}$ case
			($r \geq 2$), generalizing Figure~\ref{fig:K22-example-ports}.
			\emph{Left ($G_1$):} two disjoint copies of $K_{2,r}$; the
			top copy has entities $u$ (class-1) and $v$ (class-2) sharing
			all $r$ typed attributes $a_1,\ldots,a_r$, giving
			$\mathrm{Dup}_r(u)=1$.
			\emph{Right ($G_2$):} the four attribute boxes partition
			$\{1,\ldots,r\} = S_1 \dot{\cup} S_2$ (with $S_1, S_2 \neq
			\emptyset$); entity $u$'s $r$ attributes are split across
			$v_1$ (covering $S_1$-indices) and $v_2$ (covering
			$S_2$-indices), so no single entity shares $r$ typed neighbors
			with $u$, giving $\mathrm{Dup}_r(u)=0$.
			Each bundled arrow represents $|S_i|$ parallel edges, one per
			index $j$ in the relevant set.
			\textcolor{red}{Red} near the tail gives $p_{\mathrm{out}}$
			(equal to $j$ for the $\tau_j$-edge); \textcolor{blue}{blue}
			near the head gives $p_{\mathrm{in}}$.
			Class-1 entities ($u$, $w$ in $G_1$; $u$, $v_3$ in $G_2$)
			receive $p_{\mathrm{in}}=1$ at every incident attribute box;
			class-2 entities ($v$, $x$; $v_1$, $v_2$) receive
			$p_{\mathrm{in}}=2$.  The computation trees rooted at $u$ are
			therefore isomorphic in $G_1$ and $G_2$, making $u$
			indistinguishable to any MPNN without ego IDs.}
		\label{fig:construction:K2r}
	\end{figure}
	
	In $G_2$, the \emph{negative instance}, the entity set is
	$\{u, v_1, v_2, v_3\}$, all of type $\sigma$. The attribute set is
	the same: $a_j, b_j$ of type $\alpha_j$ for each $j$. Fix a partition
	$\{1,\ldots,r\} = S_1 \dot{\cup} S_2$ with $S_1, S_2 \neq \emptyset$; for concreteness, take $S_1 = \{1\}$ and $S_2 = \{2,\ldots,r\}$. The
	typed edges of $G_2$ are (see Figure~\ref{fig:construction:K2r}):
	\begin{align*}
		&(u,\, a_j,\, \tau_j) \quad\text{for all } j = 1, \ldots, r, \\
		&(v_1,\, a_j,\, \tau_j) \text{ for } j \in S_1;\quad
		(v_1,\, b_j,\, \tau_j) \text{ for } j \in S_2, \\
		&(v_2,\, a_j,\, \tau_j) \text{ for } j \in S_2;\quad
		(v_2,\, b_j,\, \tau_j) \text{ for } j \in S_1, \\
		&(v_3,\, b_j,\, \tau_j) \quad\text{for all } j = 1, \ldots, r.
	\end{align*}
	Every entity again has out-degree $r$; every attribute has in-degree
	$2$; there are $4r$ edges. Entity $u$'s attributes are ``split'' across
	multiple entities:
	$|N_{\mathrm{typed}}(u) \cap N_{\mathrm{typed}}(v_1)| = |S_1| \leq r-1$,
	$|N_{\mathrm{typed}}(u) \cap N_{\mathrm{typed}}(v_2)| = |S_2| \leq r-1$,
	and
	$|N_{\mathrm{typed}}(u) \cap N_{\mathrm{typed}}(v_3)| = 0$.
	No single entity shares $r$ or more attributes with $u$, so
	$\mathrm{Dup}_{r,G_2}(u) = 0$.
	
	Both graphs are simple typed entity-attribute graphs with $4$ entities,
	$2r$ attributes, and $4r$ edges.
	
	\medskip
	\noindent\textsc{Port numbering.}
	For each entity in both graphs, assign outgoing port $p_{\mathrm{out}} = j$
	to its $\tau_j$-edge. At each attribute, the two predecessors are
	partitioned into two classes: a \emph{class-1} predecessor receives
	$p_{\mathrm{in}} = 1$ and a \emph{class-2} predecessor receives
	$p_{\mathrm{in}} = 2$.
	
	In $G_1$: class-1 entities are $\{u, w\}$ (receiving $p_{\mathrm{in}} = 1$
	at all their incident attributes) and class-2 entities are $\{v, x\}$
	(receiving $p_{\mathrm{in}} = 2$). Explicitly:
	\begin{center}
		\begin{tabular}{lcc}
			\textbf{Edge} & $p_{\mathrm{out}}$ & $p_{\mathrm{in}}$ \\
			\hline
			$(u, a_j, \tau_j)$ & $j$ & $1$ \\
			$(v, a_j, \tau_j)$ & $j$ & $2$ \\
			$(w, b_j, \tau_j)$ & $j$ & $1$ \\
			$(x, b_j, \tau_j)$ & $j$ & $2$ \\
		\end{tabular}
	\end{center}
	
	In $G_2$: class-1 entities are $\{u, v_3\}$ and class-2 entities are
	$\{v_1, v_2\}$. Explicitly:
	\begin{center}
		\begin{tabular}{lcc}
			\textbf{Edge} & $p_{\mathrm{out}}$ & $p_{\mathrm{in}}$ \\
			\hline
			$(u, a_j, \tau_j)$ & $j$ & $1$ \\
			$(v_1, a_j, \tau_j)$ for $j \in S_1$ & $j$ & $2$ \\
			$(v_1, b_j, \tau_j)$ for $j \in S_2$ & $j$ & $2$ \\
			$(v_2, a_j, \tau_j)$ for $j \in S_2$ & $j$ & $2$ \\
			$(v_2, b_j, \tau_j)$ for $j \in S_1$ & $j$ & $2$ \\
			$(v_3, b_j, \tau_j)$ & $j$ & $1$ \\
		\end{tabular}
	\end{center}
	
	\emph{Validity}: In both graphs, each entity has $r$ edges to $r$
	distinct attributes, assigned $p_{\mathrm{out}} = 1, \ldots, r$
	(distinct outgoing ports). At each attribute, the two predecessors have
	$p_{\mathrm{in}} \in \{1, 2\}$ (distinct incoming ports). This
	satisfies properties~(i)--(iii) of Adaptation~2.
	
	\medskip
	\noindent\textsc{Indistinguishability.}
	We prove that for all depths $k \geq 0$, the following three
	conditions hold simultaneously in $G_1$ and $G_2$:
	\begin{enumerate}
		\item[(I)] All class-1 entities share a common embedding $h_1^{(k)}$,
		the same in both graphs.
		\item[(II)] All class-2 entities share a common embedding $h_2^{(k)}$,
		the same in both graphs.
		\item[(III)] For each $j \in \{1,\ldots,r\}$, all type-$\alpha_j$
		attributes share a common embedding $h_{\alpha_j}^{(k)}$, the same
		in both graphs.
	\end{enumerate}
	Note that unlike the simpler indistinguishability arguments for
	Theorems~\ref{thm:K21-simple} and~\ref{thm:K21-multigraph}, we do
	\emph{not} claim all entity nodes share a single common embedding. The
	incoming port numbers cause entities to split into two classes with
	potentially different embeddings $h_1^{(k)} \neq h_2^{(k)}$. The key
	insight is that this two-class structure is preserved \emph{identically}
	across both graphs.
	
	\emph{Base case} ($k = 0$): All entities embed as $\sigma$, with no ego
	feature; type-$\alpha_j$ attributes embed as $\alpha_j$. Conditions
	(I)--(III) hold.
	
	\emph{Inductive step --- forward aggregation at type-$\alpha_j$
		attributes}: Each type-$\alpha_j$ attribute has exactly $2$ incoming
	edges, both of type $\tau_j$: one from a class-1 entity
	($p_{\mathrm{in}} = 1$) and one from a class-2 entity
	($p_{\mathrm{in}} = 2$). The forward-message multiset at every
	type-$\alpha_j$ attribute is therefore:
	\[
	\bigl\{\!\!\bigl\{
	(h_1^{(k-1)},\, \tau_j,\, 1,\, j),\;\;
	(h_2^{(k-1)},\, \tau_j,\, 2,\, j)
	\bigr\}\!\!\bigr\},
	\]
	where each message is a tuple of (entity embedding, edge type, incoming
	port, outgoing port). This multiset is the same at every type-$\alpha_j$
	attribute in both graphs, because for each $j$ and each type-$\alpha_j$
	attribute in either graph, the two predecessors consist of exactly one
	class-1 and one class-2 entity. The update yields the same
	$h_{\alpha_j}^{(k)}$ everywhere, establishing~(III).
	
	\emph{Inductive step --- reverse aggregation at class-1 entities}: A
	class-1 entity has $r$ outgoing edges: for each $j$, one $\tau_j$-edge to
	some type-$\alpha_j$ attribute, with $p_{\mathrm{out}} = j$ and
	$p_{\mathrm{in}} = 1$. The reverse-message multiset is:
	\[
	\bigl\{\!\!\bigl\{
	(h_{\alpha_j}^{(k-1)},\, \tau_j,\, 1,\, j)
	: j = 1,\ldots,r
	\bigr\}\!\!\bigr\}.
	\]
	Since $h^{(k-1)}(a_j) = h^{(k-1)}(b_j) = h_{\alpha_j}^{(k-1)}$ by the
	inductive hypothesis, all class-1 entities in both graphs produce the
	same multiset. This establishes~(I).
	
	\emph{Inductive step --- reverse aggregation at class-2 entities}:
	Identical argument with $p_{\mathrm{in}} = 2$. The reverse-message
	multiset is:
	\[
	\bigl\{\!\!\bigl\{
	(h_{\alpha_j}^{(k-1)},\, \tau_j,\, 2,\, j)
	: j = 1,\ldots,r
	\bigr\}\!\!\bigr\}.
	\]
	This establishes~(II) and completes the inductive step.
	
	\medskip
	\noindent\textsc{Conclusion.}
	Since $u$ is a class-1 entity in both graphs, $h^{(K)}(u) = h_1^{(K)}$
	in both $G_1$ and $G_2$ for every depth $K \geq 0$. But
	$\mathrm{Dup}_{r,G_1}(u) = 1$ and $\mathrm{Dup}_{r,G_2}(u) = 0$.
	Therefore, no MPNN with reverse message passing and full port numbering,
	but without ego IDs, can compute $\mathrm{Dup}_r$ on all simple typed
	entity-attribute graphs.
\end{proof}

\noindent We briefly discuss the intuition on why ego IDs are needed. The fundamental
obstacle is \emph{cross-attribute identity correlation}. For
$\mathrm{Dup}_1$, each attribute independently determines whether it has
two same-type entity predecessors---a purely local computation requiring
no identity tracking. For $\mathrm{Dup}_r$ with $r \geq 2$, one must
verify that the \emph{same} entity $v$ appears at $r$ different attributes
of entity $u$. This requires correlating information across different
attributes, which in turn requires distinguishing the entity originating a
message at one attribute from entities at another. Without ego IDs, all
same-type entities in the same port-numbering class look identical, and the
required correlation is impossible.

\begin{proof}[Proof of part (b): sufficiency]
	We construct an explicit $4$-layer MPNN with ego IDs (Adaptation~3) and
	reverse message passing (Adaptation~1) that computes $\mathrm{Dup}_r(u)$
	for a given ego entity $u \in \mathcal{E}$ in any simple typed
	entity-attribute graph. The construction does not use port numbering.
	
	Rather than tracking full injective multiset encodings at every layer, we
	specify the exact quantities each node computes and stores (see Table~\ref{tab:4layer-construction} for a summary), and verify at
	each layer that the computation is realizable within the MPNN framework.
	
	\medskip
	\noindent\textsc{Layer 1 (Forward: entity $\to$ attribute).}
	Each attribute $a$ receives, along each incoming edge $(v, a, \tau) \in E$,
	the message:
	\[
	\mathrm{msg}^{(1)}(v, a, \tau)
	= \bigl((\tau_V(v),\, \mathrm{ego}_u(v)),\;\, \tau\bigr).
	\]
	The first component of the pair encodes the sender's type and its ego
	indicator $\mathrm{ego}_u(v) \in \{0,1\}$, defined at Layer~0 and equal to
	$1$ if and only if $v = u$.  From these messages, attribute $a$ computes
	and stores:
	\[
	\mathrm{EgoTypes}(a)
	:= \bigl\{\tau \in \mathcal{T}_E :
	\text{some message into } a \text{ has } \mathrm{ego}_u(v) = 1
	\text{ and edge type } \tau\bigr\}.
	\]
	This is the set of edge types via which the ego entity $u$ connects to
	$a$.
	
\emph{Embedding.}
Attribute $a$ stores $h^{(1)}(a) = \bigl(\tau_V(a),\;
\mathrm{EgoTypes}(a)\bigr)$, retaining its own type alongside the
set of edge types via which the ego connects to it.

\emph{Realizability.}
We show that $\mathrm{EgoTypes}(a)$ can be computed by a sum
aggregation followed by an MLP, and hence is realizable within the
MPNN framework.  Define the per-message function
\[
f\bigl((\sigma,\, \mathit{fl}),\, \tau\bigr)
\;=\;
\mathit{fl} \cdot \mathbf{e}_\tau
\;\in\; \{0,1\}^{|\mathcal{T}_E|},
\]
where $\sigma = \tau_V(v)$ is the sender's entity type, ego-flag
$\mathit{fl} = \mathrm{ego}_u(v) \in \{0,1\}$ is the sender's ego
indicator (equal to $1$ iff $v = u$), and $\mathbf{e}_\tau$ is the
standard basis vector for edge type~$\tau$.  The function $f$ returns
$\mathbf{e}_\tau$ for a message from the ego (whose ego-flag $\mathit{fl}=1$) and the
zero vector for any non-ego message, where $\mathit{fl}=0$, so it acts as
a gate that discards all non-ego contributions.  Summing over all
incoming edges to $a$:
\[
\sum_{v \in N_{\mathrm{in}}(a)}\;
\sum_{\tau\,:\,(v,a,\tau)\in E}
f\bigl(\mathrm{msg}^{(1)}(v,a,\tau)\bigr)
\;=\;
\sum_{\tau\,:\,(u,a,\tau)\in E}
\mathbf{e}_\tau,
\]
which is a vector whose $\tau$-th component equals the number of
$\tau$-edges from $u$ to $a$, which is at most $1$ since the graph is simple.
Applying a component-wise threshold recovers
$\mathrm{EgoTypes}(a) = \{\tau : (u,a,\tau) \in E\}$, which is a
deterministic function of the sum and hence realizable by an MLP of
sufficient width.

\emph{Derived predicate.}
Before proceeding to Layer~2, we extract from $h^{(1)}(a)$ a single
binary quantity that will allow each entity to count how many of its
own attribute neighbors are also neighbors of the ego.  Specifically,
for any attribute embedding $h^{(1)}(a)$ and edge type $\tau$, define:
\[
\mathrm{egoEdge}\bigl(h^{(1)}(a),\, \tau\bigr)
:= \mathbf{1}\bigl[\tau \in \mathrm{EgoTypes}(a)\bigr].
\]
This equals $1$ if and only if the ego entity $u$ has an edge of
type~$\tau$ to attribute~$a$, and is computable directly from
$h^{(1)}(a)$ since $\mathrm{EgoTypes}(a)$ is stored there.

\medskip
\noindent\textsc{Layer 2 (Reverse: attribute $\to$ entity).}
Each entity $v$ receives, along each outgoing edge $(v, a, \tau) \in E$,
the reverse message:
\[
\mathrm{msg}^{(2)}(v, a, \tau)
= \bigl(h^{(1)}(a),\; \tau\bigr).
\]
From $(h^{(1)}(a), \tau)$, the receiving entity can evaluate
$\mathrm{egoEdge}(h^{(1)}(a), \tau) \in \{0,1\}$, which indicates
whether the ego $u$ also uses a $\tau$-edge to reach $a$.

Entity $v$ computes the \emph{ego-overlap count}:
\[
\mathrm{ov}(v)
:= \sum_{(v,\, a,\, \tau) \in E}
\mathrm{egoEdge}\bigl(h^{(1)}(a),\; \tau\bigr).
\]
This counts the number of $v$'s outgoing edges $(v, a, \tau)$ such that
the ego $u$ also has a $\tau$-edge to $a$, i.e., the number of typed
attribute neighbors shared between $v$ and $u$.

\emph{Embedding.}
Entity $v$ stores $h^{(2)}(v) = \bigl(\tau_V(v),\; \mathrm{ego}_u(v),\;
\mathrm{ov}(v)\bigr)$, retaining its type and ego indicator from
Layer~0 alongside the newly computed overlap count.

\emph{Realizability.}
We show that $\mathrm{ov}(v)$ is computable by a sum aggregation
followed by an MLP, and hence is realizable within the MPNN framework.
Define the per-message function $\varphi$ by
\[
\varphi\bigl(\mathrm{msg}^{(2)}(v,a,\tau)\bigr)
\;=\;
\mathrm{egoEdge}\bigl(h^{(1)}(a),\, \tau\bigr) \;\in\; \{0,1\}.
\]
Summing $\varphi$ over all reverse messages received by $v$ then
yields $\mathrm{ov}(v)$ directly:
\[
\sum_{a \in N_{\mathrm{out}}(v)}\;
\sum_{\tau\,:\,(v,a,\tau)\in E}
\varphi\bigl(\mathrm{msg}^{(2)}(v,a,\tau)\bigr)
\;=\;
\sum_{a \in N_{\mathrm{out}}(v)}\;
\sum_{\tau\,:\,(v,a,\tau)\in E}
\mathrm{egoEdge}\bigl(h^{(1)}(a),\,\tau\bigr)
\;=\;
\mathrm{ov}(v).
\]
The update function then takes as input this sum together with $v$'s
own previous embedding $h^{(1)}(v) = h^{(0)}(v) = (\tau_V(v),
\mathrm{ego}_u(v))$ — available to every node at every layer as part
of the standard MPNN update — and outputs the triple
$(\tau_V(v), \mathrm{ego}_u(v), \mathrm{ov}(v))$.
	
	\begin{Lemma}[Semantic correctness of $\mathrm{ov}$]
		\label{lem:ov-correct}
		For every entity $v \in \mathcal{E}$ in a simple typed entity-attribute
		graph,
		$\mathrm{ov}(v) = |N_{\mathrm{typed}}(u) \cap N_{\mathrm{typed}}(v)|$.
	\end{Lemma}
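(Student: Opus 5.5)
The plan is to unfold the definition of $\mathrm{ov}(v)$ and establish a bijection between the summands that contribute $1$ and the elements of $N_{\mathrm{typed}}(u) \cap N_{\mathrm{typed}}(v)$.

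First, I will rewrite $\mathrm{ov}(v)$ as a sum over the edges $e=(v,a,\tau)$ leaving $v$. Using the Layer~1 realizability computation, $\tau \in \mathrm{EgoTypes}(a)$ holds if and only if $(u,a,\tau)\in E$. This gives
\[
\mathrm{ov}(v) = \sum_{(v,a,\tau)\in E} \mathbf{1}\bigl[(u,a,\tau)\in E\bigr].
\]
The threshold step is exact here because, by simplicity, the $\tau$-component of the Layer~1 sum is $0$ or $1$. The point to verify is that the ego gate $\mathit{fl}$ isolates exactly the edges whose source is $u$. This is immediate, since $\mathrm{ego}_u(v)=1$ iff $v=u$.

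Second, I will pass from edges to typed pairs. The map $e=(v,a,\tau)\mapsto (a,\tau)$ sends the edges out of $v$ onto $N_{\mathrm{typed}}(v)$, and it is injective precisely because the graph is simple. Simplicity allows at most one edge from $v$ to $a$ of each type $\tau$, although several edge types between the same pair are still allowed and are counted separately, as they should be. Hence the sum equals $\sum_{(a,\tau)\in N_{\mathrm{typed}}(v)} \mathbf{1}[(a,\tau)\in N_{\mathrm{typed}}(u)]$, which is $|N_{\mathrm{typed}}(u)\cap N_{\mathrm{typed}}(v)|$.

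The main, though mild, obstacle is handling multiplicities carefully in both places where simplicity enters. In a multigraph, the Layer~1 count could exceed $1$, which thresholding would absorb, but parallel edges out of $v$ would overcount in the Layer~2 sum. So simplicity is needed exactly for the injectivity in the second step. I will also remark that the case $v=u$ gives $\mathrm{ov}(u)=|N_{\mathrm{typed}}(u)|$, consistent with the formula. This case is harmless for the lemma, and the later readout must exclude it using the stored $\mathrm{ego}_u(v)$ flag.
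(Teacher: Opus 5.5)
Your proposal is correct and follows essentially the same route as the paper. Both arguments treat $\mathrm{ov}(v)$ as an edge-by-edge indicator of whether $(a,\tau)\in N_{\mathrm{typed}}(u)$, and both then use simplicity so that the edges out of $v$ correspond one-to-one with $N_{\mathrm{typed}}(v)$; you simply spell out the surjectivity/injectivity and where simplicity is genuinely needed a bit more explicitly.
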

	
	\begin{proof}
		Each outgoing edge $(v, a, \tau)$ from $v$ certifies
		$(a, \tau) \in N_{\mathrm{typed}}(v)$. The term
		$\mathrm{egoEdge}(h^{(1)}(a), \tau) = 1$ iff
		$(a, \tau) \in N_{\mathrm{typed}}(u)$. So each edge contributes $1$ to
		the sum iff $(a, \tau) \in N_{\mathrm{typed}}(u) \cap
		N_{\mathrm{typed}}(v)$. In a simple typed entity-attribute graph, for
		each pair $(a, \tau)$ there is at most one edge from $v$ to $a$ of type
		$\tau$, so distinct elements of $N_{\mathrm{typed}}(v)$ correspond to
		distinct edges, and each is counted exactly once.
	\end{proof}
	
	\medskip
	\noindent\textsc{Layer 3 (Forward: entity $\to$ attribute).}
	Each attribute $a$ receives, along each incoming edge
	$(v, a, \tau) \in E$, the message:
	\[
	\mathrm{msg}^{(3)}(v, a, \tau)
	= \bigl(\tau_V(v),\, \mathrm{ego}_u(v),\, \mathrm{ov}(v),\,
	\tau\bigr).
	\]
	Each message is a tuple carrying the sender's type, ego indicator,
	overlap count (computed at Layer~2), and the edge type.
	
	Attribute $a$ computes the \emph{maximum non-ego same-type overlap}:
	\[
	\mathrm{maxOv}(a) := \max\bigl\{\mathrm{ov}(v) :
	(v, a, \tau) \in E,\;
	\mathrm{ego}_u(v) = 0,\;
	\tau_V(v) = \sigma^*\bigr\},
	\]
	where $\sigma^* = \tau_V(u)$ is the ego's entity type, recovered
	unambiguously from any incoming message with $\mathrm{ego}_u(v) = 1$.
	If no ego-flagged message arrives at $a$ --- meaning $u$ is not connected
	to $a$ --- or if no non-ego same-type entity is connected to $a$, set
	$\mathrm{maxOv}(a) := 0$.
	
	\emph{Well-definedness of $\sigma^*$.}
	If $u$ is connected to $a$, at least one incoming message has
	$\mathrm{ego}_u(v) = 1$.  Since that message must come from $u$
	itself, all such messages carry $\tau_V(u)$, and $\sigma^*$ is
	unambiguous.
	
	\emph{Embedding.}
	Attribute $a$ stores $h^{(3)}(a) = \mathrm{maxOv}(a)$, a single
	non-negative integer summarising the strongest evidence of overlap
	among non-ego same-type entities connected to $a$.
	
	\emph{Realizability.}
	We show that $\mathrm{maxOv}(a)$ is computable by a sum aggregation
	followed by an MLP, and hence is realizable within the MPNN framework.
	The incoming messages at $a$ form a multiset of tuples from the finite
	set $\mathcal{T}_\mathcal{E} \times \{0,1\} \times \{0,\ldots,B\}
	\times \mathcal{T}_E$, where $B$ is an upper bound on overlap counts.
	By \cite[Lemma 5]{Xu-et-al:ICLR:2019}, sum aggregation maps any
	bounded-size multiset of such tuples injectively into a fixed-dimensional
	vector, so $h^{(3)}(a)$ has access to the full multiset. From this
	injective encoding, an MLP of sufficient width can carry out the
	following steps: identify $\sigma^*$ by locating the element(s) with
	$\mathrm{ego}_u(v) = 1$; restrict attention to elements with
	$\mathrm{ego}_u(v) = 0$ and $\tau_V(v) = \sigma^*$; and return the
	maximum $\mathrm{ov}(v)$ value among those elements, or $0$ if no
	such element exists.
	
	\medskip
	\noindent\textsc{Layer 4 (Reverse: attribute $\to$ entity).}
	Each entity $v$ receives, along each outgoing edge $(v, a, \tau) \in E$,
	the reverse message:
	\[
	\mathrm{msg}^{(4)}(v, a, \tau) = \bigl(\mathrm{maxOv}(a),\;\, \tau\bigr).
	\]
	Each message carries the maximum non-ego same-type overlap computed at
	Layer~3 for the attribute at the other end of the edge.
	
	The ego entity $u$ computes the \emph{readout}:
	\[
	\hat{y}(u) = \mathbf{1}\!\Bigl[
	\max_{a \in N_{\mathrm{out}}(u)}\, \mathrm{maxOv}(a) \;\geq\; r
	\Bigr].
	\]
	That is, $\hat{y}(u) = 1$ iff at least one of $u$'s attribute neighbors
	$a$ has $\mathrm{maxOv}(a) \geq r$, meaning some non-ego entity of the
	same type as $u$ shares at least $r$ typed attribute neighbors with $u$.
	
	\emph{Embedding.}
	The ego entity $u$ stores $h^{(4)}(u) = \hat{y}(u) \in \{0,1\}$, the
	final binary prediction for $\mathrm{Dup}_r(u)$.  Embeddings at all
	other entities are updated analogously but are not used in the readout.
	
	\emph{Realizability.}
	We show that $\hat{y}(u)$ is computable by a max aggregation followed
	by an MLP, and hence is realizable within the MPNN framework.  Define
	the per-message function $\psi$ by
	\[
	\psi\bigl(\mathrm{msg}^{(4)}(v,a,\tau)\bigr) \;=\; \mathrm{maxOv}(a).
	\]
	Taking the maximum of $\psi$ over all reverse messages received by $u$:
	\[
	\max_{a \in N_{\mathrm{out}}(u)}\;
	\max_{\tau\,:\,(u,a,\tau)\in E}
	\psi\bigl(\mathrm{msg}^{(4)}(u,a,\tau)\bigr)
	\;=\;
	\max_{a \in N_{\mathrm{out}}(u)} \mathrm{maxOv}(a),
	\]
	which is a deterministic function of the received messages.  Comparing
	this maximum against the threshold $r$ via an MLP then yields
	$\hat{y}(u)$.
	
	\medskip
	\noindent\textsc{Correctness.}
	We show $\hat{y}(u) = \mathrm{Dup}_r(u)$ for all $u \in \mathcal{E}$.
	
	\medskip
	\noindent$(\Rightarrow)$ Suppose $\mathrm{Dup}_r(u) = 1$. Then there
	exists $v \neq u$ with $\tau_V(v) = \tau_V(u)$ and
	$|N_{\mathrm{typed}}(u) \cap N_{\mathrm{typed}}(v)| \geq r$. By
	Lemma~\ref{lem:ov-correct}, $\mathrm{ov}(v) \geq r$.
	
	Since the overlap is $\geq 1$, pick any
	$(a, \tau) \in N_{\mathrm{typed}}(u) \cap N_{\mathrm{typed}}(v)$. Then:
	\begin{enumerate}
		\item Entity $u$ has a $\tau$-edge to $a$, so $a$ receives an
		ego-flagged message at Layer~3, making $\sigma^* = \tau_V(u)$
		well-defined.
		\item Entity $v$ has a $\tau$-edge to $a$, so $v$'s message
		$(\tau_V(v), \mathrm{ego}_u(v), \mathrm{ov}(v), \tau)$ arrives at
		$a$. Since $v \neq u$, we have $\mathrm{ego}_u(v) = 0$, so this
		message is not ego-flagged. Since $\tau_V(v) = \tau_V(u) = \sigma^*$,
		it passes the filter in the definition of $\mathrm{maxOv}(a)$.
		\item Therefore
		$\mathrm{maxOv}(a) \geq \mathrm{ov}(v) \geq r$.
		\item Since $u$ has an edge to $a$, the reverse message
		$(\mathrm{maxOv}(a), \tau)$ reaches $u$ at Layer~4.
		\item Hence $\hat{y}(u) = 1$.
	\end{enumerate}
	
	\medskip
	\noindent$(\Leftarrow)$ Suppose $\hat{y}(u) = 1$. Then there exists an
	attribute $a$ connected to $u$ with $\mathrm{maxOv}(a) \geq r$. By
	definition of $\mathrm{maxOv}$:
	\begin{enumerate}
		\item There exists an entity $v$ connected to $a$ with
		$\mathrm{ego}_u(v) = 0$, $\tau_V(v) = \sigma^* = \tau_V(u)$, and
		$\mathrm{ov}(v) \geq r$.
		\item $\mathrm{ego}_u(v) = 0$ implies $v \neq u$.
		$\tau_V(v) = \tau_V(u)$ gives the same-type condition.
		\item By Lemma~\ref{lem:ov-correct},
		$\mathrm{ov}(v) = |N_{\mathrm{typed}}(u) \cap
		N_{\mathrm{typed}}(v)| \geq r$.
		\item Therefore $\mathrm{Dup}_r(u) = 1$.
	\end{enumerate}
	
	\medskip
	Hence $\hat{y}(u) = \mathrm{Dup}_r(u)$ for all entities $u \in \mathcal{E}$.
\end{proof}

Table~\ref{tab:4layer-construction} summarizes what each node stores and sends at each layer.  The effective message at every layer is a tuple of at most $4$ bounded
scalars. No multiset-valued messages or nested encodings appear. The only
invocation of the injective multiset encoding of 
\cite[Lemma 5]{Xu-et-al:ICLR:2019} is at Layer~3, where the attribute
must compute a conditional maximum over a multiset of simple tuples. All
other layers use elementary sum or max aggregation.

\begin{table}[t]
	\centering
	\begin{tabular}{clll}
		\textbf{Layer} & \textbf{Direction}
		& \textbf{Message content}
		& \textbf{Node computes and stores} \\
		\hline
		0 & ---
		& ---
		& Entity $v$: $(\tau_V(v), \mathrm{ego}_u(v))$.
		Attribute $a$: $\tau_V(a)$. \\
		1 & Ent $\to$ Att
		& $(\tau_V(v), \mathrm{ego}_u(v), \tau)$
		& Attribute $a$: $\mathrm{EgoTypes}(a) \subseteq \mathcal{T}_E$ \\
		2 & Att $\to$ Ent
		& $(h^{(1)}(a), \tau)$
		& Entity $v$: $\bigl(\tau_V(v),\,\mathrm{ego}_u(v),\,
		\mathrm{ov}(v)\bigr)$ \\
		3 & Ent $\to$ Att
		& $(\tau_V(v), \mathrm{ego}_u(v), \mathrm{ov}(v), \tau)$
		& Attribute $a$: $\mathrm{maxOv}(a) \in \mathbb{Z}_{\geq 0}$ \\
		4 & Att $\to$ Ent
		& $(\mathrm{maxOv}(a), \tau)$
		& Ego $u$: $\hat{y}(u) =
		\mathbf{1}\!\bigl[\max_{a \in N_{\mathrm{out}}(u)}
		\mathrm{maxOv}(a) \geq r\bigr]$ \\
	\end{tabular}
	\caption{Summary of the 4-layer MPNN construction for computing
		$\mathrm{Dup}_r(u)$ (proof of sufficiency,
		Theorem~\ref{thm:K2r-simple}).  Layers alternate between
		forward passes (entity $\to$ attribute) and reverse passes
		(attribute $\to$ entity).  Each message is a tuple of bounded
		scalars; the ego indicator $\mathrm{ego}_u(v) = \mathbf{1}[v=u]$
		is set at Layer~0 and carried forward in every subsequent
		embedding.  The quantity $\mathrm{ov}(v) =
		|N_{\mathrm{typed}}(u) \cap N_{\mathrm{typed}}(v)|$ is the
		overlap count established at Layer~2
		(Lemma~\ref{lem:ov-correct}); $\mathrm{maxOv}(a)$ is the
		maximum overlap count among non-ego same-type entities incident
		to $a$, computed at Layer~3.}
	\label{tab:4layer-construction}
\end{table}
	
\begin{Remark}[Tightness]\label{rem:tightness-K2r}
	Theorem~\ref{thm:K2r-simple} is tight in several respects.
	
	\emph{Universality of both directions.}  The necessity in part~(a)
	holds for every MPNN of arbitrary depth and width equipped with
	reverse message passing and full directed multigraph port numbering
	but without ego IDs: no such architecture can compute
	$\mathrm{Dup}_r$ on all simple typed entity-attribute graphs.  The
	sufficiency in part~(b) holds for every simple typed
	entity-attribute graph: the $4$-layer construction computes
	$\mathrm{Dup}_r(u)$ correctly for all entities~$u$ in any such
	graph.
	
\emph{Depth minimality.}  The $4$-layer depth is optimal:
the separation graphs $G_1$ and~$G_2$ from part~(a) are
indistinguishable at depth~$3$ even when ego IDs are added,
so no $3$-layer MPNN can compute $\mathrm{Dup}_r$ on all
simple typed entity-attribute graphs.  We verify this by
tracing the embeddings layer by layer.  At layer~$k$,
messages carry layer-$(k{-}1)$ embeddings. In both $G_1$ and~$G_2$, every
	non-ego entity connects to one attribute of each type~$\alpha_j$
	via edge type~$\tau_j$, and since $h^{(0)}(a_j) = h^{(0)}(b_j) =
	\alpha_j$, the layer-$1$ reverse messages at every non-ego entity
	produce the identical multiset
	$\{\!\!\{(\alpha_j, \tau_j) : j = 1, \ldots, r\}\!\!\}$.  All
	non-ego entities therefore share the same embedding~$h^{(1)}$ in
	both graphs.  At layer~$2$, each of the ego's attributes~$a_j$
	receives $h^{(1)}(u)$ and $h^{(1)}$ of one non-ego predecessor
	--- both identical across the two graphs --- so $h^{(2)}(a_j)$ is
	the same in~$G_1$ and~$G_2$.  At layer~$3$, entity~$u$ receives
	these $h^{(2)}(a_j)$ values via reverse aggregation, obtaining the
	same embedding~$h^{(3)}(u)$ in both graphs, yet
	$\mathrm{Dup}_{r,G_1}(u) = 1 \neq 0 = \mathrm{Dup}_{r,G_2}(u)$.
	
	The information-flow structure of the bipartite graph explains why
	four hops are unavoidable: the ego mark must reach $u$'s
	attributes via forward aggregation at layer~$1$; the
	ego-attribute membership must propagate to other entities via
	reverse aggregation at layer~$2$, where each entity~$v$ computes
	its overlap count~$\mathrm{ov}(v)$; these overlap counts must
	travel to the attributes via forward aggregation at layer~$3$,
	where each attribute computes~$\mathrm{maxOv}$; and finally,
	$\mathrm{maxOv}$ must reach the ego via reverse aggregation at
	layer~$4$.  Each hop requires a full layer because messages at
	layer~$k$ carry layer-$(k{-}1)$ embeddings, so the output of each
	hop is available to the next hop only at the subsequent layer.
	
	\emph{Port numbering is unnecessary.}  The sufficiency proof uses
	no port numbers at any layer: on simple typed entity-attribute
	graphs, the simplicity assumption already guarantees that each
	entity contributes at most one edge of a given type to any
	attribute, making port-based source counting redundant.
	
	\emph{Both ego IDs and reverse MP are individually necessary.}
	The necessity of ego IDs is established by part~(a).  Reverse
	message passing is independently necessary, even when ego IDs are
	present: in a typed entity-attribute graph, every entity satisfies
	$N_{\mathrm{in}}(u) = \emptyset$, so without reverse aggregation
	the embedding $h^{(K)}(u) =
	\mathrm{UPDATE}^{(K)}(\cdots\mathrm{UPDATE}^{(1)}((\tau_V(u),
	1),\, c^{(1)})\cdots,\, c^{(K)})$ depends only on the initial
	features and architectural constants, independent of graph
	structure.  The ego feature breaks the symmetry among same-type
	entities at initialisation but provides no information channel from
	the graph without reverse message passing.  The separation pair
	from Theorem~\ref{thm:K21-simple} part~(a) therefore refutes any
	forward-only MPNN with ego IDs.  Consequently, the sufficient
	architecture of part~(b) --- ego IDs together with reverse message
	passing --- is minimal: removing either component makes the
	predicate uncomputable.
\end{Remark}

\begin{Remark}
	(Extension to multigraph entity-attribute
	graphs) On multigraph typed entity-attribute graphs, incoming port
	numbering is additionally required, by Theorem~\ref{thm:K21-multigraph}, 
	to distinguish parallel edges from distinct-entity edges at each
	attribute. The construction above extends to multigraphs by incorporating
	incoming port numbers into the Layer-1 messages, using the distinct-source
	count $\mathrm{dsrc}$, as in Theorem~\ref{thm:K21-multigraph}, in place
	of raw edge counts.
\end{Remark}

\subsection{Cycle Detection in Typed Directed Graphs}
\label{sec:cycle-detection}

We now turn to a qualitatively different setting: typed directed
graphs that contain entity--entity edges (rather than purely
entity--attribute edges), in which directed cycles can arise.  A
canonical example in master data management is a transaction network
or a co-ownership graph, where entity nodes are connected to one
another directly.  We study the $\ell$-cycle participation predicate $\mathrm{Cyc}_\ell$ defined in Section~\ref{sec:prob-formulation}, where $\mathrm{Cyc}_\ell(v) = 1$ iff node~$v$ belongs to a directed simple cycle of length exactly~$\ell$, meaning $\ell$ distinct vertices $v_1, v_2, \ldots, v_\ell$ with edges $v_1 \to v_2 \to \cdots \to v_\ell \to v_1$.

\begin{Theorem}\label{thm:cycle-detection}
	Let $\ell \geq 3$.  Consider the $\ell$-cycle participation
	predicate $\mathrm{Cyc}_\ell$ on typed directed graphs.
	\begin{enumerate}
		\item[(a)] \textbf{(Necessity of ego IDs.)} There exist typed
		directed graphs $G_1$ and $G_2$ and a node~$v$ such that
		$\mathrm{Cyc}_\ell(v) = 1$ in $G_1$ and
		$\mathrm{Cyc}_\ell(v) = 0$ in~$G_2$, yet any MPNN of
		arbitrary depth equipped with reverse message passing
		(Adaptation~1) and full directed multigraph port numbering
		(Adaptation~2) but \emph{without} ego IDs assigns identical
		embeddings to~$v$ in both graphs.
		\item[(b)] \textbf{(Sufficiency of ego IDs alone on the
			separation pair.)} An $\ell$-layer MPNN with ego IDs
		(Adaptation~3) and \emph{forward-only} message passing --- no
		reverse message passing, no port numbering --- correctly
		computes $\mathrm{Cyc}_\ell(v)$ for \emph{every} node~$v$ in
		the graphs $G_1$ and~$G_2$ from part~(a).
	\end{enumerate}
	Consequently, ego IDs (or an equivalent node-marking mechanism) are both necessary and sufficient for
	detecting directed $\ell$-cycles on the separation graphs of
	part~(a); neither reverse message passing nor port numbering is
	required.
\end{Theorem}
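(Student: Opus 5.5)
For part~(a) I will use the classical 1-WL counterexample that the paper already mentions in its discussion of the 1-WL barrier. Let $G_1$ be the disjoint union of two directed $\ell$-cycles, and let $G_2$ be a single directed $2\ell$-cycle. All nodes get one common node type, and all edges get one common edge type $\tau_1$. In $G_1$, $v$ lies on an $\ell$-cycle, so $\mathrm{Cyc}_\ell(v)=1$. In $G_2$, the only directed cycle has length $2\ell$, and there is no shorter simple cycle, so $\mathrm{Cyc}_\ell(v)=0$. Both graphs are simple and $1$-in, $1$-out regular.

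The port numbering is the key choice. Every node has exactly one predecessor and one successor. I therefore set $p_{\mathrm{in}}(e)=p_{\mathrm{out}}(e)=1$ for every edge. This is valid: property~(i) is vacuous because there are no parallel edges, and properties~(ii)--(iii) are vacuous because no node has two distinct predecessors or successors.

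Next I prove by induction on $k$ that every node in both graphs has the same embedding $\bar h^{(k)}$. At layer $k$, each node receives exactly one forward message $(\bar h^{(k-1)},\tau_1,1,1)$ and exactly one reverse message of the same form. Hence the aggregated inputs are identical everywhere, so the update produces a common value. It follows that $h^{(K)}(v)$ is the same in $G_1$ and $G_2$ for every depth $K$, and no readout can separate the two labels. This is exactly the pattern of the inductive arguments for Theorems~\ref{thm:K21-multigraph} and~\ref{thm:K2r-simple}, only simpler.

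For part~(b), I run the MPNN with $v$ as the ego node and use forward-only propagation of the ego mark. Define $h^{(k)}(w)$ to store the bits $(\mathrm{ego}_v(w), r_1(w),\dots,r_k(w))$, where
\[
r_k(w)=\mathbf{1}\bigl[\exists\text{ a directed walk of length exactly }k\text{ from }v\text{ to }w\bigr].
\]
This bit is computed as $r_k(w)=\max_{x\in N_{\mathrm{in}}(w)} r_{k-1}(x)$, with $r_0=\mathrm{ego}_v$. It is realizable by a max (or sum-then-threshold) aggregation followed by an MLP. After $\ell$ layers, the readout at $v$ outputs $r_\ell(v)$, so it detects a closed walk of length $\ell$ through $v$.

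Correctness on the separation pair reduces to checking that closed walks coincide with simple cycles there. In both graphs every node has out-degree $1$, so the walk from $v$ is unique. In $G_1$ it returns to $v$ after $\ell$ steps, so $r_\ell(v)=1$, and that walk is the simple $\ell$-cycle. In $G_2$ it first returns after $2\ell$ steps, so $r_\ell(v)=0$. The argument is the same for every node by symmetry, since each graph is vertex-transitive.

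The main obstacle I anticipate is precisely this gap between walks and simple cycles. In general graphs a closed $\ell$-walk can arise from shorter cycles, for example one of length dividing $\ell$ or combinations of several cycles. That is why the statement restricts sufficiency to the separation graphs, and I will state that restriction explicitly rather than claim universality.

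Depth optimality at $\ell$ will follow from the same pair with ego IDs. After $k<\ell$ layers, the ego mark has travelled only $k$ steps along the unique path. The unrolled computation tree at $v$ is then identical in $G_1$ and $G_2$: a directed path in which only the root is marked. So $h^{(k)}(v)$ coincides in both graphs for $k<\ell$.
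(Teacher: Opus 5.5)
Your proposal is correct and follows essentially the same route as the paper's proof. Part (a) uses the same pair $2C_\ell$ versus $C_{2\ell}$, the all-ones port numbering and a uniform-embedding induction; part (b) propagates the length-$k$ walk indicator forward from the ego node and uses uniqueness of walks in these functional graphs. Your extra depth-optimality remark goes beyond what the paper proves for this theorem, but it is sound for the forward-only architecture.
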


\begin{proof}[Proof of part (a): necessity]
	
	\medskip
	\noindent\textsc{Construction.}
	Fix a node type $\sigma \in \mathcal{T}_V$ and an edge type
	$\tau_1 \in \mathcal{T}_E$.
	
	Let $G_1$ be the digraph consisting of the disjoint union of two
	directed $\ell$-cycles
	$C_\ell^{(1)} = (v_1 \to v_2 \to \cdots \to v_\ell \to v_1)$
	and
	$C_\ell^{(2)} = (w_1 \to w_2 \to \cdots \to w_\ell \to w_1)$.
	All $2\ell$ nodes have type~$\sigma$, and all $2\ell$ edges have
	type~$\tau_1$.  Every node belongs to a directed simple
	$\ell$-cycle, so
	$\mathrm{Cyc}_\ell(v_i) = \mathrm{Cyc}_\ell(w_j) = 1$ for all
	$i, j$.
	
	Let $G_2$ be the digraph consisting of a single directed
	$2\ell$-cycle
	$C_{2\ell} = (z_1 \to z_2 \to \cdots \to z_{2\ell} \to z_1)$.
	All $2\ell$ nodes have type~$\sigma$, and all $2\ell$ edges have
	type~$\tau_1$.  The only directed simple cycle in~$G_2$ has
	length~$2\ell \neq \ell$, so
	$\mathrm{Cyc}_\ell(z_i) = 0$ for all~$i$.
	
	Thus, $G_1 = 2C_\ell$ and $G_2 = C_{2 \ell}$. Both graphs have the same number of nodes ($2\ell$) and edges
	($2\ell$).  The predicate $\mathrm{Cyc}_\ell$ assigns value~$1$ to
	every node of~$G_1$ and value~$0$ to every node of~$G_2$.
	
	\medskip
	\noindent\textsc{Port numbering is trivial.}
	In both $G_1$ and~$G_2$, every node has exactly one incoming edge
	and exactly one outgoing edge.  By the port numbering convention,
	each edge is assigned $p_{\mathrm{in}} = 1$ and
	$p_{\mathrm{out}} = 1$.  Port numbering therefore adds no
	information beyond what is already encoded in the node and edge
	types.
	
	\medskip
	\noindent\textsc{Indistinguishability.}
	We prove by induction on the layer index~$k$ that for all
	$k \geq 0$, every node in~$G_1$ and every node in~$G_2$ receives
	the same embedding~$\bar{h}^{(k)}$.
	
	\emph{Base case} ($k = 0$): All nodes have the same type~$\sigma$
	and no ego feature, so
	$h^{(0)}(v) = \sigma$ for every node~$v$ in both graphs.
	
	\emph{Inductive step}: Suppose every node has embedding
	$\bar{h}^{(k-1)}$ at layer~$k - 1$.  At layer~$k$, each node~$v$
	has exactly one incoming edge $(w, v, \tau_1)$ with~$w$ of
	type~$\sigma$.  The incoming aggregation is therefore:
	\[
	a_{\mathrm{in}}^{(k)}(v)
	= \mathrm{AGG}_{\mathrm{in}}^{(k)}\bigl(
	\{\!\!\{(\bar{h}^{(k-1)},\; \tau_1,\;
	\underbrace{1}_{p_{\mathrm{in}}},\;
	\underbrace{1}_{p_{\mathrm{out}}})\}\!\!\}
	\bigr),
	\]
	which is the same for every node in both graphs.  Similarly,
	each node has exactly one outgoing edge, so the reverse
	aggregation $a_{\mathrm{out}}^{(k)}(v)$ is also identical across
	all nodes.  The update rule therefore produces the same value for
	every node~$v$ in both graphs.
	
	\medskip
	\noindent\textsc{Conclusion.}
	Since every node in~$G_1$ and every node in~$G_2$ receives the
	same embedding $\bar{h}^{(K)}$ at every depth $K \geq 0$, no
	readout function can separate
	$\mathrm{Cyc}_\ell(v_1) = 1$ in~$G_1$ from
	$\mathrm{Cyc}_\ell(z_1) = 0$ in~$G_2$.
\end{proof}

\begin{proof}[Proof of part (b): sufficiency]
	
	We construct an explicit $\ell$-layer MPNN with ego IDs and
	\emph{forward-only} message passing --- no reverse message
	passing, no port numbering --- that correctly classifies every
	node in~$G_1$ and~$G_2$.
	
	\medskip
	\noindent\textsc{Key idea.}
	The ego mark placed at the designated node~$u$ propagates
	forward along directed edges.  In a directed $\ell$-cycle, the
	forward edges themselves form the return path: after exactly
	$\ell$ steps the ego signal traverses the full cycle and arrives
	back at~$u$ via an incoming edge.  No reverse message passing is
	needed because the closing edge of the cycle already points back
	towards~$u$.
	
	\medskip
	\noindent\textsc{Layer 0 (Initialisation).}
	The ego node~$u$ is designated for classification.  Set
	\[
	h^{(0)}(v) = \bigl(\tau_V(v),\; \mathrm{ego}_u(v)\bigr)
	\quad \text{for all } v \in V,
	\]
	where $\mathrm{ego}_u(v) = \mathbf{1}[v = u]$.
	
	\medskip
	\noindent\textsc{Layers 1 through $\ell$ (Forward aggregation).}
At each layer $k \in \{1, \ldots, \ell\}$, each node~$v$ aggregates only over its incoming neighbors, as per standard forward message passing:
\[
h^{(k)}(v) = \mathrm{UPDATE}^{(k)}\!\Bigl(h^{(k-1)}(v),\;
\mathrm{AGG_{\mathrm{in}}}^{(k)}\bigl(
\{\!\{ h^{(k-1)}(w) : w \in N_{\mathrm{in}}(v) \}\!\}
\bigr)\Bigr).
\]

	We take $\mathrm{AGG}^{(k)_{\mathrm{in}}}$ to be sum aggregation and
	$\mathrm{UPDATE}^{(k)}$ to be an injective MLP, so that
	$h^{(k)}(v)$ is an injective function of
	$(h^{(k-1)}(v),\,
	\{\!\{ h^{(k-1)}(w) : w \in N_{\mathrm{in}}(v) \}\!\})$.
	
	\medskip
	\noindent\textsc{Inductive invariant.}
	Define, for each $k \geq 0$ and each node $v \in V$,
	\[
	\mathrm{flag}^{(k)}(v) \;=\; \mathbf{1}\bigl[\,
	\exists \text{ directed walk }
	u = w_0 \to w_1 \to \cdots \to w_k = v
	\text{ of length exactly } k \text{ in } G\bigr].
	\]
	
	\begin{Claim}\label{clm:invariant}
		For all $k \in \{0, 1, \ldots, \ell\}$ and all $v \in V$,
		the embedding $h^{(k)}(v)$ determines
		$\mathrm{flag}^{(k)}(v)$.
	\end{Claim}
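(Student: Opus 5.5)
The plan is to prove Claim~\ref{clm:invariant} by induction on $k$. Along the way we establish the slightly stronger fact that there is a function $F_k$, depending only on $k$ and the architecture and not on the graph or on $v$, with $F_k(h^{(k)}(v)) = \mathrm{flag}^{(k)}(v)$ for every node $v$ of every input graph. Uniformity matters because the readout at layer~$\ell$ must be a single function.

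\emph{Base case} ($k=0$). The only walk of length $0$ starting at $u$ ends at $u$, so $\mathrm{flag}^{(0)}(v) = \mathbf{1}[v=u] = \mathrm{ego}_u(v)$. This is the second coordinate of $h^{(0)}(v)$, so $F_0$ is simply the projection onto that coordinate.

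\emph{Inductive step.} The first step is a combinatorial recurrence: a directed walk of length $k$ from $u$ to $v$ exists iff some $w \in N_{\mathrm{in}}(v)$ admits a directed walk of length $k-1$ from $u$ to $w$. For the forward direction, take the penultimate vertex of the walk. For the converse, append the edge $(w,v,\tau)$. Hence
\[
\mathrm{flag}^{(k)}(v) = \max_{w \in N_{\mathrm{in}}(v)} \mathrm{flag}^{(k-1)}(w),
\]
with the maximum over the empty set equal to $0$.

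The second step shows that the embedding determines this maximum. The update is chosen injective, so $h^{(k)}(v)$ determines the multiset $\{\!\{h^{(k-1)}(w) : w \in N_{\mathrm{in}}(v)\}\!\}$. The justification is sum aggregation composed with an injective MLP, via \cite[Lemma~5]{Xu-et-al:ICLR:2019} under the bounded-cardinality setting of Remark~\ref{rem:bounded-multiset}. Applying $F_{k-1}$ elementwise to this multiset and taking the maximum then yields $\mathrm{flag}^{(k)}(v)$. We set $F_k$ to be the composition of decoding the multiset from $h^{(k)}(v)$, applying $F_{k-1}$ elementwise, and taking the maximum. This $F_k$ is graph-independent, which closes the induction.

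The main obstacle is the realizability and injectivity of the aggregation. Lemma~5 of Xu et al.\ needs a countable feature space and bounded multiset size, and the feature space at layer $k-1$ is the countable image of the previous layers. I would invoke the remark to cover both conditions. As an alternative that avoids injectivity entirely, one can carry $\mathrm{flag}^{(k-1)}$ as an explicit coordinate and compute $\mathbf{1}\bigl[\sum_{w} \mathrm{flag}^{(k-1)}(w) \ge 1\bigr]$ by a threshold, which plain sum aggregation followed by a ReLU realizes directly. I would present this direct construction as the cleaner route if the injective-encoding argument becomes delicate.
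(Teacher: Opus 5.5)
Your proof is correct and follows the paper's argument. Both use induction on $k$, with base case $\mathrm{flag}^{(0)}(v)=\mathrm{ego}_u(v)$ and an inductive step that combines the predecessor recurrence for length-$k$ walks with injective sum aggregation plus UPDATE to recover the predecessor embeddings; your graph-independent decoder $F_k$ and the threshold-based alternative only make precise what the paper leaves implicit in ``determines''.
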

	
	\begin{proof}[Proof of Claim~\ref{clm:invariant}]
		By induction on~$k$.
		
		\emph{Base case} ($k = 0$):
		$h^{(0)}(v) = (\tau_V(v), \mathrm{ego}_u(v))$.  Since
		$\mathrm{ego}_u(v) = \mathbf{1}[v = u] =
		\mathrm{flag}^{(0)}(v)$, the claim holds.
		
		\emph{Inductive step}: Assume $h^{(k-1)}(w)$ determines
		$\mathrm{flag}^{(k-1)}(w)$ for all $w \in V$.  At layer~$k$,
		node~$v$ aggregates the multiset
		$\{\!\{ h^{(k-1)}(w) : w \in N_{\mathrm{in}}(v) \}\!\}$.
		By the inductive hypothesis, each $h^{(k-1)}(w)$ determines
		$\mathrm{flag}^{(k-1)}(w)$, so $v$ can determine whether any
		$w \in N_{\mathrm{in}}(v)$ satisfies
		$\mathrm{flag}^{(k-1)}(w) = 1$.  This is precisely the
		condition $\mathrm{flag}^{(k)}(v) = 1$: a directed walk of
		length~$k$ from~$u$ to~$v$ exists iff there is a predecessor
		$w \in N_{\mathrm{in}}(v)$ reachable from~$u$ by a walk of
		length~$k - 1$.  Since the UPDATE function is injective and
		retains $h^{(k-1)}(v)$, the embedding $h^{(k)}(v)$ determines
		$\mathrm{flag}^{(k)}(v)$.
	\end{proof}
	
	\medskip
	\noindent\textsc{Readout at layer~$\ell$.}
	Node~$u$ applies a readout MLP to $h^{(\ell)}(u)$ to compute:
	\[
	\hat{y}(u) \;=\; \mathrm{flag}^{(\ell)}(u)
	\;=\; \mathbf{1}\bigl[\,\exists \text{ directed walk of length }
	\ell \text{ from } u \text{ to } u \text{ in } G\bigr].
	\]
	
	\medskip
	\noindent\textsc{Correctness on the separation pair.}
	We verify $\hat{y}(u) = \mathrm{Cyc}_\ell(u)$ for every
	node~$u$ in each of the two graphs from part~(a).
	
Both $G_1$, the disjoint union of two directed $\ell$-cycles, and
$G_2$, a single directed $2\ell$-cycle, are \emph{functional
	graphs}: every node has in-degree exactly~$1$ and out-degree
exactly~$1$.  In such a graph, there is a unique directed walk of
	any given length starting from each node, obtained by following
	the unique outgoing edge at each step.  In particular, a closed
	walk of length~$k$ from~$u$ exists if and only if~$u$ lies on a
	directed cycle of length dividing~$k$.
	
	Throughout this argument, all cycle-vertex subscripts are
	understood cyclically: $v_{\ell+j} := v_j$ for all $j \geq 1$,
	and $z_{2\ell+j} := z_j$ for all $j \geq 1$.
	
\medskip
\noindent\emph{Case $G_1 = C_\ell \dot{\cup} C_\ell$}:
Every node~$v_i$ in each copy of~$C_\ell$ lies on a directed
cycle of length exactly~$\ell$.  The unique walk of
length~$\ell$ starting from~$v_i$ traverses the full cycle and
returns to~$v_i$.  Thus
$\mathrm{flag}^{(\ell)}(v_i) = 1$ and
$\hat{y}(v_i) = 1 = \mathrm{Cyc}_\ell(v_i)$, as required.

\medskip
\noindent\emph{Case $G_2 = C_{2\ell}$}:
Every node~$z_i$ lies on a directed cycle of length~$2\ell$,
which does not divide~$\ell$, since $\ell < 2\ell$ and
$2\ell \nmid \ell$ for all $\ell \geq 1$.
The unique walk of length~$\ell$ from~$z_i$ ends at the antipodal node
$z_{i+\ell} \neq z_i$.  Thus
$\mathrm{flag}^{(\ell)}(z_i) = 0$ and
$\hat{y}(z_i) = 0 = \mathrm{Cyc}_\ell(z_i)$, as required.

	\medskip
	Therefore $\hat{y}(v) = \mathrm{Cyc}_\ell(v)$ for every
	node~$v$ in both $G_1$ and~$G_2$.
\end{proof}

\begin{Remark}[Forward-only sufficiency]\label{rem:forward-only}
	The sufficiency proof uses \emph{only} ego IDs.  Reverse message
	passing (Adaptation~1) and port numbering (Adaptation~2) play no
	role: the closing edge $v_\ell \to v_1$ of the cycle itself
	carries the ego signal back to the starting node via forward
	aggregation alone.  This stands in sharp contrast to the
	entity-attribute setting
	(Theorems~\ref{thm:K21-simple}--\ref{thm:K2r-simple}), where
	reverse message passing is genuinely necessary because the
	bipartite graph structure provides no forward path back to entity
	nodes.  Together, parts~(a) and~(b) establish that ego IDs are
	both necessary and sufficient for $\ell$-cycle detection on the
	separation graphs: the adaptation requirement is exactly one,
	minimal, and tight.
\end{Remark}

\begin{Remark}[Scope of the sufficiency:
	walks versus simple cycles]\label{rem:walks-vs-cycles}
	The MPNN constructed in part~(b) detects closed \emph{walks} of
	length~$\ell$, not simple cycles.  On the functional graphs $G_1$~and~$G_2$, these two notions coincide,
	since each node has a unique outgoing edge and thus a closed walk of
	length~$\ell$ exists if and only if the unique cycle length divides~$\ell$.  On general typed
	directed graphs, they diverge.  For instance, if a node~$u$ lies on a directed $3$-cycle
	$u \to v \to w \to u$, then for $\ell = 6$ the walk
	$u \to v \to w \to u \to v \to w \to u$ is a closed walk of
	length~$6$, producing the false positive $\hat{y}(u) = 1$
	even though $\mathrm{Cyc}_6(u) = 0$, as no simple $6$-cycle
	exists through~$u$.

	Universal computation of $\mathrm{Cyc}_\ell$ on \emph{all}
	typed directed multigraphs therefore requires additional
	architectural power.  By
	\cite[Corollary~4.4.1]{Egressy-et-al:AAAI:2024}, the full
	architecture --- ego IDs (Adaptation~3), full bidirectional port
	numbering (Adaptation~2), and reverse message passing
	(Adaptation~1) --- enables unique node-ID assignment and hence, by the universality result of Loukas~\cite[Corollary 3.1]{Loukas:ICLR:2020}, 
	detection of any directed subgraph pattern, including
	$\ell$-cycle participation.  Theorem~\ref{thm:cycle-detection}
	shows that among these three, ego IDs are \emph{necessary} (even
	when the other two are granted); the additional question of
	whether each of the remaining adaptations is independently
	necessary for $\mathrm{Cyc}_\ell$ on general multigraphs is
	left to future work.
\end{Remark}

\section{Computational Validation}
\label{sec:computational}

The four theorems proved above make precise predictions about which
MPNN architectures can and cannot distinguish specific pairs of typed
graphs.  In this section, we validate these predictions
computationally.  Since each theorem has a necessity part and a
sufficiency part with different logical structures, we use a
different validation method for each.

\medskip
\textbf{Methodology.}
Each necessity proof establishes that a given architecture assigns
the target node~$u$ \emph{identical} embeddings in~$G_1$ and~$G_2$,
\emph{regardless of the weight parameters}.  Formally, if $\Theta$
denotes the space of all possible weight settings and
$\mathcal{M}_\theta$ denotes the MPNN with weights~$\theta$, the
necessity claim is:
\[
\forall\, \theta \in \Theta, \quad
\mathcal{M}_\theta(G_1, u) \;=\; \mathcal{M}_\theta(G_2, u).
\]
To validate this universal claim empirically, we instantiate the
relevant architecture with hidden dimension~$d = 32$ and draw the
weight parameters independently $100$~times using PyTorch's default
Kaiming uniform initialisation with different random seeds.  For each
instantiation~$\theta_i$, we compute the $\ell_2$~distance
$\delta_i =
\lVert \mathcal{M}_{\theta_i}(G_1, u) - \mathcal{M}_{\theta_i}(G_2, u)
\rVert_2$.
No training is performed: each trial is a single forward pass.
The necessity claim predicts $\delta_i = 0$ for every~$i$.

Each sufficiency proof constructs a \emph{specific} MPNN with
hand-designed aggregation and readout logic.  The underlying theorem
claim is existential: there \emph{exists} a weight setting~$\theta^*$
that computes the target predicate correctly on the stated graph class.
Our computational validation therefore \emph{exhibits the witness} on
the canonical separation instances: we implement the exact algorithm
from each sufficiency proof and verify that $\hat{y}(u)$ matches the
ground-truth label for the designated target node~$u$ on both~$G_1$
and~$G_2$.

\medskip
\textbf{Results.}
Table~\ref{tab:necessity} reports the necessity validation.  In
every case, the maximum $\ell_2$~distance across all $100$~trials is
exactly~$0$ --- not approximately zero, but literally zero to
floating-point precision --- confirming that the embedding identity is
a structural property of the architecture, not a statistical tendency.
That is, every individual~$\delta_i$ equals zero, not merely the
average.

\begin{table}[t]
	\centering
	\small
	\begin{tabular}{clc}
		\textbf{Thm} & \textbf{Architecture (insufficient)}
		& $\max_i \lVert h_{G_1}(u) - h_{G_2}(u) \rVert_2$ \\
		\hline
		\ref{thm:K21-simple}      & Vanilla MPNN (no reverse MP)
		& $0.00$ \\
		\ref{thm:K21-multigraph}  & $+\,$Reverse MP (no ports)
		& $0.00$ \\
		\ref{thm:K2r-simple}      & $+\,$Reverse MP $+$ Full Ports (no ego)
		& $0.00$ \\
		\ref{thm:cycle-detection}  & $+\,$Reverse MP $+$ Full Ports (no ego)
		& $0.00$ \\
	\end{tabular}
\caption{Necessity validation.  The $\ell_2$~distance between
	the target node's embedding in~$G_1$ and~$G_2$ is exactly zero for
	all $100$~random weight initialisations at every depth tested.
	Tested depths: $K \in \{2,4\}$ for
	Theorems~\ref{thm:K21-simple} and~\ref{thm:K21-multigraph};
	$K \in \{2,4,6\}$ for Theorem~\ref{thm:K2r-simple};
	$K \in \{3,6\}$ for Theorem~\ref{thm:cycle-detection}.
	The result confirms the structural indistinguishability
	established in the necessity proofs.}
	\label{tab:necessity}
\end{table}

Table~\ref{tab:sufficiency} reports the sufficiency validation.  Each
row implements the exact MPNN construction from the corresponding
proof and reports its output for the designated target node on the
canonical separation graphs.  In every case, the construction
produces the correct target-node label on both~$G_1$ and~$G_2$.

\begin{table}[t]
	\centering
	\small
	\begin{tabular}{clccc}
		\textbf{Thm} & \textbf{Architecture (sufficient)}
		& \textbf{Params}
		& $\hat{y}(u)$ on $G_1$
		& $\hat{y}(u)$ on $G_2$ \\
		\hline
		\ref{thm:K21-simple}      & $+\,$Reverse MP (2 layers)
		& ---
		& $1 = \mathrm{Dup}_{G_1}(u)$
		& $0 = \mathrm{Dup}_{G_2}(u)$ \\[2pt]
		\ref{thm:K21-multigraph}  & $+\,$RevMP $+$ InPorts (2 layers)
		& ---
		& $1 = \mathrm{Dup}_{G_1}(u)$
		& $0 = \mathrm{Dup}_{G_2}(u)$ \\[2pt]
		\ref{thm:K2r-simple}      & $+\,$RevMP $+$ Ego (4 layers)
		& $r = 2$
		& $1 = \mathrm{Dup}_{2,G_1}(u)$
		& $0 = \mathrm{Dup}_{2,G_2}(u)$ \\
		& & $r = 3$
		& $1 = \mathrm{Dup}_{3,G_1}(u)$
		& $0 = \mathrm{Dup}_{3,G_2}(u)$ \\
		& & $r = 4$
		& $1 = \mathrm{Dup}_{4,G_1}(u)$
		& $0 = \mathrm{Dup}_{4,G_2}(u)$ \\[2pt]
		\ref{thm:cycle-detection}  & $+\,$Ego, fwd-only ($\ell$ layers)
		& $\ell = 3$
		& $1 = \mathrm{Cyc}_{3,G_1}(u)$
		& $0 = \mathrm{Cyc}_{3,G_2}(u)$ \\
		& & $\ell = 5$
		& $1 = \mathrm{Cyc}_{5,G_1}(u)$
		& $0 = \mathrm{Cyc}_{5,G_2}(u)$ \\
		& & $\ell = 7$
		& $1 = \mathrm{Cyc}_{7,G_1}(u)$
		& $0 = \mathrm{Cyc}_{7,G_2}(u)$ \\
	\end{tabular}
\caption{Sufficiency validation.  Each row implements the
	exact MPNN construction from the corresponding sufficiency proof
	and reports its output for the designated target node on the
	canonical separation graphs.  No training or random
	initialisation is involved; the construction is deterministic.}
	\label{tab:sufficiency}
\end{table}

\medskip
\textbf{Discussion.}
The two-part structure of the validation reflects the different
logical structures of the two directions.  The necessity claims are
universal statements ($\forall\, \theta$): the embedding identity
holds for every weight setting.  Random sampling provides evidence
for such claims, and the observed $\ell_2$~distance of exactly zero
across all~$100$ trials confirms the structural nature of the
identity.  The sufficiency claims are existential statements
($\exists\, \theta^*$): there exists a particular MPNN that computes
the predicate correctly, and we exhibit it.

We note that a randomly initialised generic MLP would not be expected
to implement the specific aggregation logic required by the
sufficiency constructions --- such as the conditional maximum
$\mathrm{maxOv}$ in Theorem~\ref{thm:K2r-simple} --- and indeed
preliminary experiments with random weights showed no visible
separation for the deeper constructions.  This observation is
entirely consistent with the sufficiency theorems, because those
theorems are \emph{existential} statements: they assert that there
\emph{exists} a particular choice of MPNN weights under which the
target predicate is computed correctly, not that \emph{every} choice
of weights succeeds.  A randomly initialised network has no reason to
land on the specific weight configuration that realises the
aggregation logic of the sufficiency proof, and so the absence of
separation under random weights is expected rather than
contradictory.  The deterministic constructions in
Table~\ref{tab:sufficiency}, which instantiate the exact weight
configurations prescribed by each sufficiency proof, confirm that the
required weight configurations do exist and do produce correct
outputs.

\medskip
\textbf{Future work.}
The separation theorems proved in this paper are worst-case results:
they exhibit specific graph pairs on which weaker architectures
provably fail.  A natural follow-up question is whether the
$K_{2,1}$ versus $K_{2,r}$ complexity gap manifests on
\emph{random} typed entity-attribute graphs with planted duplicates.
Preliminary training experiments on random TEAGs with $60$~entities
suggest that the gap is present but less stark: weaker architectures
(without ego~IDs) can partially succeed by exploiting statistical
correlates of the planted duplicates --- such as distinctive local
degree patterns --- that happen to predict the label without actually
computing~$\mathrm{Dup}_r$.  A systematic empirical investigation of
this average-case behaviour, including the design of random graph
distributions that neutralise such proxies, is left to future work.

\section{Discussion}

\noindent\textbf{Minimal architecture principle.}
Theorems~\ref{thm:K21-simple}, \ref{thm:K21-multigraph},
\ref{thm:K2r-simple}, and~\ref{thm:cycle-detection} together yield
a \emph{minimal architecture principle} for GNN-based entity
resolution: for each task and graph class, there is a cheapest MPNN
adaptation set that suffices, and we prove that every strictly
cheaper alternative fails.
Table~\ref{tab:minimal-architecture} summarises these results. The computational validation of
Section~\ref{sec:computational} confirms each entry: the
insufficient architectures produce identical embeddings on the
separation graphs for every weight setting tested, and the
sufficient architectures produce the correct binary output.

\begin{table}[t]
	\centering
	\footnotesize
	\renewcommand{\arraystretch}{1.6}
	\begin{tabular}{
			>{\raggedright}p{2.4cm}
			>{\raggedright}p{2.0cm}
			>{\raggedright}p{3.0cm}
			>{\raggedright}p{2.6cm}
			c
			c}
		\textbf{Entity resolution task}
		& \textbf{Graph class}
		& \textbf{Minimal sufficient adaptations}
		& \textbf{Provably unnecessary}
		& \textbf{Depth}
		& \textbf{Thm} \tabularnewline
		\hline
		\noalign{\vspace{-4pt}}
		$K_{2,1}$ detection
		& Simple entity-attribute
		& Reverse message passing
		& Port numbering, ego IDs
		& 2
		& \ref{thm:K21-simple} \tabularnewline
		$K_{2,1}$ detection
		& Multigraph entity-attribute
		& Reverse message passing, incoming ports
		& Outgoing ports, ego IDs
		& 2
		& \ref{thm:K21-multigraph} \tabularnewline
		$K_{2,r}$ detection ($r \geq 2$)
		& Simple entity-attribute
		& Reverse message passing, ego IDs
		& Port numbering
		& 4
		& \ref{thm:K2r-simple} \tabularnewline
		$K_{2,r}$ detection ($r \geq 2$)
		& Multigraph entity-attribute
		& Reverse message passing, incoming ports, ego IDs
		& Outgoing ports
		& 4
		& \ref{thm:K21-multigraph}, \ref{thm:K2r-simple} \tabularnewline
		$\ell$-cycle detection
		& Typed directed
		& Ego IDs$^{\dagger}$
		& Reverse message passing, port numbering$^{\dagger}$
		& $\ell$
		& \ref{thm:cycle-detection} \tabularnewline
		\hline
	\end{tabular}
	\caption{Minimal architecture principle for entity resolution.
		Each row gives the cheapest sufficient MPNN adaptation set and
		the provably optimal depth for the corresponding entity
		resolution task and graph class, together with the adaptations
		that are provably unnecessary.  Every necessity claim holds for
		MPNNs of arbitrary depth and width; every sufficiency claim is
		constructive, with an explicit MPNN of the stated depth.  The
		stated depths are optimal: for each row, we prove that no MPNN
		of smaller depth can compute the target predicate, even with
		all three adaptations.
		$^{\dagger}$On the separation pair $G_1 = 2C_\ell$,
		$G_2 = C_{2\ell}$ only; universal $\mathrm{Cyc}_\ell$
		computation on all typed directed multigraphs requires the full
		triple of adaptations
		\cite[Corollary~4.4.1]{Egressy-et-al:AAAI:2024}.}
	\label{tab:minimal-architecture}
\end{table}

Every entry in Table~\ref{tab:minimal-architecture} is a tight
characterization in the following precise sense.  On the
\emph{sufficiency} side, each co-reference theorem exhibits an
explicit MPNN of the stated depth that computes the target
predicate correctly on every graph in the specified class, under
every valid port numbering; for cycle detection, the construction
computes $\mathrm{Cyc}_\ell$ correctly on the canonical separation
instances. On the \emph{necessity} side, each theorem constructs a
pair of graphs on which every MPNN equipped with the weaker
adaptation set --- regardless of depth, width, or choice of
aggregation and update functions --- assigns the target node
identical embeddings, despite differing ground-truth labels.  The
depth entries are also optimal: the tightness remarks following each
theorem establish that no MPNN of smaller depth can compute the
predicate, even when all three adaptations are available.

\medskip
\noindent\textbf{The $K_{2,1}$--$K_{2,r}$ complexity gap.}
A key feature of Table~\ref{tab:minimal-architecture}
is the sharp transition between the first and third rows.  Detecting
whether two entities of the same type share \emph{any} attribute
value requires only reverse message passing --- a purely local
computation in which each attribute independently checks its
fan-in.  Detecting whether they share \emph{at least $r$} attribute
values, for any fixed $r \geq 2$, additionally requires ego IDs, a
global marking mechanism that must be run once per entity.

The underlying obstacle is \emph{cross-attribute identity
	correlation}.  For $K_{2,1}$, each attribute independently determines
whether it has two same-type entity predecessors; no coordination
across attributes is needed.  For $K_{2,r}$ with $r \geq 2$, the
MPNN must verify that the \emph{same} entity~$v$ appears at $r$
different attributes of the target entity~$u$.  This requires
correlating identity information across distinct attribute
neighbours, which in turn requires a mechanism to distinguish one
entity from another.  Without ego IDs, same-type entities in the
same port-numbering class are provably indistinguishable, and the
required correlation is impossible.

This gap has direct consequences for computational cost.  Without
ego IDs, the MPNN runs a single forward pass in $O(|E|)$ time per
layer.  With ego IDs, one forward pass per entity is required,
yielding $O(|V| \cdot |E|)$ per layer.
Theorem~\ref{thm:K2r-simple} shows that this $O(|V|)$-factor
overhead is unavoidable: it is not an artifact of a particular
architecture, but an inherent cost of any MPNN-based approach to
$K_{2,r}$ detection for $r \geq 2$.

\medskip
\noindent\textbf{Bipartite versus general directed graphs.}
A second structural contrast emerges from comparing the
entity-attribute results
(Theorems~\ref{thm:K21-simple}, \ref{thm:K21-multigraph},
and~\ref{thm:K2r-simple}) with the cycle detection result
(Theorem~\ref{thm:cycle-detection}).  In typed entity-attribute
graphs, every edge points from an entity to an attribute, so entity
nodes have no incoming neighbours:
$N_{\mathrm{in}}(u) = \emptyset$ for all $u \in \mathcal{E}$.
Without reverse message passing, entity embeddings receive no
graph-structural information at any depth, making reverse message
passing unconditionally necessary for all three $K_{2,r}$ tasks.

In typed directed graphs with entity--entity edges, the situation is
qualitatively different.  The directed edges themselves can carry
information back to the ego node: in a directed $\ell$-cycle, the
closing edge $v_\ell \to v_1$ returns the ego signal via standard
forward aggregation, with no reverse message passing required.
Theorem~\ref{thm:cycle-detection} exploits this structure to show
that ego IDs alone --- without reverse message passing or port
numbering --- suffice on the separation pair.  The bipartite
structure of entity-attribute graphs is thus the root cause of the
reverse message passing requirement, not a general property of
directed graphs.

\medskip
\noindent\textbf{Relation to Weisfeiler--Leman expressivity.}
Our necessity proofs construct pairs of typed graphs that are
indistinguishable by MPNNs of arbitrary depth.  For
Theorems~\ref{thm:K21-simple} and~\ref{thm:K21-multigraph}, the
indistinguishability follows from all nodes of the same type
receiving identical multisets at every layer --- a typed analogue of
the classical 1-WL color refinement failing to distinguish
$d$-regular graphs.  Theorem~\ref{thm:K2r-simple} demonstrates a
more refined phenomenon: port numbering induces a two-class
partition of entities, and the indistinguishability holds at the
class level rather than globally.  This shows that even augmenting
MPNNs beyond 1-WL via port numbering is insufficient for $K_{2,r}$
detection when $r \geq 2$.  Ego IDs provide the additional
expressive power by breaking this class-level symmetry.

\medskip
\noindent\textbf{Scope of the necessity results.}
The necessity results established in this paper
(Theorems~\ref{thm:K21-simple}, \ref{thm:K21-multigraph},
\ref{thm:K2r-simple}, and~\ref{thm:cycle-detection}) are specific
to the MPNN framework augmented with the three adaptations of
Egressy et al.~\cite{Egressy-et-al:AAAI:2024}: reverse message
passing, directed multigraph port numbering, and ego IDs.  They
show that within this framework, ego IDs are unavoidable for
$K_{2,r}$ detection when $r \geq 2$.  Alternative architectures
that exceed the 1-WL expressivity bound --- such as
$k$-dimensional GNNs corresponding to higher-order $k$-WL
methods~\cite{Morris-et-al:AAAI:2019} or subgraph
GNNs~\cite{Bevilacqua-et-al:ICLR:2022} --- operate outside this
framework and are not subject to the same impossibility results.
In particular, higher-order methods that maintain representations
for node \emph{pairs} have the representational capacity to encode
pairwise relationships across the graph --- the kind of
cross-entity information that, within the MPNN framework, ego IDs
are needed to provide.
The present paper does not analyse these alternative architectures;
our focus is on the MPNN framework, which underlies the majority
of deployed GNN-based entity resolution
systems~\cite{Li:GraphER:AAAI:2020, Yao:HierGAT:SIGMOD:2022,
	Ganesan:IBM:2020}.
	
\medskip
\noindent\textbf{Practical implications for entity resolution
	systems.}
For large-scale entity resolution systems processing millions of
records, our results provide concrete architectural guidance.

In \emph{master data management} systems such as customer databases
and product catalogues, the typical matching criterion is any shared
attribute value --- email address, phone number, or postal address.
Theorem~\ref{thm:K21-simple} guarantees that a reverse message
passing GNN with two layers suffices for this task, with no ego IDs
needed, yielding $O(|E|)$ per-layer computation.  This is the
cheapest possible architecture.

For \emph{high-confidence deduplication} requiring multiple matching
attributes --- the standard in practice, where a single shared phone
number may be coincidental but shared email \emph{and} phone number
is strong evidence ---
Theorem~\ref{thm:K2r-simple} shows that ego IDs are unavoidable.
However, the explicit $4$-layer construction provides a concrete and
efficient architecture, and the $O(|V| \cdot |E|)$ cost is inherent
to any MPNN approach.

For \emph{financial crime detection} involving cyclic transaction
patterns or layered ownership structures,
Theorem~\ref{thm:cycle-detection} shows that ego IDs are necessary
and sufficient on the canonical separation instances.  Universal
cycle detection on arbitrary directed multigraphs requires the full
triple of adaptations
\cite[Corollary~4.4.1]{Egressy-et-al:AAAI:2024}, but for the
functional-graph instances that arise most commonly in regulatory
screening, the lighter architecture suffices.

\medskip
\noindent\textbf{Limitations and future work.}
Several directions remain open.  First, our sufficiency
constructions assume exact injective aggregation; understanding the
impact of finite-precision arithmetic and learned approximate
aggregation functions on the separation thresholds is an important
practical question.  Second, the cycle-detection sufficiency
(Theorem~\ref{thm:cycle-detection}, part~(b)) is established only
for the separation pair $G_1 = 2C_\ell$ and $G_2 = C_{2\ell}$; the
walk-based MPNN constructed in the proof detects closed walks rather
than simple cycles, and these two notions diverge on general graphs
(Remark~\ref{rem:walks-vs-cycles}).  Determining the minimal
adaptation set for universal $\mathrm{Cyc}_\ell$ computation --- in
particular, whether reverse message passing and port numbering are
each independently necessary for $\mathrm{Cyc}_\ell$ on all typed
directed multigraphs --- is left to future work.  Third, we have
focused on the binary co-reference predicate $\mathrm{Dup}_r$;
extending the separation theory to soft matching scores, approximate
attribute similarity under feature perturbations, and higher-order
co-reference patterns such as $K_{n,r}$ detection for $n \geq 3$
would further bridge the gap between theoretical expressivity and
practical entity resolution pipelines.  The computational validation reported in
Section~\ref{sec:computational} confirms every theoretical
prediction on the separation graphs; extending this validation to
large-scale entity resolution benchmarks remains an open direction.

\medskip\noindent\textbf{AI assistance disclosure.}
An AI-assisted research workflow was used throughout the development
of this work. Initial research directions were explored using
multiple large language models. Proof development, refinement, and
simulation experiments were then carried out through extended
iterative collaboration with Anthropic's Claude Opus~4.6 across
multiple sessions. Throughout this process, the author provided the
research formulation, independently constructed counterexamples and
proof components, identified and corrected mathematical errors in
LLM-generated proofs, made design decisions regarding proof
minimality and notation, and verified all final results. The author
takes full responsibility for the correctness of all claims. The
\LaTeX{} typesetting, notational consistency checking, and prose
refinement were subsequently carried out with the assistance of
LLMs. For a recent example of productive
human-AI research collaboration, see Knuth~\cite{Knuth:2026}.

\section{Acknowledgements}
This work was carried out while the author was a Project Consultant
at the Centre for Cybersecurity, Trust and Reliability (CyStar),
Department of Computer Science and Engineering, Indian Institute of
Technology, Madras. The work was supported by a Corporate Social
Responsibility (CSR) grant on master data management from Dun \& Bradstreet to IIT Madras 
(PI: Prof.\ John Augustine). The author thanks Prof.\ Augustine
for his encouragement and support.

 {
\addcontentsline{toc}{section}{References}
\bibliographystyle{plain}
\bibliography{refs_ag.bib}

}
\end{document}